\g@addto@macro{\UrlBreaks}{\UrlOrds}
\theoremstyle{plain}
\newtheorem{theorem}{Theorem}[section]
\newtheorem{lemma}[theorem]{Lemma}
\newtheorem{corollary}[theorem]{Corollary}
\theoremstyle{definition}
\newtheorem{definition}[theorem]{Definition}
\newtheorem{assumption}[theorem]{Assumption}
\theoremstyle{remark}
\newtheorem{remark}[theorem]{Remark}
\theoremstyle{plain}
\newenvironment{reptheorem}[1]{%
  \reptheoreminner
}{\endreptheoreminner}
\newenvironment{proofsketch}{\begin{proof}[Sketch of proof]}{\end{proof}}
\newcommand{\ourtitle}{
Transformers Provably Learn Algorithmic Solutions for Graph Connectivity, But Only with the Right Data
}
\icmltitlerunning{\ourtitle}
\newif\ifpreprint
\newcommand{\PP}{\mathbb{P}}
\newcommand{\TF}{\mathsf{TF}}
\newcommand{\ReLU}{\mathsf{ReLU}}
\newcommand{\ER}{\mathsf{ER}}
\newcommand{\TChain}{\mathsf{2Chain}}
\newcommand{\TClique}{\mathsf{2Clique}}
\newcommand{\diam}{\operatorname{diam}}
\newcommand{\Sn}{\mathcal{S}_n}
\newcommand{\vsn}{\vspace{-4pt}}
\titlespacing\paragraph{0pt}{0pt}{0pt plus 1pt minus 1pt}
\titlespacing\section{0pt}{0pt}{0pt}
\titlespacing\subsection{0pt}{0pt}{0pt}
\setlist{leftmargin=*, labelindent=0pt, topsep=0pt,itemsep=0pt}
\newcommand*{\Scale}[2][4]{\scalebox{#1}{$#2$}}%
\def\eqref#1{equation~\ref{#1}}
\def\1{\bm{1}}
\DeclareMathAlphabet{\mathsfit}{\encodingdefault}{\sfdefault}{m}{sl}
\SetMathAlphabet{\mathsfit}{bold}{\encodingdefault}{\sfdefault}{bx}{n}
\newcommand{\R}{\mathbb{R}}
\begin{document}

\twocolumn[
  \icmltitle{\ourtitle}



  \newcommand{\uscsymbol}{{\ensuremath{\textcolor[HTML]{990000}{\spadesuit}}}}
  \newcommand{\dukesymbol}{{\ensuremath{\textcolor[HTML]{00539B}{\clubsuit}}}}
  \icmlsetsymbol{equal}{$\star$}
  \icmlsetsymbol{usc}{\uscsymbol}
  \icmlsetsymbol{duke}{\dukesymbol}

  \begin{icmlauthorlist}
    \icmlauthor{Qilin Ye}{equal,usc,duke}
    \icmlauthor{Deqing Fu}{equal,usc}
    \icmlauthor{Robin Jia}{usc}
    \icmlauthor{Vatsal Sharan}{usc}
  \end{icmlauthorlist}


  \icmlcorrespondingauthor{Qilin Ye}{qilin.ye@duke.edu}
  \icmlcorrespondingauthor{Deqing Fu}{deqingfu@usc.edu}

  \icmlkeywords{Machine Learning, ICML}

  \vskip 0.3in
]



\printAffiliationsAndNotice{
    \icmlEqualContribution 
    \textsuperscript{\ensuremath{\textcolor[HTML]{990000}{\spadesuit}}}Thomas Lord Department of Computer Science, University of Southern California. \textsuperscript{\ensuremath{\textcolor[HTML]{00539B}{\clubsuit}}}Department of Computer Science, Duke University}  

\begin{abstract}
  Transformers often fail to learn generalizable algorithms, instead relying on brittle heuristics. 
Using graph connectivity as a testbed, we explain this phenomenon both theoretically and empirically.
We consider a simplified Transformer architecture, the Disentangled Transformer, and prove that an $L$-layer model can compute connectivity in graphs with diameters up to $3^L$, implementing an algorithm equivalent to computing powers of the adjacency matrix.
By analyzing training dynamics, we prove that whether the model learns this strategy hinges on whether most training instances are within this model capacity.
Within-capacity graphs (diameter $\leq 3^L$) drive the learning of the algorithmic solution while beyond-capacity graphs drive the learning of a simple heuristic based on node degrees.
Finally, we empirically show that restricting training data to stay within a model's capacity makes both standard and Disentangled Transformers learn the exact algorithm.

\end{abstract}

\section{Introduction}
Large language models (LLMs) based on the Transformer architecture have demonstrated remarkable capabilities, yet their success is often shadowed by failures on tasks that demand robust, algorithmic reasoning. A growing body of evidence shows that, instead of learning generalizable algorithms, these models frequently rely on brittle shortcuts and spurious correlations that exploit statistical cues in the training data \citep{niven-kao-2019-probing,geirhos2020shortcutlear,tang2023lazylearners,yuan2024shortcutsuite,zhou2024conceptspurious,ye2024spurioussurvey}. This shortcut reliance contributes to poor out-of-distribution (OOD) generalization, brittleness under superficial input changes, and unreliability on multi-step reasoning tasks \citep{zou2023universalattack,deng2024contamination,li2024latesteval,mirzadeh2025gsmsymbolic}. 
On deterministic tasks like shift ciphers, LLMs favor high-probability outputs over correct solutions \citep{mccoy2023embersllm};
in mathematical problem solving, strong in-distribution scores often fail to transfer as problem structure or size shifts \citep{saxton2018analysing,kao2024complexmath,zhou2025mathforai}. This motivates a foundational question: 
\vsn\vsn
\begin{center}
    \textit{\textbf{When and why do Transformers learn heuristics over verifiably correct algorithms, even when the task admits an algorithmic solution?}}
\end{center}
\vsn\vsn


To study this question, we adopt \textit{graph connectivity} as a controlled testbed.
Connectivity is a fundamental problem in computational complexity \citep{wigderson1992complexity} that combines three properties suited to our analysis.
First, it admits an unambiguous algorithmic solution: reachability equals the transitive closure and is computable via matrix powering \citep{warshall1962,floyd1962}. Second, recent theory shows that Transformers with depth $L = \Theta(\log n)$ can express this algorithm through matrix powering constructions \citep{merrill2025logdepth}, so the architecture is provably expressive enough. Third, connectivity admits simple heuristics based on node degrees that correlate with the correct answer on typical random graphs but fail on adversarial instances. This makes it possible to test whether training recovers the algorithm or settles for the shortcut.

Despite these expressivity guarantees, whether gradient descent actually finds the algorithmic solution remains open. Our experiments reveal that standard Transformers achieve perfect in-distribution accuracy on random graphs yet fail catastrophically on simple OOD instances such as two disjoint chains (see \S\ref{ssec:prelim} and \Cref{fig:roberta}). To understand both the failure mode and how to correct it, we analyze the \emph{Disentangled Transformer} \citep{friedman2023ltp,nichani2024transformerslearncausalstructure}, a simplified architecture that is more amenable to theoretical analysis while preserving the essential computations. We summarize our contributions below.

\paragraph{An $L$-layer model solves connectivity up to diameter $3^L$, but no further. }
We prove tight bounds that characterize model capacity in terms of graph diameter rather than the number of nodes. Let $\diam(G)$ denote the maximum shortest-path distance between any two connected nodes. On the expressivity side, we show that an $L$-layer Disentangled Transformer can solve connectivity on all graphs with $\diam(G) \le 3^L$ by implementing a matrix powering algorithm (\Cref{thm:expressivity}). On the limitations side, we prove a matching upper bound: for any choice of nonnegative weights, there exists a graph with diameter $3^L + 1$ on which the model fails (\Cref{thm:capacity}). Together, these results establish that $3^L$ is the maximum diameter an $L$-layer model can handle perfectly. We call this quantity the model's \emph{capacity}. We empirically validate this diameter-depth scaling on both disentangled and standard Transformers.

\paragraph{Learned weights decompose into algorithmic and heuristic channels. } We prove that under natural symmetry assumptions, specifically invariance to relabeling of graph vertices, the weights of a trained Disentangled Transformer decompose into two functionally distinct components (\Cref{thm:w_decomposition}). The \emph{algorithmic channel} preserves locality and implements multi-hop composition by computing powers of the adjacency matrix. The \emph{heuristic channel} ignores graph structure and instead computes global statistics based on node degrees, predicting connectivity from whether two nodes both have high degree. We verify empirically that trained models satisfy the required symmetry, making this decomposition applicable to practical settings (\S\ref{story:algo-heur}).


\paragraph{Training dynamics select between channels based on the data distribution. }
Our analysis of the training dynamics reveals a sharp dichotomy driven by the data distribution. For graphs within the model's capacity (diameter $\leq 3^L$), population gradients suppress the heuristic channel and favor the algorithmic channel that implements matrix powering (\Cref{thm:training-suppresses-J-debug}). Conversely, when the distribution contains a significant share of beyond-capacity graphs (diameter $> 3^L$), the gradients instead strengthen the heuristic channel, promoting the simple degree-counting shortcut (\Cref{thm:dynamics}). This precise characterization hinges on our exact $3^L$ capacity bound; an asymptotic one, such as the $\mathcal O(\exp(L))$ result from \citet{merrill2025logdepth}, would not yield such clear predictive implications.

\paragraph{Transformers learns algorithmic solution \emph{with the right data}. } These theoretical insights point to a direct mitigation strategy we call the \textit{data lever}: restricting the training data exclusively to within-capacity graphs. Our experiments in \S\ref{sec:exp} confirm the effectiveness of this approach, showing that it boosts the algorithmic component and improves OOD robustness (\Cref{fig:restrict_dt}), and that these benefits transfer successfully to standard Transformer models (\Cref{fig:roberta_restrict_diam}).

Our results, validated on both Disentangled and standard Transformers  (\Cref{fig:capacity,fig:capacity_roberta_2layer,fig:roberta_restrict_diam}), provide a precise account of how Transformers learn algorithms given the right data.



\begin{figure}[t]
  \centering
  \includegraphics[width=0.95\linewidth]{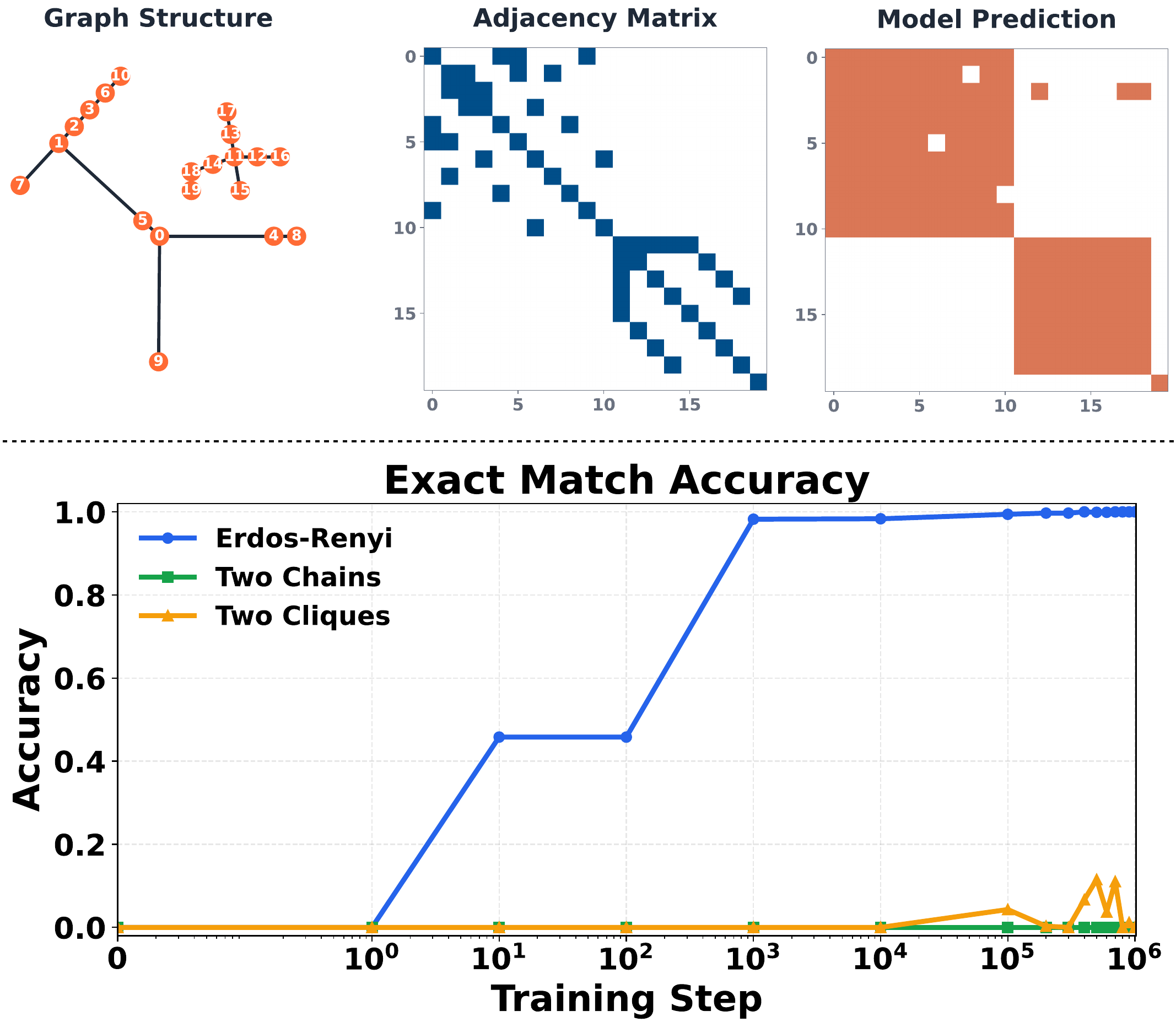}
  \caption{We train 2-layer Transformer models on Erd\H{o}s-R\'enyi graphs. (\textbf{Top}) Visualization of a graph, its input adjacency matrix, and the model's (mis-)prediction of its connectivity. (\textbf{Bottom}) Although trained models are able to predict perfectly on every edge within distribution, they failed to generalize to out-of-distribution graphs such as graphs with two isolated chains or cliques.}
  \vsn
  \label{fig:roberta}
\end{figure}

\section{Related Work}
Theoretical analyses aim to define what Transformers can and cannot compute.
Although Transformers are universal approximators for continuous sequence-to-sequence functions \citep{yun2020universal}, they also face sharp complexity-theoretic limits.
Fixed-depth attention struggles with periodic or hierarchical patterns \citep{hahn2020}, and standard Transformers are restricted to the complexity class $\mathsf{TC}^0$ \citep{merrill2023parallelism}, with hard-attention variants also confined to low-level circuit classes \citep{hao2022hardattention,barcelo2024hardattention}. 
Allowing model depth to scale logarithmically with input length enables solving graph connectivity \citep{merrill2025logdepth,sanford2024graphreasoning}.
Chain-of-thought also increases expressivity \citep{merrill2024cot,feng2023towards} but we exclude it here to focus only on what the base architecture learns through gradient descent.
Programmatic abstractions like $\mathsf{RASP}$ offer another lens, identifying which algorithms can be implemented in a length-generalizing way \citep{weiss2021rasp,zhou2023lengthgen}.
Empirically for the graph connectivity problem, \citet{fu2024isobench} shows frontier LLMs can reach almost perfect performance on small graphs and \citet{saparov2025search} shows Transformers have greater difficulty in learning the task when graph size increases.
Our matrix powering view is analogous to message passing in GNNs, where $L$ layers reach $L$-hop neighborhoods \citep{hamilton2020graph}; attention achieves $3^L$ hops by tripling the range at each layer.
We include more related work in the appendix \S\ref{app:related_work}.

\section{Problem Setup and Preliminary Study} \label{sec:problem}
\subsection{Graph Connectivity Task}
\begin{definition}[Self-loop-augmented adjacency matrix]
Let $G=(V,E)$ be a graph with $n$ vertices. We define the \textbf{self-loop-augmented adjacency matrix} $A \in \{0,1\}^{n \times n}$ as
$A_{i,j} = 1$ if $\{v_i, v_j\} \in E$ or $i=j$, and 0 otherwise.
\end{definition}
This definition is equivalent to taking the standard adjacency matrix and adding the identity matrix. A key consequence is that the $(i,j)$-th entry of the matrix power $A^k$ counts the number of walks of length $k$ from $v_i$ to $v_j$. With self-loops, these walks may stay at the same vertex from one step to the next. Henceforth, ``adjacency matrix'' will refer to this self-loop-augmented version.

\begin{definition}[Connectivity]
    For any graph $G = (V, E)$ with $n$ nodes, we define the connectivity matrix $R \in \{0,1\}^{n\times n}$ as follows: $
    R_{i,j} = 1$ if there is a path between $v_i$ and $v_j$ and $0$ otherwise.
    In particular, $
    R_{i,j} = 1 \textrm{ if and only if } [A^n]_{i,j} > 0
    $.
\end{definition}

Our learning objective is to learn models $\mathcal M: \{0,1\}^{n \times n} \rightarrow \mathbb R^{n \times n}$. For a graph $G$ with adjacency matrix $A$ and  connectivity matrix $R$, if $\mathcal M$ satisfies $[\mathcal M(A)]_{i,j} > 0 \Leftrightarrow R_{i,j} = 1$, then we say $\mathcal M$ is \emph{perfect} on graph $G$.  
We train on Erd\H{o}s-R\'enyi $\ER(n,p)$ graphs, i.e., graphs on $n$ vertices where each edge is present independently with probability $p$.

\subsection{Transformer Architectures}
We first introduce our setups on standard transformer models without causal attention masking.
\begin{definition}[Transformers for Graph Connectivity]
\label{def:Transformers}
Let $A$ be the self-loop-augmented adjacency matrix of $G$.
Fix depth $L$ and hidden width $d>n$.
Define the linear read-in and read-out maps
\[
\begin{aligned}
\mathsf{ReadIn}(X) &:= X W_{\mathrm{in}},\quad W_{\mathrm{in}}\in\mathbb R^{n\times d},\\
\mathsf{ReadOut}(H) &:= H W_{\mathrm{out}}^\top,\quad W_{\mathrm{out}}\in\mathbb R^{n\times d}.
\end{aligned}
\]
An $L$-layer single-head transformer model acts as
\[
\;\TF^L_\Theta(A) \;:=\; \mathsf{ReadOut}\!\left(\mathsf{Transformers}^L\!\big(\mathsf{ReadIn}(A)\big)\right)\;
\]
where $\mathsf{Transformers}^L$ is a standard pre-norm Transformer with self-attention and with \textit{no} causal attention masks. There is no additional positional encoding since $I_n$ is already added to the input $A$ as the absolute positional encoding. A full definition can be found in \Cref{def:roberta_full}. 
\end{definition}


\subsection{Preliminary Study} \label{ssec:prelim}

We train 2-layer Transformer models on $\ER(n=20, p=0.08)$ graphs and test them on two out-of-distribution datasets: (1) $\TChain(n = 20, k=10)$ graphs with $n$ nodes consisting of two isolated chains each with $k$ nodes, and (2) $\TClique(n =20, k=10)$ graphs with $n$ nodes consisting of two isolated $k$-Cliques. We measure the performance of model $\mathcal M$ via an exact match accuracy on our graph distribution $\mathcal G$, i.e., the fraction of graphs on which $\mathcal{M}$ is perfect, or
$
    \mathsf{ExactMatchAcc}(\mathcal M, \mathcal G) = \operatorname{\mathbb E}_{G = (V, E) \in \mathcal G} \left[\prod_{v_i, v_j \in V} \mathbf 1\left\{[\mathcal M(A_G)]_{i,j} = [R_G]_{i,j} \right\}\right].
$

\textbf{Transformers Fail to Generalize.} As shown in \Cref{fig:roberta}, the 2-layer Transformer model is able to achieve almost perfect exact match accuracy on the held-out set of the training distribution. However, it  fails to learn an algorithmic solution that transfers to other distributions. When the model is tested on the $\TChain$ and $\TClique$ distributions, its exact match accuracy falls to nearly zero, indicating over-fitted heuristics have dominated the model prediction. We repeat the experiments via extensive hyperparameter search and scaling up the number of layers, but all models fail to generalize. This motivates us to investigate why transformers prefer to learn brittle heuristics and how we can encourage them to learn algorithmic solutions instead.

\section{Theory} \label{sec:theory}

\subsection{Disentangled Transformer}
To understand the generalization failure in \S\ref{ssec:prelim} theoretically, we pivot to a  simplified \emph{\textbf{Disentangled Transformer}}; this helps us not only with expressivity/capacity analysis in \S\ref{ssec:expressivity_capacity} but also with training dynamics analysis in \S\ref{ssec:training_dynamics}. In the Disentangled Transformer, each attention block appends its output as a new coordinate slice of the residual stream rather than summing, so the representation dimension grows with depth and the read/write pathways become traceable \citep{friedman2023ltp,nichani2024transformerslearncausalstructure}. 
This model serves as a reasonable proxy for its standard counterpart: \citet{nichani2024transformerslearncausalstructure} show that any standard attention-only Transformer can be re-expressed as a disentangled model by specializing attention to implement feature concatenation, and \citet{chen2024unveiling} adopt this architecture precisely because it preserves the computations of interest while being markedly more amenable to theoretical analysis. 
We now formalize the model. 
\begin{definition}[Disentangled Transformer for Graphs]
\label{def:disentangle}
    Let $n$ be the number of nodes for any graph $G$ with adjacency matrix $A \in \mathbb R^{n \times n}$. Let $L$ be the depth of the Disentangled Transformer, and $\{d_0, d_1, \cdots, d_L\}$ be the set of dimensions of its hidden states with $d_\ell = 2^{\ell+1} n$. Let $\{W_\ell\}_{\ell=1}^L$ be the attention matrices with $W_\ell \in \mathbb R^{d_{\ell-1} \times d_{\ell-1}}$. Let $W_O \in \mathbb R^{n \times d_L} = [I_{n}, \cdots, I_n]$ be the output matrix. Let $\Theta = \{W_\ell\}_{\ell=1}^L$. An $L$-layer Disentangled Transformer $\TF^L_\Theta$ acts on any graph's self-loop augmented adjacency matrix $A$ by
    \begin{equation*}
        \begin{aligned}
          &\textrm{\textbf{Input hidden state }} h_0 := [I_n, A] \in \mathbb R^{n \times d_0} \\
          &\textrm{\textbf{$\ell$-th Hidden state }}  h_{\ell} := \left[h_{\ell - 1}, \mathrm{Attn}\left(h_{\ell-1}; W_\ell\right)\right] \in \mathbb R^{n \times d_\ell}\\
           &\textrm{\textbf{Output layer }} \TF^L_\Theta(A) := h_L W_O^\top 
        \end{aligned}
    \end{equation*}
    where $\mathrm{Attn}\left(h_{\ell-1}; W_\ell\right) := \frac{1}{n} \ReLU\left(h_{\ell-1} W_\ell h_{\ell-1}^\top\right) h_{\ell-1}$. We remark that $h_\ell \in \mathbb R^{n \times d_\ell}$ where $d_\ell = 2^{\ell+1}n$ grows exponentially with respect to $\ell$.  
\end{definition}

\subsection{Expressivity and Capacity} \label{ssec:expressivity_capacity}
If a 2-layer Transformer fails to generalize in \S\ref{ssec:prelim}, should we attribute this to the architecture's expressivity? We argue not. \Cref{thm:expressivity} shows that an $L$-layer Disentangled Transformer can implement the matrix powering algorithm and is perfect on graphs of diameter at most $3^L$. Moreover, \Cref{thm:capacity} shows this $3^L$ threshold is tight and exact\footnotemark. To make this precise, we first formalize graph distance and diameter in \Cref{def:diameter}.

\footnotetext{Because the $\TChain(n = 20, k=10)$ graphs have maximum path length $9$, they should be theoretically learnable by $2$-layer models, unlike in \S\ref{ssec:prelim}.}

\begin{definition}[Graph distances and diameter]
\label{def:diameter}
Let $G=(V,E)$ be a finite, simple, undirected graph. Following standard definitions, for $u,v\in V$, we let $d_G(u,v)$ be the \emph{shortest-path distance} between $u,v$, which is finite if they are connected and infinite otherwise. For a connected component, we define its \emph{diameter} to be the longest path length within the component.

Throughout, we define the diameter of a graph, denoted $\mathrm{diam}(G)$, to be the maximum diameter among its connected components. Note this differs from the common convention on disconnected graphs, where the latter sets $\max_{u,v}d_G(u,v) = \infty$; ours is always finite.
\end{definition}

We begin by establishing the expressive power of the Disentangled Transformer, showing that with sufficient depth, it can implement the correct matrix powering algorithm to solve connectivity.

\begin{theorem}[Expressivity of $\TF_{\Theta }^L$]
\label{thm:expressivity}
    There exists an $L$-layer Disentangled Transformer that makes perfect predictions for every graph $G$ satisfying $\mathrm{diam}(G) \leq 3^L$.
\end{theorem}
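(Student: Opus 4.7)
\begin{proofsketch}
I would prove the theorem by explicit construction: exhibit weights $\Theta=\{W_\ell\}_{\ell=1}^L$ under which the $L$-layer disentangled transformer makes perfect predictions on every $G$ with $\mathrm{diam}(G)\le 3^L$. Two preliminary reductions shape the design. First, because $A$ is self-loop-augmented, any shortest walk between connected $v_i,v_j$ can be padded with self-loops to an exact-length-$k$ walk for every $k\ge d_G(v_i,v_j)$, so $[A^k]_{ij}>0\Leftrightarrow R_{ij}=1$ whenever $k\ge \mathrm{diam}(G)$. Hence it suffices to make $[A^{3^L}]_{ij}>0\Leftrightarrow R_{ij}=1$ visible in the readout. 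Second, the read-out $W_O=[I_n,\dots,I_n]$ simply \emph{sums} the width-$n$ blocks of $h_L$; if every block of $h_L$ is a non-negative scalar multiple of some power of $A$ (or of $I_n$), then no cancellation occurs and it is enough to place a positive multiple of $A^{3^L}$ in \emph{one} block of $h_L$.

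The construction produces the powers $A^{3^0}, A^{3^1}, \dots, A^{3^L}$ by \emph{repeated cubing}, one power per layer. The inductive hypothesis is that $h_{\ell-1}$ contains a width-$n$ block equal to $c_{\ell-1} A^{3^{\ell-1}}$ with $c_{\ell-1}>0$ at some designated block position $p_\ell$; the base case $h_0=[I_n,A]$ has $A=A^{3^0}$ in the second block. For the inductive step I take $W_\ell$ to be the block-sparse matrix with $I_n$ in the $(p_\ell,p_\ell)$ block and zeros elsewhere. A direct block-matrix calculation then gives
\[
h_{\ell-1}\,W_\ell\,h_{\ell-1}^\top \;=\; c_{\ell-1}^{2}\,A^{3^{\ell-1}}\bigl(A^{3^{\ell-1}}\bigr)^{\!\top} \;=\; c_{\ell-1}^{2}\,A^{2\cdot 3^{\ell-1}},
\]
using symmetry of $A$, and the $\mathrm{ReLU}$ is the identity on this matrix since every power of the non-negative matrix $A$ has non-negative entries. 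Multiplying by the value matrix $h_{\ell-1}$ produces an attention output whose block aligned with the $A^{3^{\ell-1}}$ block of $h_{\ell-1}$ equals a positive multiple of $A^{2\cdot 3^{\ell-1}}\cdot A^{3^{\ell-1}}=A^{3^\ell}$, closing the induction at layer $\ell$; the remaining blocks of the attention output are non-negative multiples of other powers of $A$, consistent with the ``no-cancellation'' precondition for the readout reduction.

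Combining the two reductions with the induction, after $L$ layers $h_L$ contains $c_L A^{3^L}$ in one block, so the readout $\TF^L_\Theta(A)=h_L W_O^\top$ is a non-negative matrix whose support equals that of $A^{3^L}$, which in turn equals $R$ by the first reduction. I expect no genuine obstacle: the bookkeeping amounts to tracking the block positions $p_\ell$ across layers and accumulating the positive constants that come from the factors of $1/n$ and the inductive scalars, neither of which affects sign patterns. The only conceptual observation worth isolating is that the $\mathrm{ReLU}$-attention kernel functions as a matrix-multiplication gadget: ``squaring'' a power $A^k$ in the query-key product $h W h^\top$ and then multiplying by the values $h$ ``cubes'' the exponent, so each disentangled attention layer triples the maximal available power of $A$, yielding exactly $3^L$ after $L$ layers.
\end{proofsketch}
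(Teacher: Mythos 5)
Your proof is correct and follows essentially the same repeated-cubing strategy as the paper, which simply sets $W_\ell = I_{d_{\ell-1}}$ so that each layer triples the maximum exponent of $A$ appearing in the residual stream. The only cosmetic difference is your block-sparse choice of $W_\ell$ targeting a single designated block $(p_\ell,p_\ell)$, whereas the paper uses the full identity and tracks the (slightly cleaner) invariant that every $n\times n$ block of $h_\ell$ lies in $\mathrm{span}\{A^0,\dots,A^{3^\ell}\}$ with nonnegative coefficients and that some block carries $A^{3^\ell}$ with a positive coefficient.
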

\vsn
\begin{proofsketch}
    For all $\ell$, setting $W_\ell = I_{d_{\ell-1}}$ suffices. These choices of weights implement the matrix powering algorithm $\sum_{j=0}^{\diam(G)} \alpha_j A^j$ with positive coefficients $\alpha_j$.
\end{proofsketch}
\vsn

We next show a capacity bound that reveals the model's inherent limitations. We prove a tight, non-asymptotic upper bound on the graph diameter an $L$-layer model can handle, linking model depth directly to instance difficulty. To state this precisely, we define capacity as follows.

\begin{definition}[Model Capacity]
The \emph{capacity} of an $L$-layer Disentangled Transformer $\TF_{\Theta }^L$ is the largest integer $d$ such that there exist weights achieving perfect predictions on every graph $G$ with $\diam(G) \le d$.
\end{definition}

Informally speaking, the capacity of an $L$-layer Disentangled Transformer $\TF_{\Theta }^L$ with nonnegative weights is $3^L$. We formalize this claim as follows.
\begin{theorem}[Capacity of $\TF_{\Theta }^L$]
\label{thm:capacity}
    Let $\TF_{\Theta }^L$ be an $L$-layer Disentangled Transformer on $n = \Omega(3^L)$ nodes\footnotemark. Further assume that the weights $W_{\ell} \geq 0$ for each $\ell$. Then there exists a graph $G$ on which $\TF_{\Theta }^{L}(A)$ does not equal the connectivity matrix $R$. Further, $G$ has diameter $3^{L} + 1$. In other words, $\TF_{\Theta }^{L}$ cannot master graph connectivity beyond path length $3^L$, and they cannot fully solve the task on $\Omega(3^L)$-node graphs. 
\end{theorem}
\footnotetext{In particular, taking $n\geq (7 /3) \cdot 3^{L} + 2$ suffices.}
\vsn

\begin{figure}[t]
    \centering
    \includegraphics[width=0.95\linewidth]{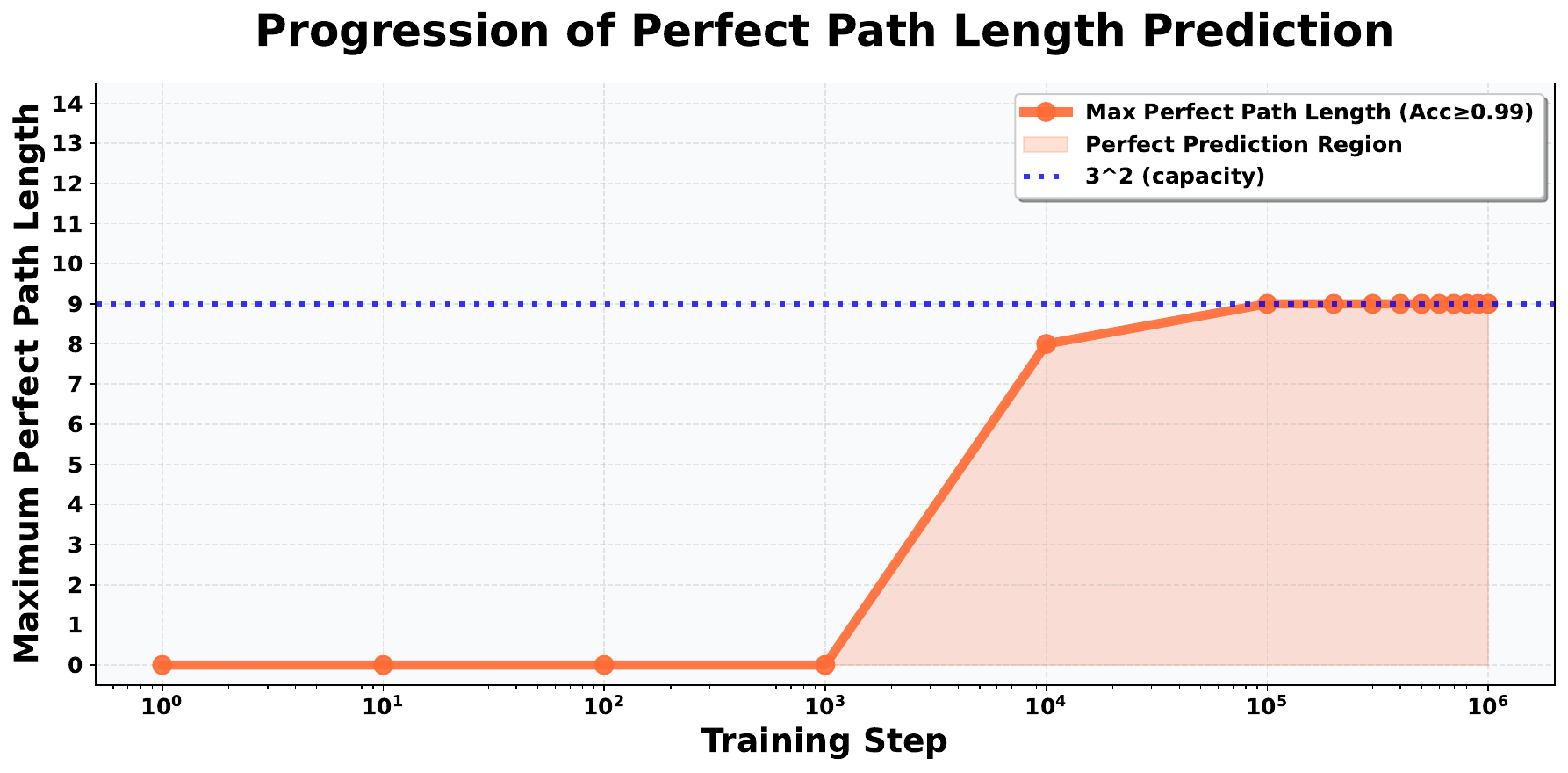}\\
    \includegraphics[width=0.95\linewidth]{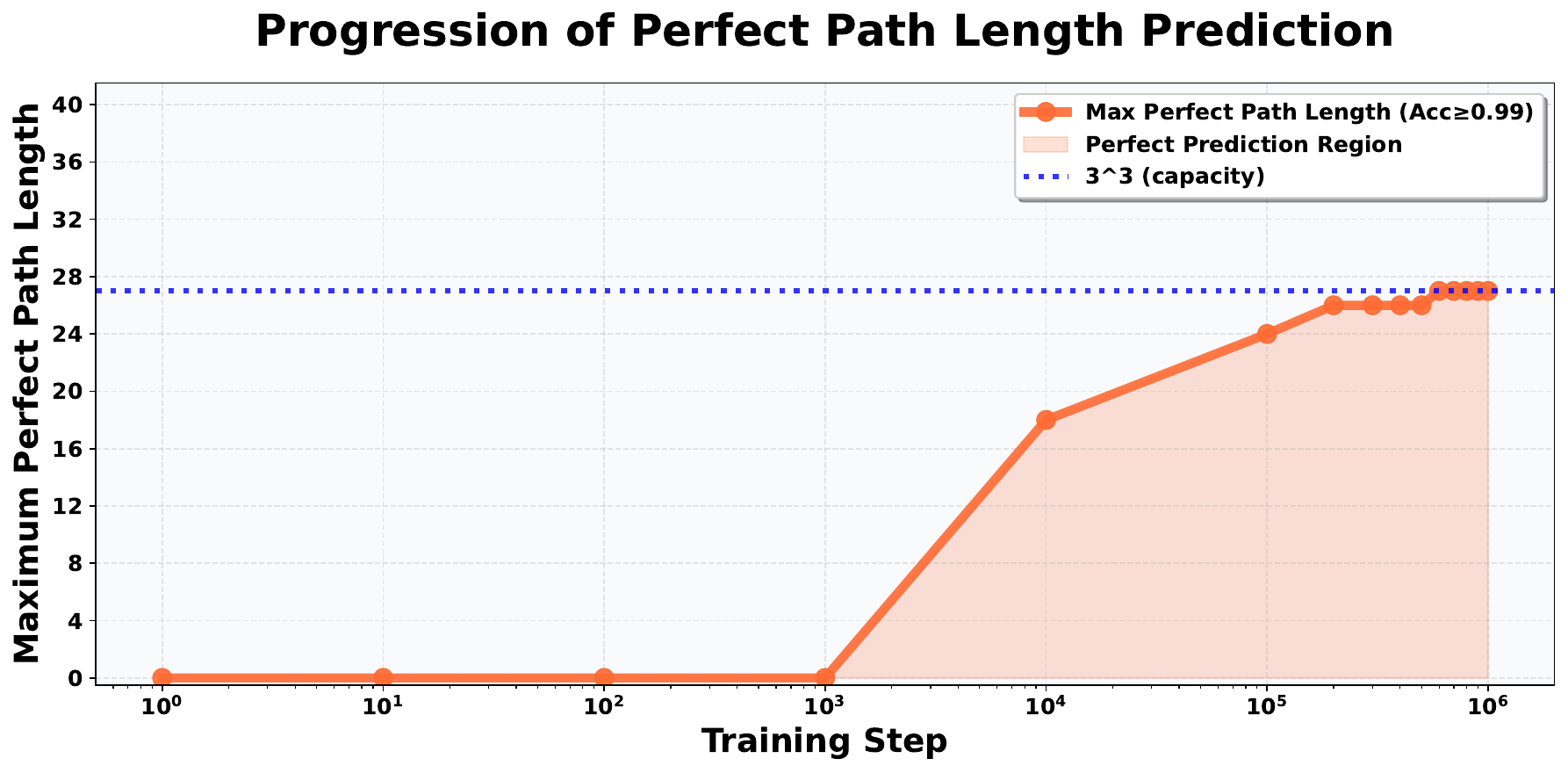}
    \vsn\caption{\textbf{Capacity of Disentangled Transformers.} We train 2-layer (\textbf{top}) and 3-layer (\textbf{bottom}) Disentangled Transformers on $\ER(n=24)$ and $\ER(n=64)$ graphs respectively. When evaluated on hold-out sets, both models can only make reliable predictions ($\geq 99\%$ accuracy) on node pairs $u, v$ if and only if $d_G(u,v) \leq 3^L$. These findings resonate with our theoretical observations in \Cref{thm:capacity}.}\vsn
    \label{fig:capacity}
\end{figure}

\begin{proofsketch}
We split by whether a false positive across different connected components occurs at some intermediate layer; the full proof can be found in \Cref{app:capacity}.

\textit{Case 1 (False positive occurs \Cref{lem:teleportation}).} Suppose for a graph $H$, an intermediate layer contains a false positive in its hidden states. That is, for two disconnected nodes $u,v$, the corresponding $(u,v)$ entry in some $\ell^{\text{th}}$ layer hidden states is positive (and this will propagate to the final output, whereas the $(u,v)$ entry in the ground-truth connectivity entry isn't). We isolate two sets of nodes that contribute to this false positive by backtracking the computation DAG. By making appropriate changes, we argue for the existence of a graph $G$ that (i) preserves this false positive on $(u,v)$ and (ii) contains a manually created path of length $3^L + 1$.


\textit{Case 2 (No false positives \Cref{lem:no-teleportation})}. Suppose now that no intermediate layer has false positives for any $n$-node graph. We show that ``information'' spreads no faster than $3^L$ so that it never predicts ``Yes'' on node pairs with distance beyond $3^L$. We first apply the no-false-positives assumption on the empty (self-loops-only) graph. Inductively, each column of each hidden states is supported on exactly one row, which ranges from $1$ to $n$. This naturally gives a ``label'' for each column in each hidden states. The crux of the proof is to inductively show that at layer $\ell$, two columns can ``share'' information, thereby creating a positive score on $(u,v)$, only if their labels, interpreted as graph nodes, are within distance $3^\ell$. Consequently, a no-false-positive model cannot recognize a connected pair with distance $3^L + 1$.

In both cases one can construct graphs with diameters $3^L + 1$ on which $\TF_\Theta^L$ is not perfect.
\end{proofsketch}

Given the tight $3^L$ capacity bounds for Transformers, it is natural and crucial to introduce a dichotomy around the $3^L$ capacity. For any connected node pair $(u,v)$, they are said to be within capacity if $d_G(u,v) \leq 3^L$ and beyond capacity otherwise. Formally, we define the dichotomy as follow:
\begin{definition}[Within-capacity and beyond-capacity pairs at depth $L$]
\label{def:capacity_dichotomy}
Fix a graph $G$ and a depth $L$. We say a pair of nodes $(i,j)$ is \textbf{within capacity} if $[A^{3^L}]_{i,j} > 0$ and \textbf{beyond capacity} otherwise. In other words, a pair $(i,j)$ is within capacity iff their shortest-path distance is $\leq 3^L$.

Overloading the notation, we say $G$ is a within-capacity graph if $\mathrm{diam}(G) \le 3^L$ and beyond-capacity otherwise. 
\end{definition}

\subsection{Training Dynamics} \label{ssec:training_dynamics}
\begin{figure}[t]
    \centering
    \includegraphics[width=0.95\linewidth]{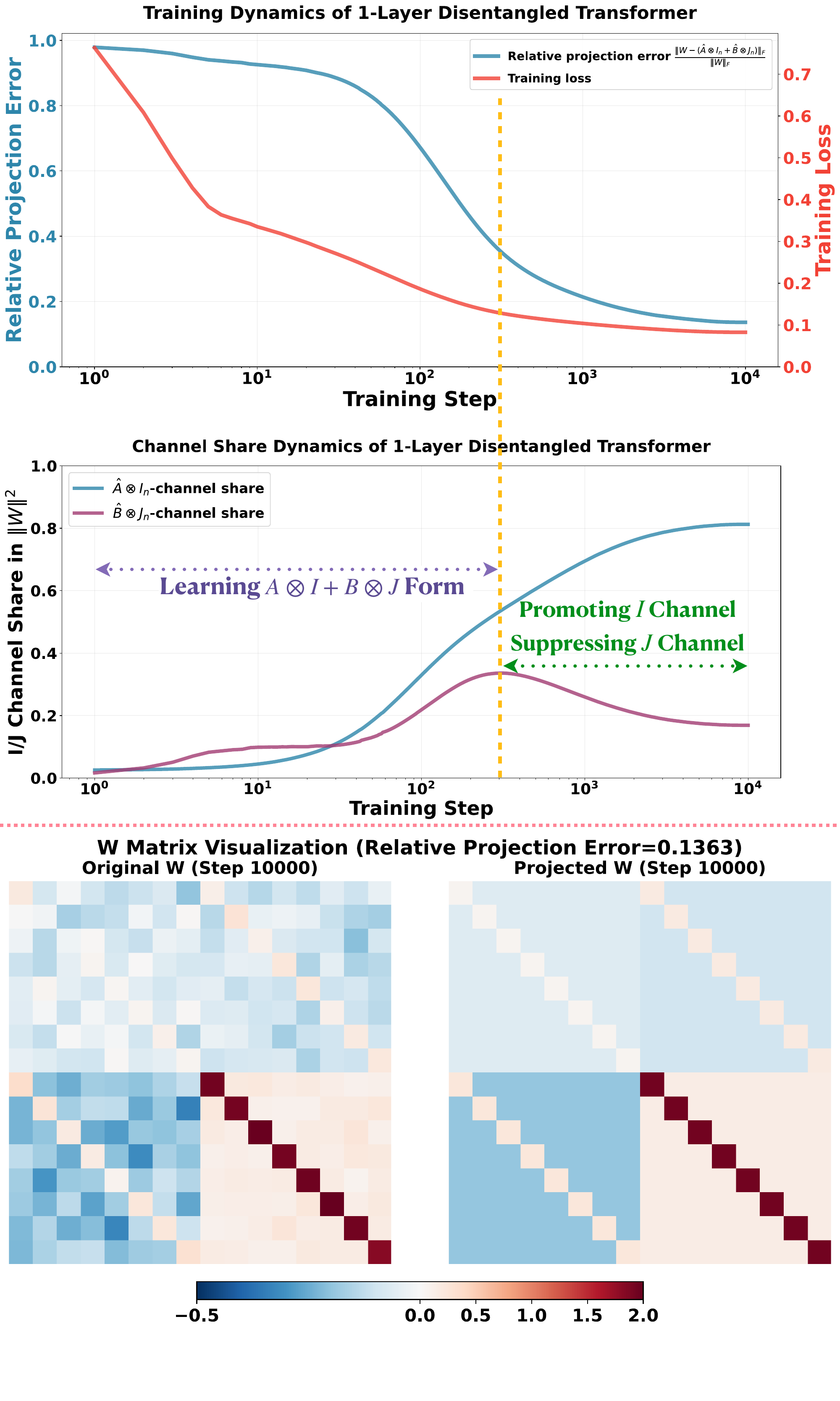}
    \vsn
    \caption{\textbf{Training Dynamics of Disentangled Transformers.} We train a 1-layer Disentangled Transformer on graphs from $\ER(n=8, p=0.2)$ distribution. Weight $W$ will approximately approach to $A \otimes I_n + B \otimes J_n$ form. (\textbf{Top}) There are two major phases during training, where during Phase 1, model focuses on learning the equivariant parameterizations so both $I$ and $J$ channel's share of energy (see \S\ref{ssec:expt_capacity}) in $W$ increases, and during Phase 2, the algorithmic $I$-channel is promoted and the heuristic $J$-channel is suppressed. (\textbf{Bottom}) Visualization of the learned weights and its projection to the closest $\hat{W} = \hat{A} \otimes I_n + \hat{B} \otimes J_n$ form. } 
    \label{fig:1-layer-dynamics}
    \vsn
\end{figure}

If a capable 2-layer Transformer is able to perfectly predict connectivity up to path length $3^2=9$, and the $\TChain(n = 20, k=10)$ dataset does not contain longer paths, why didn't the Transformer model in \S\ref{ssec:prelim} learn the algorithm? In this section, we show that this is because the training distribution contains too many graphs beyond the $3^L$ capacity, and those samples reward a learning a global heuristic over an algorithm. Equipped with \Cref{thm:w_decomposition}, we can analyze the gradient dynamics in the two-channel parameterization (a superposition of heuristic and algorithmic channels). \Cref{thm:training-suppresses-J-debug} and \Cref{thm:dynamics} give a simple criterion made possible by the exact $3^L$ characterization: if within-capacity pairs dominate, the algorithmic channel wins; if beyond-capacity pairs prevail, the heuristic wins.

\textbf{Parameterizing Model Weights.} To analyze the gradient dynamics, we first identify the relevant parameter space. Our data and targets are symmetric under node relabeling: the ground truth mapping maps $A\mapsto R$ and $PAP^\top \mapsto PRP^\top$ for any permutation $P$. We also empirically observe that models trained from scratch on these graphs rapidly converge to a layerwise equivariant state (\Cref{fig:dt_n64_model_behavior}). Based on these observations, we analyze the dynamics within the subspace of layerwise permutation-equivariant weights. The following theorem formally defines layerwise equivariance and characterizes exactly what weights look like.

\begin{theorem}[Layerwise Permutation-Equivariant Parameterization]
\label{thm:w_decomposition}
    Suppose an $L$-layer Disentangled Transformer $\TF_\Theta^L$ has non-negative weights. Let $K_{\ell-1} = 2^\ell$. Then $\TF_\Theta^L$ is layer-wise permutation equivariant, i.e., for each $\ell$, any hidden states $h \in \mathbb{R}^{n \times d_{\ell-1}}$, and any permutation $P\in S_n$,
		\[
        \Scale[0.9]{
				\mathrm{Attn} (P h(I_{K_{\ell-1}}\otimes P^\top); W_\ell) = P \; \mathrm{Attn} (h; W_\ell)\; (I_{K_{\ell-1}}\otimes P^\top),
        }
		\] 
        if and only if each layer weight $W_\ell$ decomposes as $ W_\ell = A_\ell \otimes I_n + B_\ell \otimes J_n$ for some $A_\ell, B_\ell \in \mathbb R^{2^\ell \times 2^\ell}$ for all $\ell$, where $\otimes$ denotes the Kronecker product and $J_n = \mathbf{1}\mathbf{1}^T$ the all-ones $(n\times n)$ matrix.
\end{theorem}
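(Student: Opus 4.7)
The plan is to handle the two directions separately. For the (if) direction, I would substitute $W_\ell = A_\ell \otimes I_n + B_\ell \otimes J_n$ into the conjugate $(I_{K_{\ell-1}} \otimes P^\top)\, W_\ell\, (I_{K_{\ell-1}} \otimes P)$ and use $P^\top I_n P = I_n$ together with $P^\top J_n P = J_n$ (the latter because $P\mathbf{1} = \mathbf{1}$ for every permutation $P$) to conclude that the conjugation fixes $W_\ell$. Then, writing $h' := Ph(I_{K_{\ell-1}} \otimes P^\top)$, direct computation gives $h' W_\ell (h')^\top = P\, h W_\ell h^\top\, P^\top$; since $\ReLU$ commutes with entrywise permutation conjugation, the claimed equivariance follows immediately.

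For the (only if) direction, starting from the equivariance identity I would cancel the invertible factors $P$ on the left and $(I_{K_{\ell-1}} \otimes P^\top)$ on the right of both sides. This reduces the identity to
\[
\ReLU(h \tilde W h^\top)\, h \;=\; \ReLU(h W_\ell h^\top)\, h \quad \text{for all } h \in \mathbb{R}^{n \times d_{\ell-1}},
\]
where $\tilde W := (I_{K_{\ell-1}} \otimes P^\top) W_\ell (I_{K_{\ell-1}} \otimes P)$. Because $d_{\ell-1} = 2^\ell n \geq n$, a generic $h$ has full row rank and admits a right inverse, so I can cancel $h$ on the right to obtain $\ReLU(h\tilde W h^\top) = \ReLU(h W_\ell h^\top)$ on a dense open set; continuity of both sides extends this to all $h$.

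The crux is upgrading this entrywise ReLU-equality to ordinary equality of bilinear forms. I would focus on an off-diagonal entry $(i,j)$ with $i\neq j$ (valid since $n\ge 2$; otherwise the theorem is vacuous), which yields $\ReLU(h_i \tilde W h_j^\top) = \ReLU(h_i W_\ell h_j^\top)$ with $h_i,h_j$ varying freely over $\mathbb{R}^{d_{\ell-1}}$. Fixing $h_i$, both sides are linear functionals of $h_j$, and two linear functionals with equal ReLU must coincide --- otherwise their difference would be a nonzero linear form that is nonpositive everywhere, which is impossible. Varying $h_i$ then forces $\tilde W = W_\ell$, so $W_\ell$ is invariant under $(I_{K_{\ell-1}} \otimes P^\top)(\cdot)(I_{K_{\ell-1}} \otimes P)$ for every permutation $P$.

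The last step characterizes all such $W_\ell$. Partition $W_\ell$ into $n\times n$ blocks $((W_\ell)_{ab})_{a,b=1}^{K_{\ell-1}}$; the conjugation invariance becomes $P^\top (W_\ell)_{ab} P = (W_\ell)_{ab}$ for every $P$ and every block, so each block lies in the commutant of the defining permutation representation of $S_n$ on $\mathbb{R}^n$. This commutant is the two-dimensional algebra spanned by $I_n$ and $J_n$: the representation splits as the trivial $\mathbb{R}\mathbf{1}$ plus the standard $(n-1)$-dimensional irreducible, so Schur's lemma supplies one scalar per summand. Collecting those two scalars per block into $K_{\ell-1}\times K_{\ell-1}$ matrices $A_\ell, B_\ell$ gives $W_\ell = A_\ell \otimes I_n + B_\ell \otimes J_n$ as claimed. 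The main obstacle is the ReLU-to-bilinear-equality step; once that is in hand, the remainder is direct computation plus a standard representation-theoretic fact.
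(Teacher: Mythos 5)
Your proof is correct and takes a genuinely different route from the paper. For the (only if) direction, the paper restricts to nonnegative inputs $h\ge 0$: combined with $W_\ell\ge 0$, the pre-activation $hW_\ell h^\top$ is entrywise nonnegative, so $\ReLU$ is the identity and the layer map is bilinear in $h$; they then derive the identity $h\,\Delta\,h^\top h\,\sigma(P)=0$ for all $h\ge 0$ (with $\Delta=\sigma(P)W_\ell\sigma(P)^\top-W_\ell$) and test it against carefully chosen two-row sparse $h$'s to force $\Delta=0$. You instead keep $h$ arbitrary, cancel $h$ on the right using a right inverse on the full-row-rank locus plus continuity, reduce to $\ReLU(h\tilde W h^\top)=\ReLU(hW_\ell h^\top)$, and then extract a bilinear identity from the off-diagonal entries via the fact that two linear functionals with identical $\ReLU$ must coincide. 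A notable consequence is that your argument never invokes the nonnegativity hypothesis $W_\ell\ge 0$ (only for sufficiency do you and the paper both use $PJ_nP^\top=J_n$ and $PI_nP^\top=I_n$, which again needs no sign condition), so you in fact prove the characterization without that assumption. The paper's route is tailored to the regime it cares about ($W_\ell,h\ge 0$) and avoids any density/continuity argument, while yours is shorter and more general. Both close with the same commutant argument. One small nit: your one-line justification that equal $\ReLU$'s of linear forms force equality (``their difference would be a nonzero linear form that is nonpositive everywhere'') isn't quite the right reason; the clean argument is either (a) on the open half-space $\{f>0\}$ you have $f=g$, and two linear forms agreeing on a nonempty open set are equal, while if $\{f>0\}=\varnothing$ then $f\equiv 0$ and likewise $g\equiv 0$; or (b) apply the hypothesis at $-x$ to get $\ReLU(-f)=\ReLU(-g)$ as well, so $f_+ = g_+$ and $f_- = g_-$, hence $f=g$. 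The conclusion you draw is correct.
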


\ifpreprint
\begin{proofsketch}
    Sufficiency is immediate: If $W_\ell$ admits this form, then conjugating by any node permutation $P$ leaves both factors invariant, as $PI_nP^\top = I_n$ and $P J_nP^\top = J_n$. 

    For necessity, the key observation is that with ReLU inactive due to non-negativity assumption, the layer map becomes bilinear in $h$. With algebra, the equivariance assumption can be shown to imply a conjugation-alike identity on \textit{weights}: Writing $\sigma(P) = I_{K_{\ell-1}}\otimes P$ and $\Delta = \sigma(P) W_\ell \sigma(P)^\top - W_\ell$, the following must hold:
    \[
        h \Delta h^\top h \sigma(P) = 0 \qquad \text{ for all } h \ge 0.
    \]
    We then argue that this forces $\Delta = 0$, i.e., $W_\ell$ is conjugation-invariant, by testing the above equation with special matrices $P$. Next, we inspect small blocks $W_\ell [u,v]$ of size $n\times n$ in $W_\ell \in \mathbb{R}^{(2^\ell n) \times (2^\ell n)}$, and argue that $W_\ell[u,v]$ must commute with all permutations $P$. This forces each $W_\ell[u,v]$ to lie in $\mathrm{span}(I_n, J_n)$. Aggregating all subblocks, $W_\ell$ can therefore be decomposed as $A_\ell \otimes I_n + B_\ell \otimes J_n$.
\end{proofsketch}
\fi

It immediately follows that this subspace contains a canonical algorithmic solution (e.g. the identity construction $W = I$ used in \Cref{thm:expressivity}). Furthermore, \Cref{thm:good-model-parameterization} (in Appendix) shows that for \textit{any} capacity-reaching model, the symmetric component of the weights, which drives the attention mechanism, must lie purely in the $I_n$-channel. Finally, algebraically this parameterization is closed under gradients (\Cref{thm:population-gradient}), and this allows us to decompose the learning process into the competition between two functionally distinct channels, discussed next. 

\label{story:algo-heur}
Under the conditions of \Cref{thm:w_decomposition}, each layer weight decomposes as $W_\ell = A_\ell\otimes I_n + B_\ell\otimes J_n$. This decomposition separates the model's computation into two functionally distinct channels with different roles.

\paragraph{The $I_n$-channel ($A_\ell\otimes I_n$) compute algorithms.} 
    
    The term $A_\ell \otimes I_n$ preserves locality: when applied within the attention mechanism, it only combines features between nodes that share graph neighbors. Across layers, this channel composes multi-hop information by effectively computing powers of the adjacency matrix $A$. After $L$ layers, the readout aggregates a weighted sum $\sum_{j=0}^{3^L} \alpha_j A^j$ with nonnegative coefficients, which is precisely the matrix powering algorithm that solves connectivity (\Cref{thm:expressivity}).
\paragraph{The $J_n$-channel ($B_\ell\otimes J_n$) collect heuristics.} The term $B_\ell \otimes J_n$ broadcasts information globally, ignoring graph structure. To see why, observe that $J_n$ is rank-one: for any vector $x \in \mathbb{R}^n$, we have $J_n x = (\mathbf{1}^\top x)\mathbf{1}$, which computes the sum of entries and broadcasts it uniformly to all nodes. When composed with the adjacency matrix, $A J_n = \mathbf{d}\mathbf{1}^\top$ where $\mathbf{d} = A\mathbf{1}$ is the degree vector. Thus, this channel computes global statistics, specifically products of node degrees and their higher-order generalizations. Such statistics correlate with connectivity on dense random graphs but fail on adversarial instances. For example, two disjoint cliques both have high-degree nodes, yet no edge connects them.

\textbf{Training Dynamics.} Under this algorithmic-herustic dual-channel view, we can track the evolution of the two parameters, $A_\ell$ and $B_\ell$. To rigorously analyze the gradient descents, we adopt the following assumptions.

\begin{assumption} \label{asm:training_dynamics}
We make assumptions on data distribution, model parameterization and the training loss. 
\begin{enumerate}
    \item \textbf{Data Distribution.} Let $\ER(n,p)$ be the Erd\H{o}s-R\'enyi distribution with edge-probability $p\in(0,1)$. Then $\PP_{G\sim\ER(n,p)}\{G\ \text{is disconnected}\} > 0$.
    \item \textbf{Nonnegativity \textit{\&} Equivariant Parameterization.} For each layer $\ell$, assume $W_\ell \geq 0$ can be decomposed as $ W_\ell \;=\; A_\ell\otimes I_n \;+\; B_\ell\otimes J_n$ for some $A_\ell,B_\ell\in\R^{2^\ell \times 2^\ell}.$
    \item \textbf{Surrogate Loss.} Given scores $Z := \TF_\Theta^L(\cdot) \in\R_{\ge 0}^{n\times n}$, define the link $\phi(z):=1-e^{-\alpha z}$ with $\alpha>0$;\footnotemark  the entrywise cross-entropy with respect to the connectivity matrix $R$ is $
\mathcal{L}(Z;R)\ :=\ -\sum_{i,j}\big(R_{i,j}\log \phi(Z_{i,j}) + (1-R_{i,j})\log (1-\phi(Z_{i,j}))\big).$
Its gradient with respect to $Z$ is $\frac{\partial \mathcal{L}}{\partial Z}\ =\ \alpha\,(1 - R / \phi(Z))\ \in \ \R^{n\times n}.$
\end{enumerate}
\end{assumption}
\footnotetext{It is possible that $R_{i,j} = 1$ while $Z_{i,j} = 0$, resulting in undefined gradient $\partial \mathcal{L} / \partial Z$. To circumvent this, we approximate via $\phi_\epsilon = 1 - (1-\epsilon) e^{-\alpha z}$.  All subsequent analyses hold verbatim by replacing $\phi$ with $\phi_\epsilon$.}

Under these assumptions, we can characterize the convergence and limiting behavior of gradient descent.

\begin{theorem}[Convergence to KKT Points]
\label{thm:convergence}
Let $\mathcal{R}(\Theta) := \mathbb{E}_{G \sim \ER(n,p)}[\mathcal{L}(\TF_{\Theta}^{L}(A_G); R_G)]$ denote the population risk. For $\lambda > 0$, define the regularized objective $\mathcal{R}_\lambda(\Theta) := \mathcal{R}(\Theta) + \frac{\lambda}{2}\|\Theta\|_F^2$. Let $\mathcal{C} := \{(A_\ell, B_\ell)_{\ell} : A_\ell \ge 0, B_\ell \ge 0, \forall \ell\}$ denote the constraint set, and consider the sequence $\{\Theta^{(k)}\}_{k \ge 0}$ generated by projected gradient descent on $\mathcal{R}_\lambda$:
\begin{equation}
    \Theta^{(k+1)} = \Pi_{\mathcal{C}}\left(\Theta^{(k)} - \eta \nabla \mathcal{R}_\lambda(\Theta^{(k)})\right),
\end{equation}
with step size $\eta > 0$ sufficiently small and initialization $\Theta^{(0)} \in \mathcal{C}$ of the form $W_\ell = A_\ell \otimes I + B_\ell \otimes J$. Then every limit point $\Theta^*_\lambda \in \mathcal{C}$ satisfies the KKT conditions:
\begin{equation}
\begin{aligned}
    \nabla_{B_\ell} \mathcal{R}(\Theta^*_\lambda) + \lambda B_\ell^* \ge 0, \quad B_\ell^* \ge 0, \\ \left(\nabla_{B_\ell} \mathcal{R}(\Theta^*_\lambda) + \lambda B_\ell^*\right) \odot B_\ell^* = 0,
\end{aligned} 
\end{equation}
and analogously for $A_\ell^*$. 
Moreover, the iterates converge to a KKT point at the standard $\mathcal{O}(1/\epsilon)$ rate for projected gradient descent.
\end{theorem}

The proof adapts standard convergence analysis for projected gradient descent on smooth nonconvex functions \citep{Bertsekas01031997,Beck2017}. See \Cref{app:convergence} for details. In the next result, we analyze the structure of the KKT points of the objective. 

\begin{theorem}[Learning the Algorithm, informal version of \Cref{thm:training-suppresses-J-debug} and \Cref{cor:regularized-suppression}]
\label{thm:char_kkt_informal}
    Under suitable conditions where the gradient penalty from disconnected graphs outweighs the gradient reward from connected graphs, the only KKT-compliant value for the heuristic channel is $B_\ell^* = 0$, i.e., the model converges to learning the fully algorithmic matrix-powering algorithm. 
\end{theorem}


While \Cref{thm:convergence} guarantees that training converges to limit points $\Theta_\lambda^*$ of the objective $\mathcal{R}_\lambda(\Theta)$, \Cref{thm:char_kkt_informal} further characterizes a sufficient condition for algorithmic alignment. The formal versions for the un-regularized and regularized objectives can be found at \Cref{thm:training-suppresses-J-debug} and \Cref{cor:regularized-suppression}, respectively.  With convergence guaranteed, our analysis of gradient descents reveals that the training process consists of two distinct phases:



\textbf{Phase 1: Both channels pick up easy examples}. In early updates, both channels quickly ramp up mass because there are plenty of within-component, within-capacity pairs. Concretely, the local $I$-channel composes neighborhood information, while the global $J$-channel can also boost under-predicted positives without facing much penalty (\Cref{rem:early-training}). Phase $1$ is transient and ends once those easily connected pairs are mostly saturated. In \Cref{fig:1-layer-dynamics} (top), it only occupies around $2\cdot 10^2$ steps out of $10^4$ total.

\textbf{Phase 2: Data determines which channel wins}. Once in this regime, the growth of $B_\ell$ is determined by the population-level balance (see \Cref{thm:training-suppresses-J-debug} for full definitions and details). Informally speaking, the derivative of $B_\ell$ is driven by a competition between a suppression force and a promotion force. The suppression force arises from disconnected graphs, where the heuristic generates false positives, and they push $B_\ell$ to zero. The promotion force arises from connected graphs, where the heuristic helps minimize loss by correctly identifying connected pairs, generating a reward gradient.

There are two outcomes, best understood through the distinction between within-capacity and beyond-capacity graphs (\Cref{thm:dynamics}). If batches are dominated by within-capacity graphs ($\diam(G) \leq 3^L$), the algorithmic $I$-channel is consistently rewarded. Disconnected graphs in this regime penalize the heuristic $J$-channel for predicting false connections, driving $B_\ell \to 0$ (\Cref{thm:training-suppresses-J-debug}(ii)) and leaving only the algorithmic solution. Conversely, if the distribution contains a significant share of connected beyond-capacity graphs ($\diam(G) > 3^L$), the algorithmic channel cannot bridge the distance. These samples force the model to rely on the global $J$-channel to minimize loss, promoting the degree-counting shortcut. Thus, the final 
learned solution depends directly on the proportion of beyond-capacity connected graphs in the training distribution (\Cref{rmk:population-push}), as visualized in \Cref{fig:beyond_capacity_proportion}.

\section{Experiments} \label{sec:exp}
We test the theory in two parts. First, in \S\ref{ssec:expt_capacity}  we verify the $3^L$ capacity threshold (\Cref{thm:expressivity,thm:capacity}) by measuring the maximum reliable path length one model can handle perfectly. Then, we trace training dynamics by projecting learned weights onto the algorithmic $A \otimes I$ channel and the heuristic $B \otimes J$ channel (\Cref{thm:w_decomposition}). Next, in \S\ref{ssec:learn_algo} we introduce a simple data lever that up-weights within-capacity graphs, and show that this simple method suppresses the heuristic and  promotes the algorithmic channel, as predicted by \Cref{thm:training-suppresses-J-debug,thm:dynamics}. Finally, we show this data lever prescribed by our theoretical analysis on Disentangled Transformers can transfer back to standard Transformers and boost their generalization capabilities.

\subsection{Capacity and Training Dynamics} \label{ssec:expt_capacity}
\paragraph{$L$-layer Transformers Hit Their Capacity at Exactly $3^L$.} We train Disentangled Transformers with 2 layers or 3 layers on Erd\H{o}s-R\'enyi graphs with 24 or 64 nodes respectively. As shown in \Cref{fig:capacity}, neither of the two models could make reliable predictions on node pairs $(u,v)$ with $d_G(u,v) > 3^L$ but their predictions on node pair with $d_G(u,v) \leq 3^L$ are almost perfect with an $> 99\%$ accuracy. As shown in \Cref{fig:capacity_roberta_2layer}, a 2-layer standard Transformer model also has the same empirical capacity. These results resonate with our exact capacity bound of Disentangled Transformers in \Cref{thm:capacity} and justifies our dichotomy in \Cref{def:capacity_dichotomy}. Overall, for any graph $G = (V, E)$, the decisive factor for Transformer model depth is not simply the asymptotic $\Theta(\log |V|)$ relation to the number of nodes $n = |V|$ but more importantly the non-asymptotically exact relation to $\log_3 \diam(G)$. 
We also note that we do not enforce non-negativity of model weights during training, but our theoretical analysis remains predictive of model capacity. 

\paragraph{Transformers Learn an Algorithm-Heuristic Mixture.} To empirically understand training dynamics, we first train a 1-layer Disentangled Transformer model on $\ER(n=8, p=0.2)$ graphs, without enforcing any parameterization assumptions. As shown in \Cref{fig:1-layer-dynamics}, a randomly initialized $W$ converges to a matrix approximately of the form $A \otimes I_n + B \otimes J_n$ for some matrices $A, B \in \mathbb R^{2 \times 2}$. Deeper models also converge to such solutions, as shown in \Cref{fig:dt_weights}. These results show the applicability of our decomposition \Cref{thm:w_decomposition}. Then, we project the final weight $W$ onto this algebra as $\hat{W} = \hat{A} \otimes I_n + \hat{B} \otimes J_n$ by minimizing $\|W - \hat{W}\|_F$. We observe that the share (see \S\ref{app:experiments}) of $\hat{A} \otimes I_n$ in $\|W\|_F^2$ increases as training progresses but the share of $\hat{B} \otimes J_n$ first increases and then decreases. These provide empirical evidence supporting our two-phase story in \S\ref{ssec:training_dynamics}.

\begin{figure}[t]
    \centering
    \includegraphics[width=0.95\linewidth]{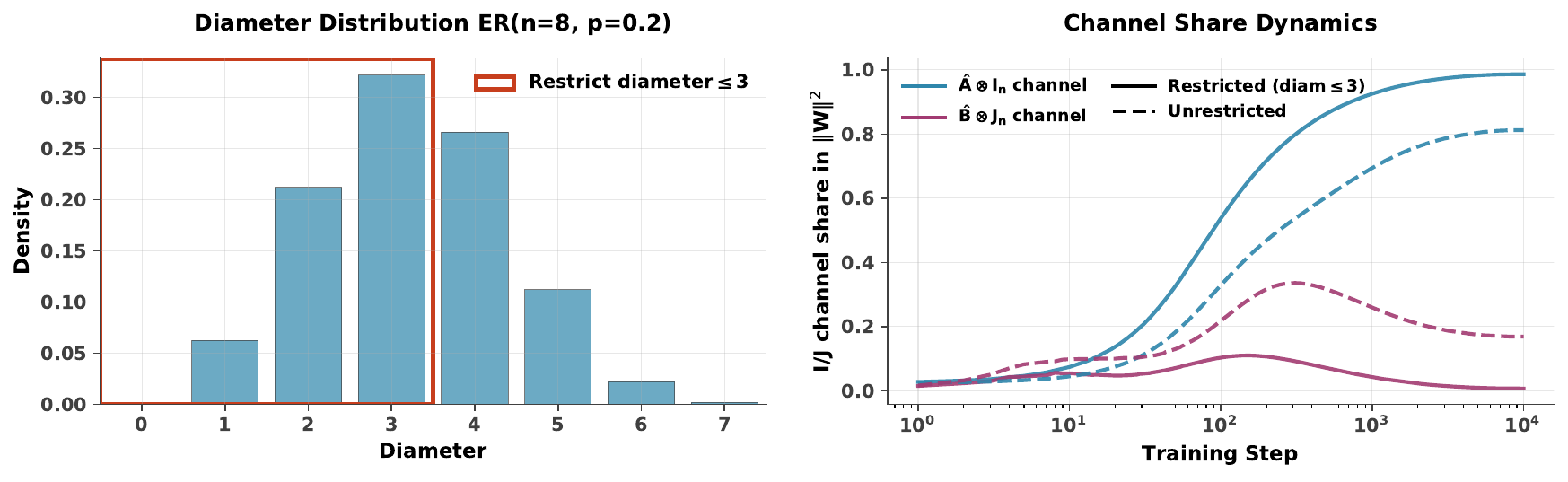}
    \caption{Following insights from \Cref{thm:training-suppresses-J-debug,thm:dynamics}, we repeat the same experiment setup as in \Cref{fig:1-layer-dynamics} but only training on within-capacity graphs (see \Cref{def:capacity_dichotomy}). As shown in the \textbf{solid} lines, restricting training samples by capacity pushes the energy share of the {\color{DodgerBlue}\textbf{algorithmic}} mechanism (the $A \otimes I_n$ channel) further to nearly 100\% in the weight $W$. It simultaneously prevents the growth of the {\color{purple}\textbf{heuristic}} portion (the $B\otimes J_n$ channel).}
    \label{fig:restrict_dt}
    \vsn
\end{figure}

\ifpreprint
\begin{figure}[t]
    \centering
    \includegraphics[width=\linewidth]{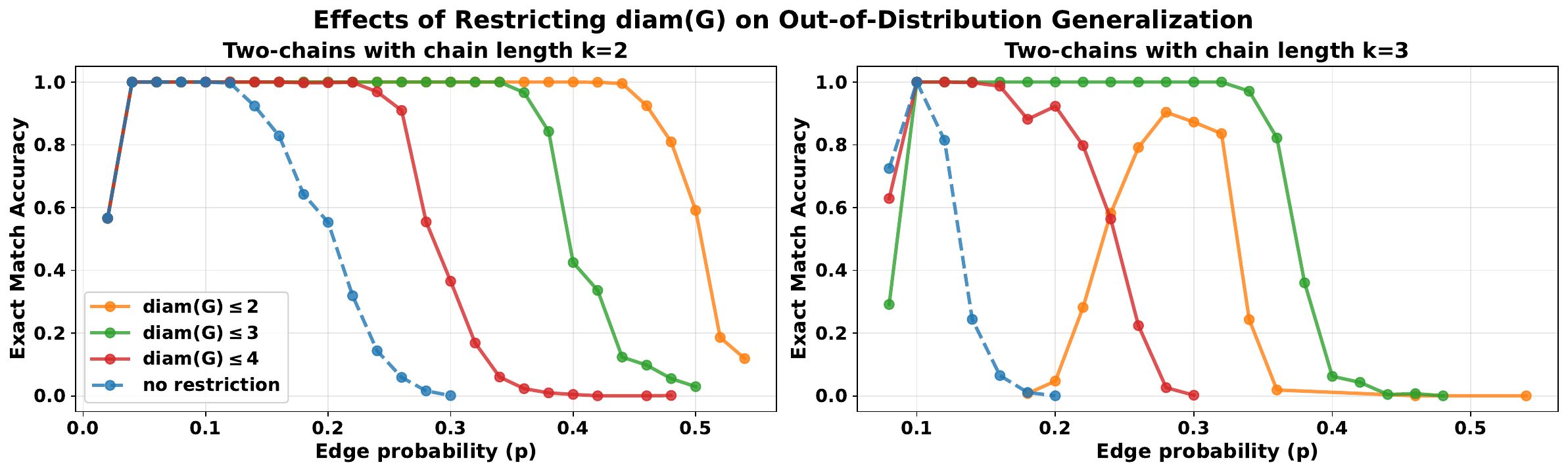}
    \caption{With 1-layer Disentangled Transformers with capacity $\mathsf{Cap} = 3$ following \Cref{thm:capacity}, we vary $d$ such that we restrict our training graphs to have $\diam(G) \leq d$. We also vary the edge probability of our training distribution $\ER(n=8, p = \cdot)$ for generality. We test on $\TChain(n=8, k=\cdot)$ graphs with $k=2$ or $3$ and show the exact match accuracy on configurations where the accuracy is non-zero for readability. We find if the training $d \leq \mathsf{Cap}$, models still learns the algorithmic solution up to problem size $d$ (see $d=2, k=2$ case on the left in {\color{orange}orange}) but \emph{fails to length generalize} (see $d=2, k=3$ in {\color{orange}orange} on the right). On the other hand, if the training $d > \mathsf{Cap}$, model struggles to learn the algorithmic solution (see $d=4$ cases in {\color{red}red} on both $k = 2$ or $3$). The best case overall is when setting $d = \mathsf{Cap} $, i.e., preventing the model from seeing beyond-capacity samples but still preserving at-capacity samples for better generalization. As shown in the {\color{ForestGreen}green} lines, with $d=3$, model achieves balanced testing accuracy on both $k=2$ and $3$.}
    \label{fig:varying_diam}
\end{figure}
\fi

\begin{figure}[t]
    \centering
    \includegraphics[width=0.95\linewidth]{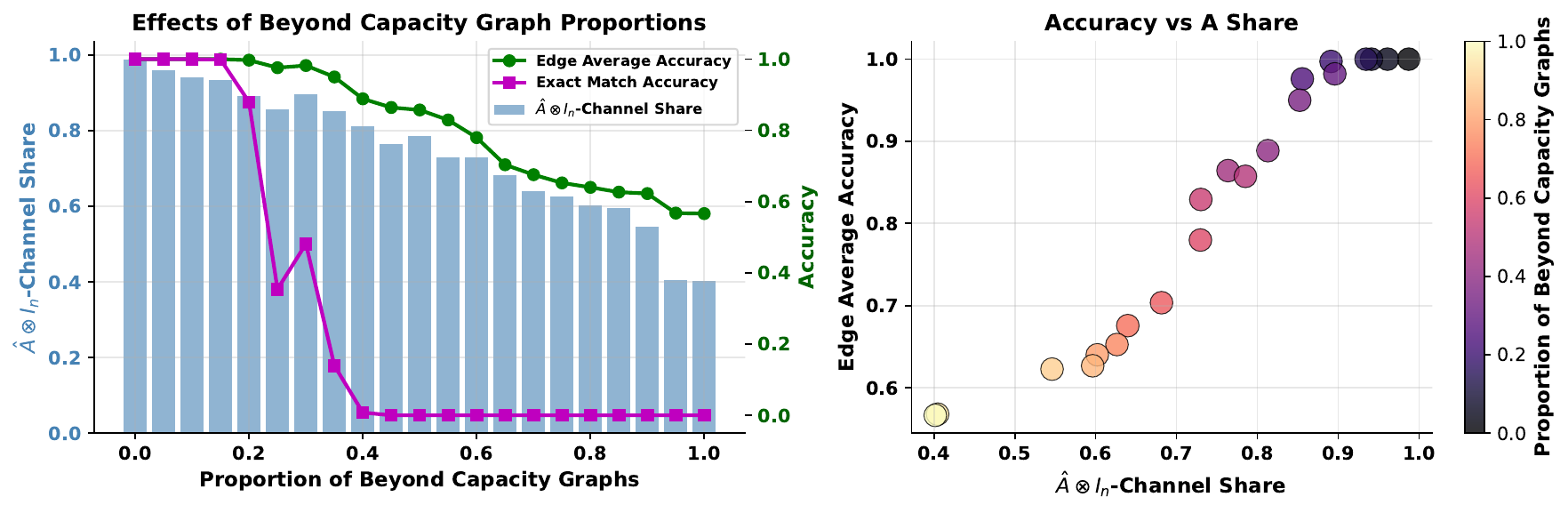}
    \caption{We vary the proportion of beyond-capacity graphs, and train the same Disentangled Transformer on stratified $\ER$ distribution and test on the same OOD $\TChain$ distribution. We find that Transformers are robust towards a small amount of noises (beyond-capacity graphs). Although the $W$ is not exactly in the $A \otimes I_n$ form, the model still performs perfectly when the energy share of $I$-channel dominates (beyond roughly 90\%). 
    }
    \label{fig:beyond_capacity_proportion}
    \vsn
\end{figure}

\subsection{Encouraging Transformers to Learn Algorithms}\label{ssec:learn_algo}
Now that we understand \textit{why} Transformers and Disentangled Transformer models learn heuristics that hurt their algorithmic computations (as shown in \Cref{fig:roberta,fig:1-layer-dynamics}), a natural question is whether we can mitigate this unwanted behavior and encourage the models to up-weight the algorithmic channel.

\paragraph{Mitigation via the Data Lever.}  We propose a data-centric method: instead of training on all graphs from the $\ER$ distribution, we up-weigh graphs whose $\diam(G)$ is within capacity following the dichotomy in \Cref{def:capacity_dichotomy}. We dissect the training distribution $\mathcal G$ into two sub-distributions: $\mathcal G_{\leq} = \{G \in \mathcal G : \diam(G) \leq 3^L \}$ only includes graphs containing no beyond capacity node pairs, and $\mathcal G_{>}$ includes the rest. In \Cref{fig:restrict_dt}, we only train the 1-layer Disentangled Transformer on $\ER_{\leq}$, and then find the algorithmic $\hat A \otimes I_n$ channel is significantly promoted so that the learned weight only contains the algorithm channel. Furthermore, we find at-capacity graphs are crucial. In the case of \Cref{fig:varying_diam,fig:at-capacity}, where no graphs are to have $\diam(G) > 2$, the model also fails to learn generalizable solutions due to Transformers' poor length generalization abilities. These results imply that simply scaling up the model depth does not naturally equip it with algorithmic capabilities . 

\paragraph{Robustness to Noise. } To test the predictiveness of our theory, we evaluate  if a small amount of beyond-capacity node pairs is enough to encourage the model learning a heuristic-dominated method. We define 
$\rho(\mathcal G) = \mathbb E_{G \in \mathcal G} {|\{(u,v) \in V, d_G(u,v) > 3^L\}|} /{n^2}$ to be the fraction of beyond-capacity node pairs in a graph distribution $\mathcal G$. In practice, $\rho$ can be controlled via stratified sampling from the mixture distribution $\mathcal G_q = q \mathcal G_{\leq} + (1-q) \mathcal G_{>}$.  In \Cref{fig:beyond_capacity_proportion}, we performed stratified sampling between $\ER_{\leq}$ and $\ER_{>}$ and found that with a small $\rho(\mathcal G)$, the model is still able to maintain high energy in the algorithm channel and make perfect predictions on out-of-distribution $\TChain$ graphs. It suggests that there exists a small  $\rho^\star > 0$ such that the model can still rely on the algorithm-channel to make predictions if the training distribution satisfies $\rho(\mathcal G) \leq \rho^\star$.

\begin{figure}[t]
  \centering
  \includegraphics[width=\linewidth]{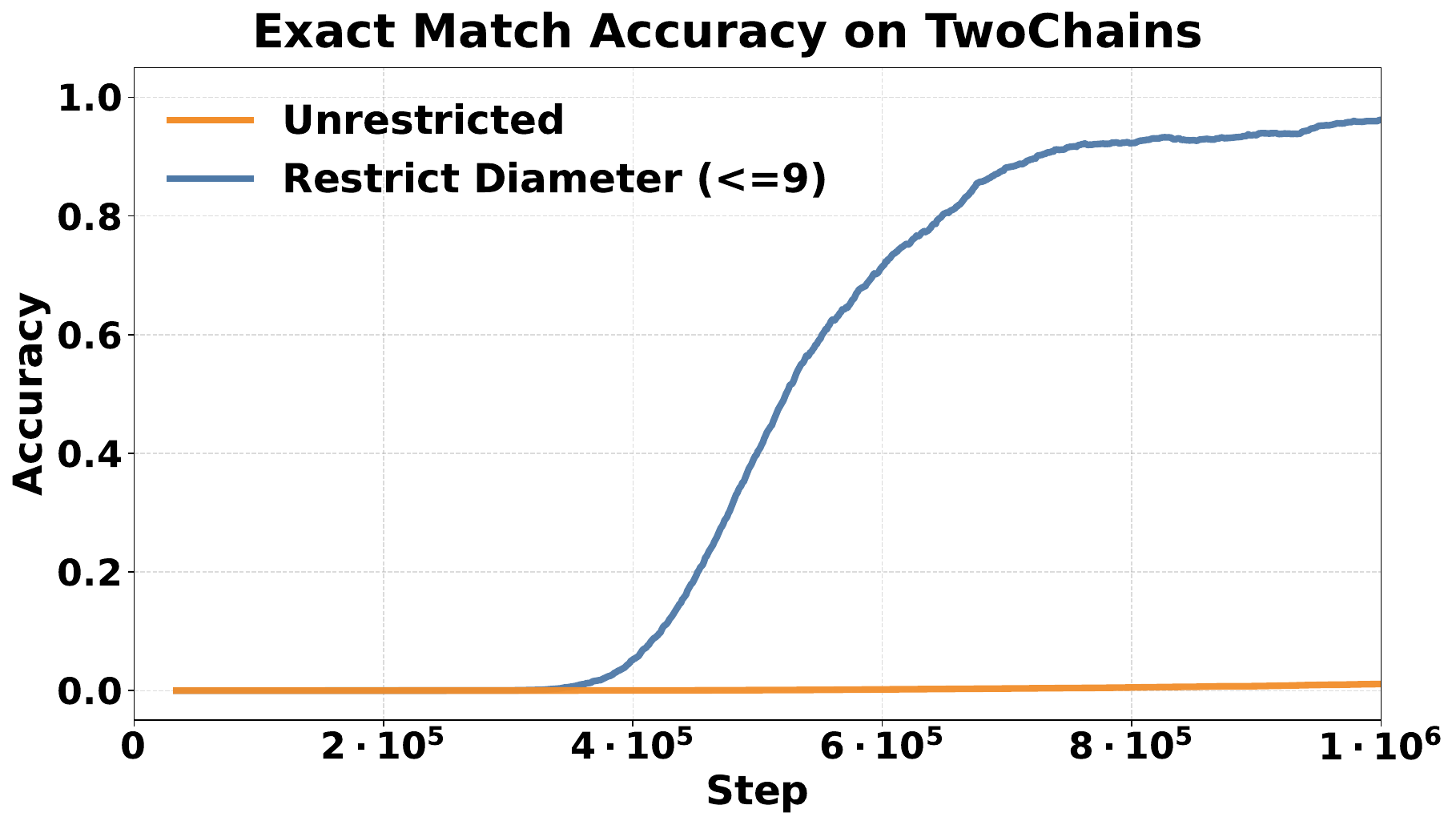}
  \caption{\textbf{Standard Transformer models learn generalizable solutions from within capacity data.} We train a 2-layer standard Transformer model on $\ER(n=20)$ graphs, with and without restricting graph diameters. When tested on OOD $\TChain$ graphs, the one trained with \textit{the right data} is able to generalize.}
  \label{fig:roberta_restrict_diam}
\end{figure}

\paragraph{Transferability to Standard Transformers models.} Our theory from \S\ref{ssec:training_dynamics} makes a prescriptive suggestion to remove beyond capacity graphs to reduce dependence on heuristics, and \Cref{fig:restrict_dt} demonstrated the effectiveness of this approach on   Disentangled Transformers. We now evaluate this on standard Transformers. 
We train the same 2-layer Transformers model as in our preliminary study in \S\ref{ssec:prelim} but this time, we train on the restricted distribution $\ER_\leq$ instead where all graphs $G \in \ER_\leq$ have $\diam(G) \leq 3^2=9$. As shown in \Cref{fig:roberta_restrict_diam}, when tested on the OOD $\TChain$ dataset with maximum chain length 10, the one trained on $\ER_\leq$ can successfully generalize but the one trained on unconstrained distribution $\ER$ cannot. It also helps model generalize to OOD $\TClique$ graphs as well (see \Cref{fig:roberta_restrict_diam_2clique}).

\section{Discussion and Conclusion}
In this paper, we separate expressivity from capacity and training dynamics for Transformers on graph connectivity. We prove that an 
$L$-layer model can implement matrix powering and is perfect on graphs with $\diam(G) \leq 3^L$, and we show this $3^L$ threshold is tight. The failures in \S\ref{ssec:prelim} are explained by a capacity mismatch: training mass beyond $3^L$ steers learning toward a global shortcut rather than the intended multi-hop algorithmic computation. Our two-channel view makes this explicit and turns generalization into a property of the data distribution: when within-capacity pairs dominate, the algorithmic channel is selected. Experiments confirm the threshold and show that a simple capacity-aware data lever that up-weights within-capacity graphs suppresses the shortcut, promotes out-of-distribution generalization, and transfers to standard Transformers. By pinpointing when a model reaches for a shortcut and showing how simple data choices can steer it towards the true algorithmic solution, we outline a path to systematically control training data and model capacity to enable Transformers to learn solutions that generalize better.


\section*{Impact Statement}
This paper presents work whose goal is to advance the field of machine learning, specifically our understanding of when neural networks learn generalizable algorithms versus brittle heuristics. Our findings have potential implications for improving the reliability and interpretability of machine learning systems deployed in safety-critical applications, where algorithmic correctness is paramount. 
By identifying data distribution properties that encourage algorithmic learning, this work may inform better training practices. We do not foresee direct negative societal consequences from this foundational research.

\ifpreprint
\section*{Acknowledgments}
The authors acknowledge the Center for Advanced Research Computing (CARC) at the University of Southern California for providing computing resources that have contributed to the research results reported within this publication. We also acknowledge the use of the USC NLP cluster provided by USC NLP Group. This work used the Delta system at the National Center for Supercomputing Applications through allocation CIS250737 from the Advanced Cyberinfrastructure Coordination Ecosystem: Services \& Support (ACCESS) program, which is supported by National Science Foundation grants \#2138259, \#2138286, \#2138307, \#2137603, and \#2138296. 
DF and RJ were also supported by gifts from the USC-Capital One Center for Responsible AI and Decision Making in Finance (CREDIF) and the USC-Amazon Center on Secure and Trusted Machine Learning. RJ was also supported by the National Science Foundation
under Grant No. IIS-2403436. 
VS was supported by
an NSF CAREER Award CCF-2239265, an Amazon Research Award, a Google Research Scholar Award and
an Okawa Foundation Research Grant. 
The work was done in part while DF and VS were visiting the Simons Institute
for the Theory of Computing. Any opinions, findings, and conclusions or recommendations expressed in
this material are those of the author(s) and do not reflect the views of the funding agencies.
\fi

\bibliography{reference}
\bibliographystyle{icml2026}

\newpage
\onecolumn
\section*{Appendix}
\appendix

\section{Additional Details on Problem Setups and Preliminary Studies}
\begin{definition}[Transformer for Graph Connectivity: full specification]
\label{def:roberta_full}
\phantom{}

\textbf{Input and output}. Given a simple graph on $n$ nodes with adjacency matrix $A \in \{0,1\}^{n\times n}$, let $\bar{A} = A + I_n$ be its self-loop augmented adjacency matrix. We treat $\bar{A}$ as input embedding: row $i$ is the token for node $i$; column $j$ indexes a feature tied to node $j$. To ease notation, we will simply write $A$ in place of $\bar{A}$ and always assume the adjacency matrix is self-loop augmented. The model outputs an $n\times n$ score matrix $\mathrm{TF}_\Theta^L(A)$, the predicted connectivity matrix.

\textbf{Dimensions and parameters}. Fix depth $L$, hidden dimension $d > n$, number of heads $H$ with $d = Hd_h$, and feed-forward width $d_{\mathrm{ff}}$. The parameters we need include:
\[
		W_{\mathrm{in}}, W_{\mathrm{out} }\in \mathbb{R}^{n\times d}, \qquad W_{\ell,h}^{Q}, W_{\ell,h}^{K}, W_{\ell,h}^{V} \in \mathbb{R}^{d \times d_h}, \qquad W_\ell^{O} \in \mathbb{R}^{H d_h \times d}
\] 
\[
		W_\ell^{(1)}\in \mathbb{R}^{d \times d_{\mathrm{ff} }}, \; b_\ell^{(1)} \in \mathbb{R}^{d_{\mathrm{ff} }}, \qquad W_\ell^{(2)} \in \mathbb{R}^{d_{\mathrm{ff} }\times d}, \; b_\ell^{(2)} \in \mathbb{R}^{d}
\] 
for $\ell = 1,\hdots, L$ and heads $h = 1,\hdots, H$. We use pre-norm residual blocks with LayerNorm ($\mathrm{LN} $) and GeLU activations. We do not use attention masks or any extra positional encoding; the identity in $\overline{A}$ already pins each token to a node.

\textbf{The forward map}. The read-in is linear: $h^{(0)} = \overline{A} W_{\mathrm{in}}\in \mathbb{R}^{n\times d}$. From there, for each $\ell = 1,\hdots, L$, let $\tilde{h} = \mathrm{LN} (h^{(\ell-1)})$ as we use pre-norm. Within each block:
\begin{align*}
		\textbf{Multi-head self-attention} && Q_{\ell,h} = \tilde{h} W_{\ell,h}^{Q}, \quad K_{\ell,h} = \tilde{h} W_{\ell,h}^{K}, \quad V_{\ell,h} = \tilde{h} W_{\ell,h}^{V} \\
		\textbf{Attention scores} && \alpha_{\ell,h} = \frac{1}{n} \mathrm{ReLU}  (1 / \sqrt{d_h} \cdot Q_{\ell,h} K_{\ell,h}^{\top}), \quad z_{\ell,h} = \alpha_{\ell,h} V_{\ell,h} \\
		\textbf{Concatenation \textit{\&} residual} && z_\ell = [z_{\ell,1}\mid \hdots \mid z_{\ell,H}] W_\ell^{O} \in \mathbb{R}^{n\times d}, \qquad u_\ell = h^{(\ell-1)} + z_\ell \\
		\textbf{Feed-forward} && \hat{u}_\ell = \mathrm{LN} (u_\ell), \qquad \mathrm{FFN}_\ell(\hat{u}_\ell) = \mathrm{GeLU} (\hat{u}_\ell W_\ell^{(1)} + b_\ell^{(1)}) W_\ell^{(2)} + b_\ell^{(2)}  \\
\end{align*}
and finally $h^{(\ell)} = u_\ell+ \mathrm{FFN}_\ell(\hat{u}_\ell) \in \mathbb{R}^{n\times d}$. The read-out is linear: $\mathrm{TF} _\Theta ^L(A) = h^{(L)} W_{\mathrm{out} }^{\top} \in \mathbb{R}^{n\times n}$.
\end{definition}

\textbf{Metrics for Permutation Equivariance.} Let $P\in S_n$ be the corresponding permutation matrix for any $\sigma \in \Sn$. For a given graph adjacency matrix $A$, we compute the model's prediction in respect to $P_\sigma$ as $\mathcal M(P_\sigma A P_\sigma^\top)$. Now we define a equivariance consistency metric, \textbf{Equivariance Consistency via Frobenius Cosine Similarity}:
\begin{equation}
    \mathsf{ConsFrob}(\mathcal M) = \mathbb E_{\sigma \in \Sn, A \in \mathcal G} \left[\frac{\langle \mathcal M(P_\sigma A P_\sigma^\top), P_\sigma \mathcal M( A ) P_\sigma^\top \rangle_F}{\|\mathcal M(P_\sigma A P_\sigma^\top)\|_F \|P_\sigma \mathcal M( A ) P_\sigma^\top\|_F}\right]
\label{eqn:metric_equivariance}
\end{equation}

When measuring intermediate model computations, this metric is modified depending on the model type. For standard Transformer models, $\mathcal M^\ell$ computes the final $\mathsf{Readout}$ to the hidden states at layer $\ell$. For Disentangled Transformers we are computing $P_\sigma \mathcal M^\ell(A) (P_\sigma \otimes I_n)^\top$. 

\clearpage
\section{Details for Capacity } \label{app:appendix_b}

\subsection{Expressivity} \label{app:expressivity}
\begin{reptheorem}{\ref{thm:expressivity}}
    There exists an $L$-layer Disentangled Transformer that makes perfect predictions for every graph $G$ satisfying $\mathrm{diam}(G) \leq 3^L$.
\end{reptheorem}
\begin{proof}
Set $W_\ell = I_{d_{\ell-1}}$ for all layers and note that all matrices are entrywise nonnegative, so ReLU and the factor $1/n$ never changes supports. With $h_0 = [I\mid A] = [A^0\mid A^1]$ and update $h_\ell = [h_{\ell-1}\mid (h_{\ell-1} h_{\ell-1}^\top)h_{\ell-1}/n]$, we can show by induction that every $n\times n$ block of $h_\ell$ lies in $\mathrm{span}\{A^0, \hdots, A^{3^\ell}\}$, and that some block contains $A^{3^\ell}$ with a positive coefficient. Indeed, the base case holds trivially; for the inductive step, if a block within $h_{\ell-1}$ contains $A^m$, then $(h_{\ell-1} h_{\ell-1}^\top) h_{\ell-1}$ contains $A^{2m} A^m = A^{3m}$. Finally, the readout simply sums over all these blocks, so $\mathrm{supp}(\mathrm{TF}_\Theta^L(A)) = \mathrm{supp}(A^{3^L})$. 

Finally, because $A$ has self-loops, supports are monotone in power and stabilizes at $t\geq \mathrm{diam}(G)$. Thus, if $\mathrm{diam}(G) \leq 3^L$ we get $\mathrm{supp}(\mathrm{TF}_\Theta^L(A)) = \mathrm{supp}(A^{\mathrm{diam}(G)})$.
\end{proof}


\subsection{Capacity} \label{app:capacity}

\begin{reptheorem}{\ref{thm:capacity}}
		Fix $L\geq 1$ and let $\TF_{\Theta }^L$ be an $L$-layer Disentangled Transformer on $n = \Omega(3^L)$ nodes. Further assume that the weights $W_{\ell} \geq 0$ for each $\ell$. Then there exists a graph $G$ with diameter $>3^{L}$ on which $\TF_{\Theta }^{L}(A)$ is not perfect. In other words, diameter $3^{L}$ upper bounds the capacity of any $L$-layer Disentangled Transformer. In particular, taking $n\geq (7/3) \cdot 3^{L} + 2$ suffices.
\end{reptheorem}

For each layer $\ell$ we define the post-ReLU score $R_\ell = \mathrm{ReLU} (h_{\ell-1} W_\ell h_{\ell-1}^\top )$. The proof of the theorem will be partitioned into two branches: whether some intermediate $R_{\ell}$ gives a false positive on some graph, or all $R_\ell$'s are free of false positives on all graphs. We say a pair of nodes $(u,v)$ from $G$ is a \textit{witness} to false positives if they belong to different connected components while $R_\ell(G)_{u,v} > 0$. Throughout this section, we set $n\geq (7 /3) \cdot 3^{L} + 2$.

\vspace{5pt}

\begin{lemma}
		Assume the setup in \Cref{thm:capacity}. Suppose there exist some $n$-node graph, a layer index $\ell^* \in \{1, \hdots, L\}$, and vertices $u,v$ belonging to different connected components of the graph such that $(R_{\ell^*})_{u,v} > 0$. Further assume that $\ell^*$ is globally minimal, in the sense that for \textit{all} $n$-node graphs and all $\ell < \ell^*$, the corresponding $R_\ell$ has no false positive entries across components. Then there exists a graph $G$ such that $\mathrm{diam} (G) = 3^{L} + 1$, $\TF_{\Theta }^L(A(G))_{u,v} > 0$, where $u,v$ lie in different connected components of $G$. 
		\label{lem:teleportation}
\end{lemma}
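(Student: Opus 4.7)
The plan is to turn the hypothesized false-positive at $(u,v)$ in some witness graph $H$ into an extractable, bounded-size combinatorial object (a \emph{certificate}) and then re-embed that object inside a new graph $G$ that also carries a long disjoint chain. Minimality of $\ell^\ast$ will give the disjointness we need, monotonicity (from $W_\ell\ge0$ and ReLU) will give persistence of the positive score, and the chain will give the required large diameter.

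First, I would formalize the certificate. For each positive entry of any intermediate quantity $h_{\ell,p,k}$ or $R_{\ell,p,q}$ in the computation on $H$, I would recursively expose a witness for that positivity by unrolling the recursions
$R_\ell = \ReLU(h_{\ell-1} W_\ell h_{\ell-1}^\top)$ and
$h_\ell = [h_{\ell-1}\mid \tfrac{1}{n} R_\ell h_{\ell-1}]$.
Each call to ``certify $R_{\ell,p,q}>0$'' selects a triple of coordinates and reduces to certifying up to three entries of $h_{\ell-1}$ at nodes $p$ and $q$, and each call to ``certify a new (attention-produced) entry of $h_\ell$ at row $p$'' selects some auxiliary node $w$ and reduces to certifying entries of $h_{\ell-1}$ at rows $p$ and $w$. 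Passthrough entries reduce trivially to the input layer, which only ever involves a node and at most one of its neighbors via $h_0=[I_n\mid A]$. Unrolling from layer $\ell^\ast$ down to layer $0$, each node-level branching step has fan-out $\le 3$, so the set of nodes touched in certifying the single entry $R_{\ell^\ast,u,v}>0$ splits into two rooted ``witness subtrees'' $T_u,T_v$ with roots $u,v$ and with $|V(T_u)|,|V(T_v)|\le 2\cdot 3^{\ell^\ast-1}\le 2\cdot 3^{L-1}$.

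Next, I would invoke minimality of $\ell^\ast$ to get $V(T_u)\cap V(T_v)=\emptyset$. The intuition is clean: every attention-affinity edge used inside $T_u$ has the form $R_{\ell,u',v'}>0$ for some $\ell<\ell^\ast$, and minimality forces $u',v'$ to lie in the same connected component of $H$. Iterating, every node in $T_u$ lies in the component of $u$, and every node in $T_v$ lies in the component of $v$; since $u$ and $v$ are in different components of $H$, the two node sets are disjoint. Then I would define $G$ on $n$ vertices by taking the induced subgraphs $H[V(T_u)]$ and $H[V(T_v)]$, and arranging the remaining at least $n-(4/3)\cdot 3^{L} \geq 3^L+2$ vertices into a simple path of length $>3^L$, disjoint from both certificates; this uses exactly the hypothesis $n\ge(7/3)\cdot 3^L+2$, and guarantees $\diam(G)>3^L$.

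Finally I would verify the persistence claim: $\TF^L_\Theta(A(G))_{u,v}>0$ while $u,v$ are still in different components of $G$. The key observation is that because $W_\ell\ge 0$ for all $\ell$ and $h_0(G)\ge 0$, every intermediate quantity is entrywise nonnegative, and removing or not-adding edges can only remove positive contributions, never create negative ones. Since $G$ contains the induced certificate subgraphs of $H$ as disjoint subgraphs, the certified chain of positivity on $T_u$ and $T_v$ runs verbatim inside $G$ (no coordinate that was used is zeroed out), so $R_{\ell^\ast,u,v}(G)>0$. Then I would push this forward: $R_{\ell^\ast,u,v}(G)>0$ together with the fact that $h_{\ell^\ast-1,v,v}\ge1$ (the $I_n$-block of $h_0$ is preserved in every later layer by passthrough) gives a positive entry in the attention output of layer $\ell^\ast$ at coordinate $(u,v)$ of the $v$-th sub-block, and passthrough plus nonnegative weights carry that positivity through all remaining layers to the readout $W_O=[I_n\mid\cdots\mid I_n]$, yielding $\TF^L_\Theta(A(G))_{u,v}>0$. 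Since $u$ and $v$ remain in different components of $G$, this is a false positive, so the model is not perfect on $G$ while $\diam(G)>3^L$.

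The main obstacle I anticipate is the bookkeeping in the first two steps: making the recursive certificate definition precise enough that (a) the fan-out-$3$ per layer count is genuinely tight at $2\cdot 3^{\ell^\ast-1}$ nodes per tree, and (b) the ``every affinity used inside $T_u$ is a true same-component link'' argument for disjointness is airtight under minimality. Everything after that is monotonicity of nonnegative arithmetic.
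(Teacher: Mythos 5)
Your proposal is correct and follows essentially the same route as the paper's proof: extract a bounded-size certificate tree for the positive score $R_{\ell^\ast,u,v}$ by unrolling the recursion (fan-out $\le 3$ per layer, hence $\le 2\cdot 3^{\ell^\ast-1}$ nodes per tree), use minimality of $\ell^\ast$ to force the two certificate vertex sets into different components and hence make them disjoint, and then splice both certificates into a fresh graph together with a long path on the $\ge 3^L+2$ remaining vertices, invoking entrywise nonnegativity to preserve the positive score. One small point in your favor: the paper ends Step 2 with ``$R_{\ell^\ast}(G)_{u,v}>0$ \dots The claim then follows,'' leaving the propagation from the intermediate score to the output $\TF^L_\Theta(A(G))_{u,v}>0$ implicit; you spell this out explicitly by noting that $h_{\ell^\ast-1}[v,v]\ge 1$ (the passthrough $I_n$ block), so the layer-$\ell^\ast$ attention output inherits a positive entry in row $u$ of the $v$-indexed column, and passthrough plus the all-$I_n$ readout carry it to the final score. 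That is a worthwhile bit of bookkeeping to include.
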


\begin{proof}
	The proof roughly partitions into two parts. In the first half, we backtrack the computation DAG, tracing the ``sources'' that contribute to the false positiveness of $(u,v)$. This gives us subgraphs which we call \textit{certificates} that, if kept untouched, suffice to guarantee a false positiveness of $(u,v)$. In the second half, we construct a graph $G$ that preserves these certificates while also containing a path of length $>3^L$ disjoint from both certificates, and we show that $\TF_\Theta^L$ preserves false positiveness of $(u,v)$ on $G$, thereby proving the claim.
	
	\vspace{5pt}

	\textsc{Step 1. Constructing the certificates}. Intuitively, since $R_{\ell^*}(H)_{u,v} > 0$, there exist column indices $p,q$ with $(W_{\ell^*})_{p,q} > 0, h_{\ell^*-1}(H)_{u,p} > 0$, and $h_{\ell^*-1}(H)_{v,q} > 0$. We will backtrack the entries that contribute to the positiveness of the hidden states entries $h_{\ell^*-1}(H)_{u,p}$ and $h_{\ell^*-1}(H)_{v,q} > 0$ in the computation DAG, iteratively visiting previous layers. Formally, we define a \textit{certificate} for an entry $h_t(H)_{i,c} > 0$ to be a small tree whose nodes are triples [of form (layer, row, column)] recording earlier entries that must be positive to guarantee that the current one is positive. The root is $(t,i,c)$ and we build it top-down by repeating one of the two rules until we hit the first layer, which we know looks like $[I_n\mid A]$. We now describe how to backtrack.  Since $h_t = [h_{t-1}\mid \mathrm{Attn} (h_{t-1}; W_t)]$, we split the recursion on layer $t$ into two cases: whether the entry lies in the first half ($h_{t-1}$) or the second half ($\mathrm{Attn} (h_{t-1}; W_t)$).
	
	\begin{itemize}
		\item (First half) If column $c$ is in the inherited block of $h_t$, add a single child $(t-1, i,c)$ to $(t,i,c)$, as the value is simply copied from the previous layer $h_{t-1}$.
		\item (Second half) If column $c$ is in the newly appended block, then by definition
		\[
		h_t(H)_{i,c} = \frac{1}{n} \sum_{k}^{} R_t(H)_{i,k} h_{t-1}(H)_{k,c'}\qquad \text{ for some }c',
		\] 
		and since this is a sum of nonnegative terms, there exists at least one $k$ with $R_t(H)_{i,k} > 0$ and $h_{t-1}(H)_{k,c'} > 0$. In turn,
		\[
		R_t(H)_{i,k} = \sum_{r,s}^{} h_{t-1}(H)_{i,r} (W_t)_{r,s} h_{t-1}(H)_{k,s} > 0,
		\] 
		which implies that there exist indices $r,s$ with $(W_t)_{r,s} > 0, h_{t-1}(H)_{i,r} > 0$, and $h_{t-1}(H)_{k,s} > 0$. Thus, for such $(t,i,c)$, we create three children:
		\[
		(t-1, k, c'), \qquad (t-1, i,r), \qquad (t-1, k,s).
		\] 
	\end{itemize}
	
	Now let $s(t)$ denote the maximal number of vertices needed to realize a single certificate for some entry $h_t(\boldsymbol\cdot )>0$ by the recursive procedure above. At $t=0$ we may assume $s(0)\leq 2$. The recursion gives $s(t) \leq 3s (t-1)$, so $s(t) \leq 2\cdot 3^{t}$. Since $u,v$ lie in different connected components of $H$, and $\ell^*$ is minimal, every index $k$ selected by a certificate at any layer $t \leq \ell^*-1$ stays within the same component as its $i$, so the two certificates induce trees $T_u, T_v$ that occupy disjoint vertex sets $S_u, S_v$, with $\lvert S_u \cup S_v\rvert \leq 4 \cdot 3^{\ell^*-1} \leq 4 \cdot 3^{L-1}$ vertices.

	\textsc{Step 2. Building a new graph.} Initialize $G$ to the edgeless graph, keeping node isolated. We then embed $T_u, T_v$ onto $G$ by adding edges according to the trees. Finally, we connect $3^L + 2$ vertices outside $S_u \cup S_v$ arbitrarily into a long chain of path length $3^L + 1$. This is always possible because there are $ n - |S_u \cup S_v|$ vertices outside the union, which is $ \geq \big((7/3) \cdot 3^L + 2\big) - 4 \cdot 3^{L-1} = 3^L + 2$, and this is why require $n \geq (7/3) \cdot 3^L + 2$ in this Lemma.

	We claim $G$ is the graph we seek. On one hand, every sum used by the certificates is a sum of nonnegative terms, and we have preserved a strictly positive summand at each step that appears in the tree. Hence $R_{\ell^*}(G)_{u,v} > 0$ with $u,v$ also disconnected in $G$. On the other hand, under the choice of $n$ specified by \Cref{thm:capacity}, there exist at least $3^L+2$ vertices outside $S_u\cup S_v$, so connecting them into a long path guarantees $\mathrm{diam}(G) = 3^L + 1$. The claim then follows.
\end{proof}

\begin{lemma}
		Assume the setup in \Cref{thm:capacity}. Further assume that for every $n$-node graph $G$ and every layer $\ell\in \{1,\hdots, L\}$, the post-ReLU scores $R_\ell(G)$ has no positive entry between distinct connected components of $G$. Then, for every graph $G$ and every $u,v\in V(G)$, if $\TF _\Theta ^L(A(G))_{u,v} > 0$, we must have $\mathrm{dist} _G(u,v) \leq 3^L$. Consequently, if $G$ contains a connected component of diameter $3^L + 1$ then $\TF _{\Theta }^L$ is not perfect on $G$.
		\label{lem:no-teleportation}
\end{lemma}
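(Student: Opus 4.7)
The plan is to proceed by strong induction on the layer index $\ell$, maintaining a node-labeling of columns of each hidden state together with a per-column radius bound. First, I would assign to each column $c$ at layer $\ell$ a node label $a(c) \in [n]$ defined recursively from the construction: columns in the initial $[I_n \mid A]$ block get labels $1,\dots,n,1,\dots,n$, and each new column appended by $\mathrm{Attn}(h_{\ell-1}; W_\ell)$ inherits the label of its ``source column'' $c' = c - d_{\ell-1}$ of $h_{\ell-1}$. Since $W_O = [I_n \mid \cdots \mid I_n]$, the readout $\TF_\Theta^L(A)_{u,v}$ is exactly the sum of $h_L(G)_{u,c}$ over columns $c$ with $a(c)=v$, so it suffices to bound distances column-by-column.

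The main inductive claim I aim to prove is: for every graph $G$, column $c$ at layer $\ell$, and row $i$, one has $h_\ell(G)_{i,c} > 0 \Rightarrow d_G(i, a(c)) \leq 3^\ell$. The base case $\ell=0$ is immediate from $h_0 = [I_n \mid A]$ with self-loops included. For the inductive step on a new column $c$, I would unfold the attention twice: positivity of $h_\ell(G)_{i,c}$ forces some $k$ with $R_\ell(G)_{i,k} > 0$ and $h_{\ell-1}(G)_{k, c'} > 0$, and IH applied to the latter yields $d_G(k, a(c)) \leq 3^{\ell-1}$. Expanding $R_\ell$ further gives $r,s$ with $(W_\ell)_{r,s} > 0$, $h_{\ell-1}(G)_{i,r}>0$, and $h_{\ell-1}(G)_{k,s}>0$, so $d_G(i,a(r)),d_G(k,a(s)) \leq 3^{\ell-1}$ by IH. Triangle-chaining then reduces the target $3^\ell$ bound to the key sub-claim $a(r) = a(s)$.

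To establish this sub-claim, I plan to apply the no-false-positive hypothesis to the edgeless graph $G_0$ (where $A = I_n$ and each node is its own component), which forces $R_\ell(G_0)$ to be diagonal at every layer. A secondary induction shows that on $G_0$ each column of $h_{\ell-1}$ is either identically zero or a positive multiple of the single unit vector $e_{a(c)}$. Thus, whenever columns $r,s$ are both active on $G_0$ and $a(r) \neq a(s)$, the $(r,s)$ term contributes a strictly positive entry to the off-diagonal position $(a(r), a(s))$ of $R_\ell(G_0)$, contradicting diagonality. This settles the generic case.

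The main obstacle I anticipate is weights $(W_\ell)_{r,s} > 0$ in which at least one column is inactive on $G_0$ yet activatable on some other graph (small examples show this is possible). To close this gap, I plan to maintain a locality side-invariant throughout the induction, namely that $h_\ell(G)_{i,c}$ depends only on the connected component of $a(c)$ in $G$, and then argue by construction: given such $(W_\ell)_{r,s}>0$ with $a(r)\neq a(s)$ and both columns non-dead, I would splice the component-witnesses activating $r$ and $s$ onto two disjoint subgraphs of a single graph $G^*$ in which $a(r)$ and $a(s)$ lie in distinct components, producing a positive entry in $R_\ell(G^*)$ across components and contradicting the hypothesis. Dead columns (identically zero on every graph) can simply be ignored as they contribute nothing. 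Once the per-column radius bound holds at layer $L$, the readout identification gives $\TF_\Theta^L(A)_{u,v} > 0 \Rightarrow d_G(u,v) \leq 3^L$; any graph with a connected component of diameter $>3^L$ then contains a truly connected pair $(u,v)$ whose model score must vanish, so $\TF_\Theta^L$ is not perfect on it.
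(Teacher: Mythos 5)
Your proposal follows essentially the same blueprint as the paper's own proof: label columns by tracing them through the computation, prove the radius bound $d_G(i,a(c))\le 3^\ell$ by induction, reduce the inductive step to the sub-claim that $(W_\ell)_{r,s}>0$ forces $a(r)=a(s)$, establish that sub-claim via diagonality of $R_\ell(G_0)$ on the edgeless graph plus a splicing construction when a column is $G_0$-dead, and finish by the triangle inequality and the readout sum. Your recursive source-column labeling is a slightly cleaner choice than the paper's ``unique row of support on $G_0$'' definition since it is total even on dead columns, and you are right that the $G_0$-inactive-but-activatable case is a real subtlety that the paper's Step~1 glosses over.

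One point deserves care, however. Your proposed ``locality side-invariant'' — that $h_\ell(G)_{i,c}$ depends only on the connected component of $a(c)$ — is not by itself strong enough for the splice in the final case. A connected component can occupy all $n$ vertices, so ``depends only on the component'' does not let you lay two components out disjointly in a single $n$-node graph $G^*$. What you actually need is the quantitative \emph{certificate-size} bound from \Cref{lem:teleportation}: every positive entry at depth $t$ admits a witnessing subgraph on at most $2\cdot 3^{t}$ vertices, so two such witnesses at depth $\ell-1\le L-1$ jointly occupy at most $4\cdot 3^{L-1}\le n$ vertices and can be placed disjointly. The paper invokes this bound explicitly at the corresponding step. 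If you replace the ``component'' side-invariant with ``small certificate'' (or reuse \Cref{lem:teleportation} directly, as the paper does), your argument closes; as written, the size issue is left unaddressed.
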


\begin{proof}
	Under the no-false-positives assumption, the idea is to show that ``information'' spreads no faster than power base $3$ so $\TF_\Theta^L$ never predicts ``Yes'' on node pairs with distance beyond $3^L$.	Concretely, columns exchange information as attention scores are calculated. We first define the ``distances'' between columns by giving each column a label $\in \{1,\hdots, n\}$, and then show that by layer $\ell$, two columns can ``share'' information if and only if their labels, \textit{interpreted as graph nodes,} are within distance $3^\ell$. 
	
	\textsc{Step 1. Giving each column a label.} We first consider trivial graph $G_0$ with $n$ isolated nodes: immediately $h_0(G_0) = [I_n \mid I_n]$ and, by hypothesis, every $R_\ell(G_0)$ have no off-diagonal positives. Inductively this shows that every column of $h_\ell(G_0)$ has support in exactly one row. We define the label of this column to be the row index $\in \{1,\hdots, n\}$ where the unique support is. With labels defined, the remaining proof is based on establishing the following locality claim.
	
	\vspace{5pt} 
	
	\textsc{Claim}. Fix graph $G$, layer $\ell$, and $i,j\in \{1,\hdots, n\}$. If column $c$ of $h_\ell(G)$ has label $j$ and if $h_\ell(G)_{i,c} > 0$, then $\mathrm{dist} _G(i,j) \leq 3^{\ell}$. In other words, \textit{every column spreads at most $3^{\ell}$ hops away from its label by depth $\ell$.}

	\vspace{5pt} 
	
	\textsc{Step 2. Establishing the claim.} We prove this claim via induction. The base case $\ell=0$ directly follows from the fact that $h_0(G) = [I_n\mid A(G)]$. For the inductive step, we assume that the claim holds at depth $\ell-1$ with radius $3^{\ell-1}$. As in \Cref{lem:teleportation}, there are two column types in $h_\ell$: inherited or newly appended columns. The former case is easy; if $c$ is inherited from $h_{\ell-1}$, then $h_{\ell}(G)_{i,c} = h_{\ell-1}(G)_{i,c}$, so the bound follows from the inductive hypothesis. We now assume $c$ is newly appended.

	Suppose $(R_\ell(G) h_{\ell-1}(G))_{i,c} > 0$ for a column $c$ with label $j$. Then there exists a row $k$ with $R_\ell(G)_{i,k} > 0$ and $h_{\ell-1}(G)_{k,c} > 0$. By the IH, $\mathrm{dist} _G(k,j) \leq 3^{\ell-1}$. Then we expand $R_\ell(G)_{i,k} > 0$ to obtain column witnesses $p,q$, with $h_{\ell-1}(G)_{i,p} > 0$, $h_{\ell-1}(G)_{k,q} > 0$, and $(W_\ell)_{p,q} > 0$, as in \Cref{lem:teleportation}. Let $a,b$ be the labels of $p, q$, respectively. By IH again, $\mathrm{dist}_G(i,a) \leq 3^{\ell-1}$ and $\mathrm{dist} _G(k,b) \leq 3^{\ell-1}$. We now split the analysis into two cases. 
	
	\begin{itemize}
		\item If $a \neq b$, we derive a contradiction to the no-false-positives assumption by reusing the certificate procedure from \Cref{lem:teleportation}. Because $W_\ell\geq 0$ entrywise, every positive entry in $h_t(\boldsymbol\cdot )$ admits a certificate supported on at most $s(t) \leq 2\cdot 3^{t}$ vertices. In particular, there exist certificates witnessing $h_{\ell-1}(G)_{i,p} > 0$ (labeled $a$) and $h_{\ell-1}(G)_{k,q}> 0$ (labeled $b$). Let $S_a, S_b$ be the corresponding certificate vertex sets. Form a new graph $G'$ on the same $n$ vertices whose connected components are two disjoint induced copies $S_a', S_b'$ of the subgraphs on $S_a, S_b$ (leaving all other vertices outside $S_a \cup S_b$ isolated). Note this is feasible because $\lvert S_a\rvert + \lvert S_b\rvert \leq 4 \cdot 3^{L-1} \leq n$ assumed by \Cref{thm:capacity}. By construction, there exist $i'\in S_a'$ and $k'\in S_b'$ with $h_{\ell-1}(G')_{i', p}>0$ and $h_{\ell-1}(G')_{k', q} > 0$. Thus,
		\[
		(h_{\ell-1}(G')W_\ell h_{\ell-1}(G')^\top )_{i',k'} \geq h_{\ell-1}(G')_{i',p} (W_\ell)_{p,q} h_{\ell-1}(G')_{k', q} > 0,
		\]
		meaning $R_\ell(G')_{i', k'} > 0$. But $i', k'$ belong to different connected components in $G'$, contradiction! Therefore,
		\item $a=b$. Triangle inequality gives $\mathrm{dist} _G(i,j) \leq \mathrm{dist} _G(i,a) + \mathrm{dist} _G(a,k) + \mathrm{dist} _G(k,j) \leq 3 \cdot  3^{\ell-1}  = 3^{\ell}$, completing the induction.  \hfill \textsc{End proof of claim / Step 2.}
	\end{itemize}
	
	The model's output $\TF _{\Theta }^L(A(G)) = h_L(G) W_O^\top $ is an entrywise nonnegative sum over the $n\times n$ blocks of $h_L(G)$. Since each block respects the $3^{L}$ locality bound, we have $\TF _{\Theta }^L(A(G))_{u,v} = 0$ whenever $\mathrm{dist} _G(u,v) \geq  3^{L} + 1$. Hence, on any graph whose largest component has diameter $3^{L} + 1$, the model will inevitably miss a pair $(u,v)$ of nodes realizing this diameter.
\end{proof}

\begin{proof}[Proof of Theorem \ref{thm:capacity}]
Combine \Cref{lem:teleportation,lem:no-teleportation}.
\end{proof}


\clearpage

\section{Details for Training Dynamics} \label{app:appendix_c}

\subsection{Characterizing Block Weights \texorpdfstring{$W_\ell$}{Wl}}
\label{app:characterization}

As discussed in \Cref{ssec:training_dynamics}, due to the symmetric nature of the graph connectivity problem, it is natural to demand that a ``good'' model should map not only adjacency matrices $A$ to connectivity matrices $R$, but also $PAP^\top$ to $PRP^\top$ for any permutation $P$. We further generalize equivariance. Observe that given a permutation matrix $P$ and any hidden states $h\in \mathbb{R}^{n\times (kn)}$ consisting of $k$ consecutive $n\times n$, the mapping $h\mapsto P h (I_K\otimes P^\top)$ relabels both rows and columns within each $n\times n$ block in a way that is consistent with the effects of $P$. Hence, the notion of equivariance can be generalized to any (nonnegative) hidden states, beyond just the ones induced by adjacency matrices.

Similarly, we are now also able to define an $L$-layer Disentangled Transformer on arbitrary inputs of appropriate dimensions. For any nonnegative initial state $h_0 \in \mathbb{R}^{n \times 2n}$, recursively define $h_\ell = [h_{\ell-1}\mid \mathrm{Attn} (h_{\ell-1}; W_\ell)]$ for $\ell = 1,\hdots, L$. Let $\mathrm{Sum}(h)$ denote the sum of the consecutive left-aligned $n\times n$ blocks of $h$. Then the generalized output is $\TF_\Theta^L(h_0) = \mathrm{Sum}(h_L)$. We define two equivariance-related conditions. The first one is a direct generalization of $P \TF_\Theta^L(A) P^\top = \TF_\Theta^L(PAP^\top)$; the second one, as discussed in \Cref{ssec:training_dynamics}, makes theoretical analysis significantly more tractable while also being supported by empirical evidence.

\begin{definition}[Output Equivariance and Layerwise Attention Equivariance]
    \label{def:equivariance}
    Let $\TF_\Theta^L$ be an $L$-layer Disentangled Transformer with nonnegative weights. Let $K_\ell = 2^{\ell+1}$.
    \begin{itemize}
        \item [(i)] For $h_0\in \mathbb{R}^{n\times 2n}_{\geq 0}$ and for any $P$, define $h_0^P = P h_0 (I_{K_0}\otimes P^\top)$. We say $\TF_\Theta^L$ is \textbf{output-level value equivariant} iff $P\TF_\Theta^L(h_0)P^\top = \TF_\Theta^L(h_0^P)$ holds for all $P$ and all $h_0\in \mathbb{R}^{n\times 2n}_{\geq 0}$.
        \item [(ii)] We say $\TF_\Theta^L$ is \textbf{layer-wise attention equivariant} iff for each $\ell$ and any hidden states $h \in \mathbb{R}^ {n \times d_{\ell-1}}$ (i.e., any hidden states of dimension feasible for layer $\ell$),
		\[
				\mathrm{Attn} (P h(I_{K_{\ell-1}}\otimes P^\top); W_\ell) = P \; \mathrm{Attn} (h; W_\ell)\; (I_{K_{\ell-1}}\otimes P^\top),
		\] 
     \end{itemize}
\end{definition}






\begin{theorem}[Parameterization of ``Good'' Models]		
    \label{thm:good-model-parameterization}
	Let $n = \Omega(3^{L})$ as in \Cref{thm:capacity}. Fix an $L$-layer Disentangled Transformer $\TF _{\Theta }^{L}$ with nonnegative weights. Suppose that
	\begin{itemize}
		\item [(i)] $\TF _\Theta ^{L}$ is output-level value-equivariant, and
		\item [(ii)] $\TF _{\Theta }^{L}$ reaches its capacity bound of $3^{L}$, i.e., for every graph, we have $\mathrm{supp} (\TF _\Theta ^{L}(A)) = \mathrm{supp} (A^{3^{L}})$.
	\end{itemize}
    
    Then, either $\TF_{\Theta}^L$ or a functionally equivalent version of it satisfies the following: for each layer $\ell$, there exists a nonnegative matrix $\Lambda_\ell\in \mathbb{R}^{K_{\ell-1} \times K_{\ell-1}}$ such that $W_\ell + W_\ell^\top = \Lambda_\ell \otimes I_n$. In other words, $W_\ell$ can be decomposed into this form up to an antisymmetric part.
	
	\vspace{5pt}
	
	(Note that the theorem is a direct generalization of equivariance under all graph permutations; replacing $h_0$ by $[I_n\mid A]$ gives the desired result for a fixed graph with adjacency matrix $A$.)
\end{theorem}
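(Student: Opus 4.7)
The plan is to combine three ingredients: condition (ii) forces us into the no-false-positives regime of \Cref{lem:no-teleportation}; that regime on the edgeless graph constrains each $n\times n$ block of $W_\ell$ to be diagonal; and output-level equivariance then forces each such diagonal block to be a scalar multiple of $I_n$. The ``functionally equivalent version'' clause absorbs any antisymmetric remainder. First, I would transfer condition (ii) into a no-false-positives statement on intermediate layers: if some $R_{\ell^*}$ produced a cross-component positive on some graph, then \Cref{lem:teleportation} would build a graph $G$ with $\diam(G) > 3^L$ on which $\TF_\Theta^L(A(G))$ has a cross-component positive, contradicting $\mathrm{supp}(\TF_\Theta^L(A)) = \mathrm{supp}(A^{3^L})$. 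Hence every $R_\ell$ is cross-component-free, placing us in the regime of \Cref{lem:no-teleportation}.

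Next, I would analyze the edgeless graph $G_0$ (with adjacency $I_n$) inductively. From $h_0(G_0) = [I_n \mid I_n]$ and the no-teleportation property, each $h_\ell(G_0)$ decomposes as $[M_{\ell,1}\mid\cdots\mid M_{\ell,K_\ell}]$ with every $M_{\ell,a}$ a nonnegative diagonal matrix, and the first two blocks are always $I_n$ (inherited from $h_0$). The pre-ReLU score at layer $\ell$ on $G_0$ then expands to
\[
h_{\ell-1}(G_0)\, W_\ell\, h_{\ell-1}(G_0)^\top \;=\; \sum_{a,b=1}^{K_{\ell-1}} M_{\ell-1,a}\, W_\ell^{ab}\, M_{\ell-1,b},
\]
which must itself be diagonal. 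Since all summands are entrywise nonnegative, off-diagonal cancellation is impossible, so every off-diagonal entry $(W_\ell^{ab})_{ij}$ multiplied by a strictly positive pair $((M_{\ell-1,a})_{ii},(M_{\ell-1,b})_{jj})$ must vanish. Testing against the inherited identity blocks (which have strictly positive diagonals at every layer) kills every off-diagonal entry of $W_\ell^{ab}$ whose row-column pair is visible through at least one identity-indexed factor, forcing each block to be diagonal; remaining freedom tied to zero positions in the non-identity $M$'s is removed by repeating the argument on small perturbations of $G_0$ via condition (i) extended to all nonnegative $h_0$.

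I would then invoke output-level equivariance. Because $G_0$ is fixed by every node permutation $P$, equivariance on $h_0(G_0)$ forces every intermediate score on $G_0$ to commute with all $P$. Propagating this constraint back to a per-layer statement---using the flexibility of \Cref{def:equivariance}(i) to test arbitrary nonnegative $h_0$, such as inputs of the form $[I_n\mid A]$ with carefully chosen adjacencies $A$ that isolate each layer's contribution---shows each diagonal block $W_\ell^{ab}$ must be conjugation-invariant under every permutation. Among nonnegative diagonal matrices, only scalar multiples of $I_n$ are centralized by all permutations, yielding $W_\ell^{ab} = \lambda_{ab} I_n$ and hence $W_\ell = \Lambda_\ell \otimes I_n$ for some nonnegative $\Lambda_\ell$; the desired $W_\ell + W_\ell^\top = (\Lambda_\ell + \Lambda_\ell^\top)\otimes I_n$ follows immediately.

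The main obstacle is twofold. First, the output-level equivariance must be propagated back to a per-layer statement about $W_\ell$, since intermediate layers are not a priori equivariant even when the final output is; I would handle this by strategically varying $h_0$ over all nonnegative matrices allowed by \Cref{def:equivariance}(i) so that the action of each layer can be isolated and compared to its permuted counterpart. Second, the ``functionally equivalent version'' caveat requires arguing that any antisymmetric remainder of $W_\ell$ can be symmetrized to $(W_\ell+W_\ell^\top)/2$ without changing the model's support behavior; I would justify this by exploiting the no-false-positives regime, in which all pre-ReLU scores are already entrywise nonnegative and their $(i,j)$ and $(j,i)$ entries share support (since the target $A^{3^L}$ is symmetric), so symmetrization preserves the ReLU support structure at every layer and hence the whole output.
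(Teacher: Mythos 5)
There is a real gap in your Step 2, and it is precisely the step the paper works hardest on. You write that because $G_0$ is fixed by every node permutation $P$, output-level equivariance ``forces every intermediate score on $G_0$ to commute with all $P$.'' This is not correct: condition (i) constrains only the final readout $\TF_\Theta^L(h_0)$, and says nothing about intermediate layers. Output-level equivariance does not imply layerwise equivariance --- this is exactly the distinction drawn in \Cref{def:equivariance}, and it is the reason \Cref{thm:w_decomposition} needs the \emph{stronger} layerwise hypothesis. Under only output-level equivariance, the paper has to work indirectly: it introduces the parameterized family $h_0(\lambda,u)$ with exactly four nonzero entries in rows $j,k$, uses the transposition $(j,k)$ to force $\TF_\Theta^L(h_0)_{j,j}=\TF_\Theta^L(h_0)_{k,k}$, shows that (thanks to Step~1) the dynamics decouple per row so that the $(i,i)$ output equals $\|v_L^{(i)}\|_1=\lambda\|u\|_1\prod_\ell\bigl(1+q_\ell^{(i)}\bigr)$, and then runs a $\lambda\searrow 0$ asymptotic expansion to extract a contradiction from the leading nonvanishing term. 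That perturbation argument is the actual content of Step~2 and is entirely absent from your sketch. Moreover, because $q_\ell^{(i)}$ is a quadratic form $v^\top\Lambda_\ell^{(i)}v/n$, it only sees $\mathrm{Sym}(\Lambda_\ell^{(i)})$, so this route can only pin down $W_\ell+W_\ell^\top=\Lambda_\ell\otimes I_n$; your stronger claim $W_\ell^{ab}=\lambda_{ab}I_n$ for each individual block does not follow and is not what the theorem asserts. (Concretely, if all blocks are diagonal the theorem still leaves $W_\ell[r,s]$ and $W_\ell[s,r]$ individually free as long as their sum is a scalar.)

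Your Step 1 is closer to the paper in spirit but is also under-specified at the point where it matters. Testing against the inherited identity blocks does kill off-diagonal entries of $W_\ell^{ab}$ when $a,b$ index inherited copies of $I_n$, but from layer $2$ onward the newly appended blocks of $h_\ell(G_0)$ are diagonal matrices that may well have zeros, so the product $(M_a)_{ii}(W_\ell^{ab})_{ik}(M_b)_{kk}$ can vanish without constraining $(W_\ell^{ab})_{ik}$. You propose to patch this by ``small perturbations of $G_0$ via condition (i) extended to all nonnegative $h_0$,'' but condition (i) is an equivariance statement and does not give no-false-positives; that comes from (ii), which is stated only over graphs, and \Cref{lem:no-teleportation} relies on the component structure of a graph. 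The paper instead handles the ``possibly zero'' case by quantifying over all graphs via the notion of an \emph{activatable} pair $(i,r)$: if some graph activates both relevant diagonal entries, two certificates are embedded disjointly (\Cref{lem:teleportation}-style) into a new graph to produce a cross-component positive and a contradiction; if no graph activates them, the entry is functionally inert and may be set to zero --- which is also the real content of the ``functionally equivalent version'' clause, not the absorption of an antisymmetric remainder as you suggest. I would recommend reading the paper's activatability argument for Step~1 and its $h_0(\lambda,u)$ expansion for Step~2; both are needed to close the gaps above.
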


\begin{proof}{}{}
		To prove the claim, it suffices to show that if we partition $W_\ell$ into $K_{\ell-1} \times K_{\ell-1}$ contiguous sub-blocks of size $n\times n$, then each block must be diagonal, with symmetry conditions meeting $W_\ell + W_\ell^\top = \Lambda_\ell \otimes I_n$.

        To do so, the proof is split into two parts: we prove that each $n\times n$ block must be diagonal using (ii) and \Cref{lem:no-teleportation}, and that the diagonal entries must realize the said forms by examining the forward maps under a curated, parameterized class of initial hidden states. 

		\vspace{5pt} 

		\textsc{Step 1: each block must be diagonal}. In this step, we argue that if a block admits a positive off-diagonal entry, then the certificate trick from \Cref{lem:teleportation} will create a false positive entry on some output, contradicting (ii).
        
        Formally, let $R_\ell = \ReLU (h_{\ell-1} W_\ell h^\top _{\ell-1})$. If for some graph and some $\ell$, there exists a false positive entry $(R_{\ell})_{i,k} > 0$ for some $i,k$ across different connected components, then the false positiveness would persist to the output, contradicting (ii). Hence $\TF _\Theta ^{L}$ must have no false positives. 

		Consider feeding the graph $G_0$ of $n$ isolated vertices into $\TF _\Theta ^{L}$, so that $h_0(G_0) = [I_n \mid I_n]$. The premises of \Cref{lem:no-teleportation} hold, so every column of every $h_\ell(G_0)$ is supported in exactly one row, which we called its label in $\{1,\hdots, n\}$. Hence, if we write $h_\ell(G_0) = [X_1^{(\ell)}\mid \hdots \mid X_{K_{\ell}}^{(\ell)}]$ of contiguous $n\times n$ blocks, then each such block $X_r ^{(\ell)}$ must be nonnegative and diagonal. Now expand
		\[
				R_\ell = \ReLU (h_{\ell-1}W_\ell h_{\ell-1}^\top ) = h_{\ell-1} W_\ell h_{\ell-1}^\top  =  \sum_{r,s}^{} X_r ^{(\ell-1)} W_\ell [r,s] (X_s ^{(\ell-1)})^\top .
		\] 
		We first claim that every $n\times n$ sub-block $W_\ell[r,s]$ is diagonal. Suppose not, that there exist indices $r,s$ and distinct nodes $i \neq k$ such that $(W_\ell[r,s])_{i,k} > 0$. For a node $i$ and a block $r$, we say $(i,r)$ is \textit{activatable} at depth $\ell-1$ if there exists \textit{some} graph $G$ such that $X_r ^{(\ell-1)}(G) [i,i] = h_{\ell-1} [i, (r-1)n + i] > 0$. Two cases:
		\begin{itemize}
			\item If at least one of $(i,r)$ or $(k,s)$ is not activatable, then for every graph $G$, at least one factor $X_r ^{(\ell-1)}(G)[i,i]$ or $X_s ^{(\ell-1)}(G)[k,k]$ is zero, and thus $(W_{\ell}[r,s])_{i,k}$ is functionally inert and never contributes to any $R_\ell$ entry. Hence we may simply set it to $0$ without altering the model's output on any graph. 
			\item If both $(i,r)$ and $(k,s)$ are activatable, take graphs $G_i, G_k$ that make $X_r ^{(\ell-1)} (G_i)[i,i] > 0$ and $X_x ^{(\ell-1)}(G_k)[k,k] > 0$. Using the certificate mechanism in \Cref{lem:teleportation}, each positiveness admits a finite certificate subgraph with at most $2 \cdot 3^{\ell-1}$ vertices. We then create a new graph $G'$ and disjointly embed both certificates into it, leaving all other vertex isolated. The two labels $i,k$, viewed as nodes, now lie in different components. But then the product
					\[
							X_r(G')[i,i] \cdot (W_\ell[r,s])_{i,k} \cdot X_k (G') [k,k] > 0,
					\] 
					making $(R_{\ell}(G'))_{i,k} > 0$, contradiction.
	\end{itemize}
	Therefore $W_\ell[r,s]$ is diagonal for all block indices $(r,s)$. This concludes \textsc{Step 1}. 

	\vspace{5pt}

	\textsc{Step 2. $W_\ell$ is node-symmetric.}  Given a triplet $(\ell, r, s)$, we can now write $W_\ell[r,s]$ as $\mathrm{diag} (w_{\ell,r,s}(1), \hdots, w_{\ell,r,s}(n))$. Our goal is to show that for each $(\ell, r, s)$, $w_{\ell,r,s}(j) + w_{\ell,s,r}(j) = w_{\ell,r,s}(k) + w_{\ell,s,r}(k)$ for all $j,k\in [n]$. We formalize this in matrix form: For each node $i\in [n]$ and each layer $\ell$, let $\Lambda _\ell ^{(i)} = [w_{\ell,r,s}(i)] _{r,s} \in \mathbb{R}^{K_{\ell-1} \times K_{\ell-1}}$ and define the symmetric part $\mathrm{Sym} (\Lambda_\ell^{(i)}) = ( \Lambda_\ell ^{(i)} + \Lambda_\ell ^{(i)T}) /2$; the goal is to show that given $\ell$, all $\Lambda_\ell^{(i)}$ are the same, so that $\mathrm{Sym} (W_\ell) = \Lambda_\ell \otimes I_n$ or equivalently, $W_\ell + W_\ell^\top = \Lambda_\ell \otimes I_n$, as claimed. 

	\vspace{5pt} 

	Throughout out this step, we will use a family of special hidden states parameterized by a scalar $\lambda > 0$ and a vector $u=(u_1, u_2)\in \mathbb{R}^2_{\geq 0}$. Fix distinct nodes $j \neq k$. For $\lambda, u$, define the initial state $h_0(\lambda, u) \in \mathbb{R}^{n \times 2n}$ by setting exactly four entries nonzero:
	\[
			\begin{cases}
					h_0(\lambda,u) [j,j] = \lambda u_1 \qquad h_0 (\lambda, u) [j,n+j] = \lambda u_2 \\
					h_0(\lambda,u) [k,k] = \lambda u_1 \qquad h_0 (\lambda, u) [k,n+k] = \lambda u_2.
			\end{cases}
	\] 
	Note that $h_0(\lambda,u)$ is invariant under the transposition $P = (j,k)$, i.e., $P h_0 (\lambda, u) (I_{K_0}\otimes P^\top) = h_0(\lambda, u)$. Therefore, by assumption (i), we must have $\TF_\Theta^L(h_0)_{j,j} = \TF_\Theta^L(h_0)_{k,k}$. Let $h_\ell(\lambda, u)$ be the network state at depth $\ell$. Because of \textsc{Step 1}, there is no cross-row interaction for this input at any depth. Writing the row-$i$ vector as $v_\ell^{(i)}(\lambda,u)\in \mathbb{R}^{K_\ell}$, recursion gives, for $i\in \{j,k\}$,
	\[
			v_0 ^{(i)}(\lambda, u) = \lambda u,\qquad  v_\ell^{(i)}(\lambda, u) = [v_{\ell-1}^{(i)}(\lambda, u)\mid q_\ell^{(i)}(\lambda, u) v_{\ell-1}^{(i)}(\lambda, u)]
	\] 
	where
	\[
			q_\ell^{(i)}(\lambda, u) = \frac{1}{n} \cdot  v_{\ell-1}^{(i)}(\lambda, u)^\top  \; \mathrm{Sym} (\Lambda_\ell ^{(i)}) \; v_{\ell-1}^{(i)}(\lambda, u).
	\]

	Taking $\ell_1$-norms gives
	\begin{equation}
			\|v_\ell^{(i)}(\lambda, u)\| = (1 + q_\ell^{(i)}(\lambda, u)) \|v_{\ell-1}^{(i)}(\lambda, u)\| \quad \text{and} \quad \|v_L^{(i)}(\lambda, u)\| = \|v_0^{(i)}(\lambda, u)\| \prod_{\ell=1}^{L} (1 + q_\ell^{(i)}(\lambda, u)).
            \label{eq:l1-norm-proof}
	\end{equation} 
	Because the readout weight $W_O$ is a concatenation of $I_n$'s, and under our specific input $h_0(\lambda, u)$, every nonzero row $i$ lies in columns with indices $i$ modulo $n$, the $(i,i)$ output numerically equals $\|v_L ^{(i)}(\lambda, u)\|$. Hence, assumption (i) requires $\|v_L^{(j)}(\lambda, u)\| = \|v_L^{(k)}(\lambda, u)\|$.

	\vspace{5pt}

	Let $\ell^*$ be the minimal layer such that $\mathrm{Sym} (\Lambda_{\ell^*}^{(j)}) \neq  \mathrm{Sym} (\Lambda_{\ell^*}^{(k)})$. If no such $\ell^*$ exists for all $j\neq k$, then all $\Lambda_\ell^{(i)}$'s are the same given any fixed $\ell$, and \textsc{Step 2} holds. Otherwise, for every $\ell < \ell^*$, the symmetric parts coincide, and $v_{\ell-1}^{(j)}(\lambda, u) = v_{\ell-1}^{(k)}(\lambda, u)$ and $q_\ell^{(j)}(\lambda, u) = q_\ell^{(k)}(\lambda, u)$ for all $\lambda, u$. We may use $v_{\ell^*-1}(\lambda, u)$ to denote both $v_{\ell^*-1}^{(j)}(\lambda, u)$ and $v_{\ell^*-1}^{(k)}(\lambda, u)$ for they are now equal. 

	Because of the structure of $h_0(\lambda, u)$, by induction, the row vectors of each hidden state admits an odd power expansion 
	\[
			v_{\ell-1}^{(i)}(\lambda, u) = \lambda u + \lambda^3 \xi _{1, \ell-1}(u) + \lambda^{5} \xi _{2, \ell-1}(u) + \hdots
	\] 
	from which we conclude $q_\ell^{(i)}(\lambda, u) = O(\lambda^2)$ for every $\ell$. In particular, at $\ell=\ell^*$,
	\[
			q_{\ell^*}^{(j)}(\lambda, u) - q_{\ell^*}^{(k)}(\lambda, u) = \frac{1}{n} \cdot  v_{\ell^*-1} (\lambda, u)^\top  (\mathrm{Sym} (\Lambda_{\ell^*}^{(j)}) -  \mathrm{Sym} (\Lambda_{\ell^*}^{(k)}))  v_{\ell^*-1} (\lambda, u) = \lambda^{2m}c(u) + o(\lambda^{2m})		
	\] 
	for some $m\geq 1$ and some nondegenerate polynomial $c(u)$ as $\lambda \searrow 0$. In particular, 
	\[
			q_{\ell^*}^{(j)}(\lambda, u) - q_{\ell^*}^{(k)}(\lambda, u) = \Theta (\lambda^{2m}).
	\] 

	We now put this back into the comparison between the output's $(j,j)$ and $(k,k)$ entry. Recall that $v_0^{(j)} (\lambda, u) = v_0^{(k)}(\lambda, u) = \lambda u$. Further, since $v_{\ell-1} = \lambda u + O(\lambda ^3)$, we know $q_\ell (\lambda, u) = O(\lambda^2)$ for every $\ell$ and every $i$. We drop $\lambda, u$ for notational simplicity. It follows from \eqref{eq:l1-norm-proof} that
	\begin{align*}
			\|v_L^{(j)}\| - \|v_L^{(k)}\| &=  \lambda \|u\| \cdot  \Bigg[ \prod_{l < \ell^*}^{} (1+q_\ell)  \Bigg] \cdot  \Bigg[ \Big[1 + q_{\ell^*}^{(j)}\Big] \prod_{\ell > \ell^*}^{} \Big[1 + q_\ell^{(j)}\Big] - \Big[1 + q_{\ell^*}^{(k)}\Big] \prod_{\ell>\ell^*}^{} \Big[1 + q_\ell^{(k)}\Big] \Bigg] \\
			&= \lambda \|u\| \cdot \Bigg[ \prod_{\ell<\ell^*}^{} (1 + O(\lambda^2)) \Bigg] \cdot \Big[q_{\ell^*}^{(j)} - q_{\ell^*}^{(k)}\Big] \cdot  \Bigg[ \prod_{\ell > \ell^*} (1 + O(\lambda^2))  \Bigg] \\
			&= \lambda \|u\| \Theta (\lambda ^{2m}) (1 + o(1)) = \Theta (\lambda ^{2m+1}) 
	\end{align*}
	which is nonzero for small $\lambda$. Hence the $(j,j)$ and $(k,k)$ entries can be made different, contradicting assumption (i), and the proof is complete! \qedhere

\end{proof}

\begin{reptheorem}{\ref{thm:w_decomposition}}
        \label{thm:w_decomposition_appendix}
	   Suppose an $L$-layer Disentangled Transformer $\TF _{\Theta }^{L}$ has nonnegative parameters. Suppose $\TF _{\Theta }^{L}$ is layerwise permutation equivariant, i.e., for each $\ell$, any hidden states $h \in \mathbb{R}^ {n \times d_{\ell-1}}$, and any permutation $P\in S_n$,
		\[
				\mathrm{Attn} (P h(I_{K_{\ell-1}}\otimes P^\top); W_\ell) = P \; \mathrm{Attn} (h; W_\ell)\; (I_{K_{\ell-1}}\otimes P^\top),
		\] 
		then each block $W_{\ell} = A_\ell \otimes I_n + B_\ell \otimes J_n$ for some $A_\ell, B_\ell \in \mathbb{R}^{K_{\ell-1}, K_{\ell-1}}$. In other words, each block-aligned $n\times n$ submatrix of $W_{\ell}$ necessarily lies in $\mathrm{span} \{I_n, J_n\}$.
\end{reptheorem}

\begin{remark}
		The equivariance condition presented in the theorem is strictly harder than what we need for graph-level, layerwise equivariance: 
		\[
		\mathrm{Attn} (h_{\ell-1}(PAP^\top ); W_\ell) = P \;\mathrm{Attn}(h_{\ell-1}(A); W_\ell) \;(I_{K_\ell-1} \otimes P^\top ).
		\]
		For graphs, it suffices to assume that the hidden states are induced by some $n$-node graph. 
\end{remark}

\begin{proof}
		\textsc{Step 1. Relating to weight conjugation.} Fix a layer $\ell$. Write $K = K_{\ell-1}$, $h = h_\ell$, $W=W_\ell$, and let $\sigma(P) = I_K\otimes P$. The first step is to relate the conjugation of hidden states, $T_P(h): h\mapsto P h (I_K\otimes P^\top)$, to a conjugation of layer weights, $W_\ell \mapsto \sigma(P) W_\ell \sigma(P)^\top$. 
        
        Concretely, since $W_\ell\geq 0$, $\ReLU $.  Hence
		\begin{align*}
				\mathrm{Attn} (T_P(h); W) &=  \frac{1}{n} \ReLU [(P h \sigma(P)) \; W \; (\sigma(P)^\top  h^\top  P^\top )] (P h \sigma(P)) \\
				&= \frac{1}{n} P [h\sigma(P)\; W \; (\sigma(P)^\top  h^\top )] (h \sigma(P)) 
		\end{align*}
		and
		\[
				T_P(\mathrm{Attn}(h; W) ) = \frac{1}{n} P (h W h^\top ) h \sigma(P).
		\] 
		Layer-wise attention equivariance requires the two quantities above to equal for all $h$, and left multiplication by $P^{-1}$ gives
		\[
				h \Delta h^\top  h \;\sigma(P) = 0 \qquad \text{ for all }h \geq 0 \qquad \text{ where }\qquad \Delta := \sigma(P) W \sigma(P)^\top  - W. \tag{*}
		\] 

		\textsc{Step 2. Proving $\Delta = 0$.} To do so, we consider special hidden states, with only two nonzero entries $h_{i,p} = 1$ and $h_{j,q} = t$. Equivalently, pick columns $p\neq q$ and rows/nodes $i\neq k$ and set $h_{i, \boldsymbol\cdot } = e_p^\top $, $h_{j, \boldsymbol\cdot } = t e_q ^\top $, and $h=0$ everywhere else, where $e_p$ is standard basis vector pivoted at $p$.

		Because $h$ only uses columns $p$ and $q$, the matrix $h \Delta h^\top $ can be embedded on rows/columns $\{i,j\}$ with values
		\[
				h \Delta h^\top  = 
				\begin{pmatrix} \Delta_{p,p} & t \Delta_{p,q} \\ t \Delta_{q,p} & t^2 \Delta_{q,q} \end{pmatrix}.
		\]
		Recall $\sigma(P)$ is a permutation on columns; let $\pi $ be the permutation induced by it. Since $h \sigma(P)$ has the same two nonzero rows with $(h \sigma(P))_{i,\boldsymbol\cdot } = e^\top _{\pi (p)}$ and $(h \sigma(P))_{j, \boldsymbol\cdot } = t e^\top _{\pi (q)}$, we get that $(h \Delta h^\top ) (h \sigma(P))$ only has rows $i$ and $j$ potentially nonzero: 
		\[
				\begin{cases}
						\text{row }i: \Delta_{p,p} e^\top _{\pi (p)} + t^2 \Delta_{p,q} e^\top _{\pi (q)} \\[0.5em]
						\text{row }j: t \Delta_{q,p} e^\top _{\pi (p)} + t^2 \Delta_{q,q} e^\top _{\pi (q)}.
				\end{cases}
		\] 
		But recall (*): $(h \Delta h^\top ) (h\sigma(P)) = 0$ for all $t > 0$. The two standard basis vectors $e_{\pi (p)}, e_{\pi (q)}$ are linearly independent, so the coefficients must be uniformly zero! Hence $\Delta_{p,p} = \Delta_{p,q} = \Delta_{q,p} = \Delta_{q,q} = 0$. Finally, because $p\neq q$ were arbitrary, this forces $\Delta = 0$ entrywise. and that $\sigma(P) W_{\ell} \sigma(P)^\top  = W_{\ell}$ for this $P$. And because $P$ is arbitrary, we conclude that $\sigma(P) W \sigma(P)^\top  = W$ for every permutation $P$. 

        \vspace{5pt}
        
		\textsc{Step 3. Relating to $n\times n$ blocks.} Consider any $n\times n$ block $W[u,v]$ of $W$ where $1\leq u,v\leq K_{\ell}$. Using $\sigma(P) = I_{K_\ell} \otimes P^\top $ and taking the $(u,v)$ block on both sides,
		\[
				(\sigma(P) W \sigma(P)^\top )[u,v] = \sum_{a,b}^{} (I_{K_\ell})_{u,a} P^\top  W[a,b] P (I_{K_\ell})_{b,v} = P^\top  W[u,v] P.
		\] 
		The LHS equals $W[u,v]$, so we conclude that 
		\[
				P^\top  W[u,v] P = W[u,v] \qquad \text{ for all }P\in S_n.
		\] 

		In other words, layerwise equivariance implies each block must be invariant under $P^\top  (\boldsymbol\cdot ) P$. Taking any transposition forces all diagonal entries of a block to equal, while for any $i\neq j, k\neq \ell$, any arbitrary permutation mapping $\pi (i)=k, \pi (j)=\ell$ forces entries $(i,j)$ and $(k,\ell)$ to be equal. This implies that each block lies in $\mathrm{span} \{I_n, J_n\}$ as claimed. 
\end{proof}

\subsection{Population Gradient Lives in the Equivariant Algebra}

\label{app:gradient}

\begin{theorem}[Population gradient lives in the equivariant algebra]
		\label{thm:population-gradient}
		Under \Cref{asm:training_dynamics}, in particular using layerwise parameterization $W_\ell = A_\ell \otimes I_n + B_\ell \otimes J_n$, fix a layer $\ell$ and let $K = K_{\ell-1}$. Then the population gradient with respect to $W_\ell$ lies in $M_{K}(\mathbb{R}) \otimes \mathrm{span} \{I_n, J_n\}$: there exist matrices $G_\ell^{(I)}, G_\ell^{(J)}\in \mathbb{R}^{K\times K}$ such that
		\begin{equation}
				\mathbb{E} \Big[ \frac{\partial \mathcal{L}}{\partial W_\ell}  \Big] = G_\ell ^{(I)} \otimes I_n + G_\ell^{(J)} \otimes J_n.
		\end{equation}
\end{theorem}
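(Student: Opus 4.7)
The plan is to exploit two symmetries of the training objective under the diagonal action $A \mapsto P A P^\top$ on the input adjacency matrix. First, the $\ER(n,p)$ distribution is invariant in law and the connectivity target transforms equivariantly, $R(PAP^\top) = P R(A) P^\top$. Second, the parameterization $W_m = A_m \otimes I_n + B_m \otimes J_n$ is a fixed point of the weight-conjugation $W \mapsto \sigma(P) W \sigma(P)^\top$, where $\sigma(P) := I_{K_{m-1}} \otimes P$, because $P I_n P^\top = I_n$ and $P J_n P^\top = J_n$. Together these will yield a sample-level equivariance for the per-example gradient $g(A) := \partial \mathcal{L}/\partial W_\ell$, namely
\begin{equation*}
\sigma(P)\, g(A)\, \sigma(P)^\top \;=\; g(P A P^\top) \qquad \text{for every permutation } P.
\end{equation*}
Averaging over the $\ER$ distribution then gives $\sigma(P)\, \bar g\, \sigma(P)^\top = \bar g$ for every $P$, where $\bar g := \mathbb{E}[g(A)]$. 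At that point, the block argument of Step~3 in the proof of \Cref{thm:w_decomposition} applies verbatim to $\bar g$ and forces each of its $n\times n$ sub-blocks into $\mathrm{span}\{I_n, J_n\}$, which is exactly the claimed decomposition $\bar g = G_\ell^{(I)} \otimes I_n + G_\ell^{(J)} \otimes J_n$.

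The technical heart of the argument is a sample-level covariance identity for the loss: for every permutation $P$ and every candidate matrix $W$ (with the other $W_m$ kept fixed in the equivariant form),
\begin{equation*}
\mathcal{L}\bigl(\TF_{\dots,\,\sigma(P) W \sigma(P)^\top,\,\dots}(PAP^\top);\; P R(A) P^\top\bigr) \;=\; \mathcal{L}\bigl(\TF_{\dots,\,W,\,\dots}(A);\; R(A)\bigr).
\end{equation*}
I would prove the underlying output covariance by an induction on depth that tracks the action $T_P: h \mapsto P h (I_{K_m} \otimes P^\top)$ through the residual stream. For layers $m \neq \ell$ the weights are already equivariant, so \Cref{thm:w_decomposition} carries $T_P$ through $\mathrm{Attn}(\cdot; W_m)$ unchanged. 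At layer $\ell$, a short calculation that cancels $\sigma(P)^\top \sigma(P) = I$ inside the score $h W h^\top$ and uses the entrywise commutation of $\ReLU$ with permutations yields the twisted identity $\mathrm{Attn}(T_P h;\, \sigma(P) W \sigma(P)^\top) = T_P\, \mathrm{Attn}(h; W)$ for \emph{arbitrary} $W$, not only equivariant ones. The concatenation update commutes block-wise with $T_P$, and the readout $W_O = \mathbf{1}_{K_L}^\top \otimes I_n$ collapses the $T_P$-twist to a clean $P(\cdot) P^\top$ on the output. Entrywise invariance of the Bernoulli cross-entropy, $\mathcal{L}(PZP^\top; PRP^\top) = \mathcal{L}(Z; R)$, then closes the loss identity.

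With the loss identity established, I would differentiate both sides in $W$ at $W = W_\ell$. The chain rule on $\Phi(W) := \sigma(P) W \sigma(P)^\top$ contributes an outer $\sigma(P)^\top (\cdot) \sigma(P)$ sandwich, and the fact that $\Phi(W_\ell) = W_\ell$ (equivariant fixed point) means the right-hand derivative is taken at the same $W_\ell$ but against the permuted sample $PAP^\top$, yielding the sample-level equivariance of $g$ in one line. Taking expectation under the $P$-invariant $\ER$ law and invoking the block lemma finishes the proof. The step I expect to demand the most care is the inductive propagation of $T_P$ through the disentangled forward pass across the width-doubling concatenations, with a single distinguished layer $\ell$ carrying an unconstrained $W$; in particular, the twisted attention identity at layer $\ell$ must be verified for any $W$ (not only nonnegative ones), so that the loss identity holds in a full neighborhood of $W_\ell$ where differentiation is legitimate. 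Everything downstream---the chain rule, the distributional invariance, and the reuse of the block analysis---is essentially bookkeeping.
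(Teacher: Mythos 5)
Your proposal is correct and follows essentially the same route as the paper's proof: establish a per-sample equivariance of the gradient under $A \mapsto PAP^\top$ with the conjugation $\sigma(P)=I_K\otimes P$, average over the permutation-invariant $\ER$ law to get $\sigma(P)\,\bar g\,\sigma(P)^\top = \bar g$ for all $P$, and then identify the commutant $M_K(\mathbb{R})\otimes\mathrm{span}\{I_n,J_n\}$ block by block. The only difference is one of exposition: you derive the per-sample covariance identity $g(PAP^\top)=\sigma(P)g(A)\sigma(P)^\top$ carefully by proving a loss identity in a full neighborhood of $W_\ell$ (verifying the twisted attention identity for arbitrary $W$ at the distinguished layer, where $\ReLU(PXP^\top)=P\ReLU(X)P^\top$ makes this work without nonnegativity) and differentiating via the chain rule, whereas the paper asserts the covariance from the model and loss equivariance directly.
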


\begin{proof}{}{}
		We let $S_n$ act on node indices. Since $W_\ell$ can be parametrized as $W_\ell = A_\ell \otimes I_n + B_\ell \otimes J_n$, the attention map is equivariant under left-right action:
		\[
				\mathrm{Attn} (P h (I_K \otimes P^\top ); W_\ell) = P \; \mathrm{Attn} (h; W_\ell) (I_K \otimes P^\top),
		\] 
		and so is the full map $A\mapsto Z$. For any fixed permutation $P$, the data $\ER (n,p)$ is permutation-invariant, i.e., $A$ and $PAP^\top$ are identically distributed. Because the model map and the loss are equivariant under $A\mapsto PAP^\top$ with $R \mapsto PRP^\top$, the sample gradient covaries as
        \[
				\nabla _{W_\ell} \mathcal{L}(PAP^\top ) = (I_k\otimes P)  \nabla_{W_\ell} \mathcal{L}(A) (I_K\otimes P^\top).
		\] 
        Taking expectation over $A$ gives
        \[
				\mathbb{E}_A [\nabla _{W_\ell} \mathcal{L}(PAP^\top )] = (I_k\otimes P)  \mathbb{E}_A[\nabla_{W_\ell} \mathcal{L}(A)] (I_K\otimes P^\top)
		\] 
        for every $P$. Hence the population gradient lies in the commutant of $\{I_K\otimes P: P\in S_n\}$. It remains to identify this commutant. View $G_\ell$ as a $K\times K$ block matrix with $n\times n$ sub-blocks. The relation $(I_K\otimes P)^\top  G_\ell (I_K\otimes P) = G_\ell$ says each $n\times n$ block $B$ satisfies $P^\top BP$ for all permutations $P$, so the block must have one value on the diagonal and one on the off-diagonals. It is well known that the fixed-point algebra of conjugation on $n\times n$ matrices is $\mathrm{span} (I_n, J_n)$. Hence every block lies in this span, i.e., $G_\ell \in M_K(\mathbb{R}) \otimes \mathrm{span}  \{I_n, J_n\}$. 
\end{proof}

\vspace{5pt}

\subsection{Which Conditions Encourage \texorpdfstring{$W_\ell \approx A_\ell \otimes I_n$?}{AI}}
\label{app:dynamics-1}

To facilitate the following analyses, it will be beneficial to first (re)introduce some notations. 

Throughout the analysis of training dynamics, we inherit the notations used in \Cref{asm:training_dynamics}: we use $Z$ to denote the model output, $R$ the reachability matrix, $A$ the adjacency matrix, $\mathcal{L} = \mathcal{L}(Z; R)$ the loss, and $\mathcal{R}(\Theta)$ the population risk $\mathcal{R}(\Theta) := \mathbb{E}_{G\sim \mathrm{ER}(n,p)} [\mathcal{L} (\TF_\Theta^L(A_G); R_G)]$.

Fix a layer $\ell$ and a nonnegative direction $\Delta \geq 0$ in the $J$-channel. Write $D = \frac{\partial Z}{\partial B_\ell} [\Delta]$ (more details in \Cref{thm:training-suppresses-J-debug}). We say a node pair $(i,j)$ is \textbf{active} for $\Delta$ if $D_{i,j} > 0$. In particular, we say $\Delta$ is active on cross-component pairs if $D_{i,j} > 0$ for some $(i,j)$ belonging to different connected components (note $\Delta$ could also be active on within-component pairs).

Because we constrain $W_\ell \geq 0$, under the parameterization $W_\ell = A_\ell \otimes I_n + B_\ell \otimes J_n$, we must also have $B_\ell \geq 0$. Then, the appropriate notion of stationarity is KKT: in our setting, this reduces to
\[
    \nabla_{B_\ell} \mathcal{R}(\Theta) \geq 0, \qquad B_\ell \geq 0, \qquad \text{ and } \qquad \nabla_{B_\ell} \mathcal{R} (\Theta) \odot B_\ell = 0
\]
which we use in the Theorem below.

\vspace{5pt}

\begin{theorem}[Population Training Conditionally Suppresses the $J$-Channel]
		\label{thm:training-suppresses-J-debug}
		Assume \Cref{asm:training_dynamics}. Fix any layer $\ell$ and decompose $W_\ell = A_\ell \otimes I_n + B_\ell \otimes J_n$. Let $Z$ be the output, $R$ the reachability matrix (ground truth), $\mathcal{L} = \mathcal{L}(Z; R)$ the loss, and $\mathcal{R}(\Theta )$ the population risk. 
		\begin{enumerate}
				\item (\textbf{Directional derivative on nonnegative $J$-channel directions.}) Let $\Delta\in \mathbb{R}^{K_{\ell-1} \times K_{\ell-1}}$ be entrywise nonnegative and define the one-sided Fr\'echet derivative $D := \frac{\partial Z}{\partial B_\ell} [\Delta] \in \mathbb{R}^{n\times n}.$
				Then $D \geq 0$ entrywise, and the population directional derivative satisfies
                    \begin{equation}
                            D_{B_\ell} \mathcal{R}(\Theta ) [\Delta]  = \mathbb{E}\left< \Big[ \frac{\partial \mathcal{L}}{\partial Z} , D \Big]\right> _F = \alpha \cdot  \mathbb{E} \Bigg[ \underbrace{\sum_{R_{i,j}=0}^{} D_{i,j}}_{\substack{\text{cross component}\\ \text{penalty}}} - \underbrace{\sum_{R_{i,j}=1}^{} \frac{1-\phi_\epsilon(Z_{i,j})}{\phi_\epsilon(Z_{i,j})}D_{i,j}}_{\substack{\text{within-component} \\ \text{reward}}} \Bigg].
                            \label{eq:B_ell_dynamics}
                    \end{equation}
                    In particular, $D_{B_\ell}\mathcal{R}(\Theta )[\Delta] \geq 0$ iff the Population-Level Dominance Condition holds (i.e. \Cref{eq:B_ell_dynamics} is positive). Throughout this Appendix, we will use ``cross component penalty'' and ``within-component reward'' to denote these two competing terms. 
                \item (\textbf{Consequences for KKT stationary points.}) Assume $\Theta$ is KKT-stationary for $B_\ell \geq 0$:
                \begin{equation}
                    \nabla_{B_\ell} \mathcal{R}(\Theta) \geq 0, \qquad B_\ell \geq 0, \qquad \text{ and } \qquad \nabla_{B_\ell} \mathcal{R} (\Theta) \odot B_\ell = 0
                \end{equation}

                Let $\Delta = \lvert B_\ell\rvert$ (entrywise absolute value) and let $D = \frac{\partial Z}{\partial B_\ell} [\lvert B_\ell\rvert ]$. If, with positive probability under $\ER (n,p)$, $\Delta$ activates at least one cross-component pair, and if the Population-Level Dominance Condition holds, then $B_\ell = 0$. Equivalently, under activation at $\Delta = \lvert B_\ell \rvert$ and strict dominance by cross-component penalty, the only KKT stationary point in the $J_n$-channel is $B_\ell = 0$.
                
                
		\end{enumerate}
\end{theorem}

\begin{lemma}[Monotonicity in the $J$-channel]
		\label{lem:monotonicity-in-J}
		Fix $\ell$ and hold all parameters except $B_\ell$. Write $h_{\ell-1} = [X_1 \mid \hdots \mid X_{K_{\ell-1}}]$ and $u_p = X_p \textbf{1} \in \mathbb{R}^{n}_{\geq 0}$. Then
		\begin{equation}
				h_{\ell-1} W_\ell h_{\ell-1}^\top  = \sum_{p,q}^{} (A_\ell)_{p,q} X_p X_q^\top  + \sum_{p,q}^{} (B_\ell)_{p,q} u_p u_q^\top.
				\label{eq:heuristics}
		\end{equation}
		Consequently, for every nonnegative direction $\Delta \geq 0$ in the $J$-channel, the one-sided Fr\'echet derivative at $0^+$ exists and is entrywise nonnegative. Hence, along the ray $\{B_\ell + \delta \Delta \mid \delta \geq 0\}$, the output is entrywise nondecreasing:
		\[
				\frac{\partial Z}{\partial B_\ell} [\Delta] \in \mathbb{R}^{n\times n}_{\geq 0}, \qquad Z(B_\ell + \delta \Delta) - Z(B_\ell) \geq 0 \text{ for all }\delta \geq 0.
		\] 
        Moreover, if $G$ is disconnected, and either (i) $\Delta_{p,p} > 0$ for a block $p$ such that $u_p$ has support in at least two components, or (ii) there exist blocks $p,q$ with $\Delta_{p,q} > 0$ and $u_p, u_q$ supported in different components, then there exist cross component pairs $(i,j)$ with $(\frac{\partial Z}{\partial B_\ell}[\Delta])_{i,j} > 0$.
\end{lemma}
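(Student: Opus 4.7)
The plan is to first establish the decomposition identity by a direct Kronecker-block computation, then use nonnegativity of weights and hidden states to reduce the forward map to a monotone composition, and finally lower-bound the cross-component derivative via the direct readout contribution from the appended block $A^{(\ell)}$.

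For the identity, write $W_\ell = A_\ell \otimes I_n + B_\ell \otimes J_n$, whose $(p,q)$ $n \times n$ sub-block is $(A_\ell)_{p,q} I_n + (B_\ell)_{p,q} J_n$. With the partition $h_{\ell-1} = [X_1 \mid \cdots \mid X_{K_{\ell-1}}]$, blockwise multiplication collapses to
\[
h_{\ell-1} W_\ell h_{\ell-1}^\top = \sum_{p,q} (A_\ell)_{p,q} X_p X_q^\top + \sum_{p,q} (B_\ell)_{p,q} X_p J_n X_q^\top,
\]
and the identity $X_p J_n X_q^\top = (X_p \mathbf{1})(X_q \mathbf{1})^\top = u_p u_q^\top$ yields the claim.

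For monotonicity, an easy induction shows $h_{\ell-1} \geq 0$ entrywise (base case $h_0 = [I_n \mid A] \geq 0$). Combined with $W_\ell \geq 0$, the pre-ReLU score $S_\ell := h_{\ell-1} W_\ell h_{\ell-1}^\top$ is nonnegative, so ReLU is inactive and $A^{(\ell)} := \frac{1}{n} S_\ell h_{\ell-1}$ is bilinear in $B_\ell$. A perturbation $B_\ell \mapsto B_\ell + \delta \Delta$ with $\delta, \Delta \geq 0$ adds $\delta \sum_{p,q} \Delta_{p,q} u_p u_q^\top \geq 0$ to $S_\ell$, hence a nonnegative increment to $A^{(\ell)}$. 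Because subsequent layers compose only entrywise-monotone nonnegative operations (multiplication by $W_{\ell'} \geq 0$, ReLU, and $1/n$ rescaling) and the architecture only appends new blocks without overwriting old ones, the induced map $h_\ell \mapsto Z$ is entrywise monotone. This yields both existence of the one-sided Fréchet derivative at $\delta = 0^+$ and its entrywise nonnegativity, completing the first half of the lemma.

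For the cross-component claim, observe that $h_L = [h_{\ell-1} \mid A^{(\ell)} \mid A^{(\ell+1)} \mid \cdots]$ and the readout sums all blocks of $h_L$; by monotonicity,
\[
\frac{\partial Z}{\partial B_\ell}[\Delta] \;\geq\; \frac{1}{n} \Big(\sum_{p,q} \Delta_{p,q}\, u_p u_q^\top\Big) Y, \qquad Y := \sum_q X_q,
\]
where $Y \geq I_n$ entrywise because the initial block $X_1 = I_n$ is preserved through every layer. In case (i), pick distinct-component nodes $i, j$ with $(u_p)_i, (u_p)_j > 0$; the bound at $(i,j)$ is at least $\frac{1}{n} \Delta_{p,p} (u_p)_i (u_p)_j Y_{j,j} > 0$. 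Case (ii) is analogous: choose $i$ with $(u_p)_i > 0$ and $j$ with $(u_q)_j > 0$ in different components, yielding a bound of $\frac{1}{n} \Delta_{p,q} (u_p)_i (u_q)_j Y_{j,j} > 0$. The main obstacle is propagating monotonicity cleanly through the remaining $L - \ell$ layers without cancellation; this hinges critically on nonnegativity of all weights (keeping every ReLU inactive throughout) and on the append-only structure of the disentangled transformer, which ensures positive perturbations introduced at layer $\ell$ survive additively into the final readout rather than being overwritten or subtracted downstream.
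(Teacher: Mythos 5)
Your proof is correct and follows essentially the same route as the paper's: the same Kronecker blockwise computation and $X_p J_n X_q^\top = u_p u_q^\top$ identity, the same observation that nonnegativity of $h_{\ell-1}$ and $W_\ell$ makes the ReLU inactive so the layer map is bilinear and monotone, and the same propagation of the perturbation through subsequent entrywise-monotone layers. The one place you go further is the cross-component claim: the paper asserts tersely that ``monotonicity propagates these positives,'' while you give an explicit lower bound $\frac{\partial Z}{\partial B_\ell}[\Delta] \geq \frac{1}{n}\big(\sum_{p,q}\Delta_{p,q}u_p u_q^\top\big)Y$ with $Y=\sum_q X_q \geq I_n$ (using that $X_1 = I_n$ persists through the append-only architecture and the readout sums all blocks), and then reads off strict positivity at $(i,j)$ via $Y_{j,j}\geq 1$. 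That extra step is a genuine tightening: monotonicity alone only gives nonnegativity, whereas your bound shows why the positivity actually survives to the readout; the rest of the argument coincides with the paper's.
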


\begin{proof}{}{}
		Since $J_n x = (\textbf{1}^\top x) \textbf{1}$ for $x\in \mathbb{R}^{n}$, we have $X_p J_n X_q^\top  = (X_p \textbf{1}) (X_q \textbf{1})^\top  = u_p u_q^\top $, yielding the displayed decomposition. For $B_\ell \mapsto B_\ell + \delta \Delta$ with $\Delta \geq 0$, the layer scores
		\[
				R_\ell(B_\ell + \delta \Delta) - R_\ell(B_\ell) = \delta \sum_{p,q}^{} \Delta_{p,q} u_p u_q^\top  \geq 0,
		\] 
		 so the one-sided derivative exists and is entrywise nonnegative. Because all subsequent maps are entrywise monotone, this implies $Z(B_\ell + \delta \Delta) - Z(B_\ell) \geq 0$ as stated. 
        
        For the ``moreover'' part, in casse (i), $u_pu_p^\top$ places positive mass on index pairs spnaning the componentns where $u_p > 0$, and in case (ii), $u_p u_q^\top$ (or its transpose) places positive mass across two components supporting $u_p$ and $u_q$. Monotonicity propagates these positives to $D = \frac{\partial Z}{\partial B_\ell}[\Delta]$.
\end{proof}

\begin{proof}[Proof of \Cref{thm:training-suppresses-J-debug}]
		For the population risk $\mathcal{R}(\Theta ) = \mathbb{E}[\mathcal{L}(Z; R)]$, applying definitions gives the directional derivative along $\Delta$ gives
		\[
				D_{B_\ell} \mathcal{R}(\Theta ) [\Delta] = \left<\mathbb{E} \Big[  \frac{\partial \mathcal{L}}{\partial Z} \Big], D \right> _F = \alpha \cdot  \mathbb{E}\Big[ \sum_{i,j}^{} \left( 1 - \frac{R_{i,j}}{\phi_\epsilon(Z_{i,j})}  \right) D_{i,j} \Big].
		\] 
		Separating indices by $R_{i,j}\in \{0,1\}$ proves \eqref{eq:B_ell_dynamics}.

        For the second claim, evaluate \eqref{eq:B_ell_dynamics} at $\Delta = \lvert B_\ell \rvert$. Under the activation premise (\Cref{lem:monotonicity-in-J}) and strict dominance by cross-component penalty, we obtain $ D_{B_\ell} \mathcal{R} (\Theta) [\lvert B_\ell \rvert] = \left<\nabla _{B_\ell}\mathcal{R}(\Theta ), \Delta \right> _F > 0$. Since $\nabla_{B_\ell} \mathcal{R}(\Theta) \geq 0$ and $\lvert B_\ell \rvert \geq 0$, a strictly positive inner product violates the KKT complementary condition $\nabla_{B_\ell} \mathcal{R} (\Theta) \odot B_\ell = 0$ unless $B_\ell = 0$.
\end{proof}

\begin{remark}
    \label{rem:early-training}
    While \Cref{thm:training-suppresses-J-debug} mostly discusses the suppression of $B_\ell$, its (i) in fact reveals a quite interesting, opposite phenomenon: \textbf{early training promotes $B_\ell$}. Before the model learns to pick up easy connected pairs, the corresponding values $\phi_\epsilon(Z_{i,j})\ll 1$. Consequently, the fractions $(1-\phi_\epsilon(Z_{i,j})) / \phi_\epsilon (Z_{i,j})$ are large, making \eqref{eq:B_ell_dynamics} negative. Gradient descent then pushes $B_\ell$ up ``without feeling pressure.'' As training proceeds, these easy connected pairs saturate ($\phi_\epsilon(Z_{i,j})\to 1$), while simultaneously $\Delta$ begins to active cross pairs (the ``moreover'' part of \Cref{lem:monotonicity-in-J}), increasing the $R=0$ term in \eqref{eq:B_ell_dynamics} and potentially flipping the sign. This is when the $J$-channel starts to incur penalty. This explains the transient ``Phase 1'' in \S\ref{ssec:training_dynamics}.
\end{remark}

\begin{remark}
    \label{rem:c8}
		The $B_\ell\otimes J_n$-channel injects rank-one dense terms $u_p u_q^\top $ into the attention core. On disconnected graphs, these terms produce cross-component positives, which the reachability target $R$ labels as negatives. Because disconnected graphs appear with positive probability in the data, the population gradient penalizes every nonnegative direction in the $J$-channel active on cross-component pairs whenever the cross-component penalty dominates within-component reward. Under the same activation and cross-component penalty dominance assumptions, any KKT stationary point must have $B_\ell = 0$. In short: under these conditions, population drives the node-side factor towards locality, i.e., $W_\ell \approx A_\ell \otimes I_n$. 
\end{remark}

\subsection{Which Samples Push Which Channel? (Local \texorpdfstring{$I_n$}{In} vs. Global \texorpdfstring{$J_n$}{Jn})}
\label{app:dynamics-2}

Recall $W_\ell = A_\ell \otimes I_n + B_\ell \otimes J_n$ and \Cref{lem:monotonicity-in-J}. The $I$-channel controls local propagation within components; the $J$-channel couples to the global / mean direction and injects dense rank-one terms. In this section, we first shift to a micro-level perspective, focusing on the effects of individual samples (graphs), and then draw connection to how the training distribution determines the model's eventual behavior (algorithmic vs. heuristic, \S \ref{ssec:training_dynamics}).

We decompose the single-sample loss $\mathcal{L}_G(\Theta ) := \mathcal{L}(\TF_\Theta ^{L}(A); R)$ and examine directional derivatives at a fixed $\Theta $, with the link gradient $\partial \mathcal{L} / \partial Z = \alpha(1 - R/ \phi_\epsilon(Z))$. Throughout, we say a pair $(i,j)$ is \textbf{saturated} if its per-pair loss gradient vanishes; for within-component pairs ($R_{i,j} = 1$) this is equivalent to $\phi_\epsilon(Z_{i,j}) = R_{i,j}$. We say a direction $\Delta$ is \textbf{active} over $(i,j)$ if the correpsonding channel directive $D_{i,j}>0$, where $D$ denotes $\frac{\partial Z}{\partial A_\ell} [\Delta]$ or $\frac{\partial Z}{\partial B_\ell} [\Delta]$ as appropriate. 

Our first main result is the following Theorem, which intuitively claims two things:

\begin{itemize}
    \item \textit{(Within capacity) Small-diameter graphs ``reward'' the local $I$-channel and, if disconnected, penalizes the global $J$-channel if activated.}
    \item \textit{(Beyond capacity) Large-diameter connected graphs demand a global shortcut: the $J$-channel is promoted, while the $I$-channel remains confined to short-range corrections.}
\end{itemize}

\vspace{5pt}

\begin{theorem}[Per-sample pushes by diameter]
		\label{thm:dynamics}
		Fix a layer $\ell$ and nonnegative directions $\Delta_A, \Delta_B \geq 0$ for $A_\ell, B_\ell$, respectively. Assume $B_1 = \hdots = B_L = 0$. 
		\begin{itemize}
				\item [(i)] (Within capacity) If $\mathrm{diam} (G) \leq 3^{L}$, then $D_{A_\ell} \mathcal{L}_G(\Theta ) [\Delta_A] \leq 0$, with strict $<0$ whenever $\Delta_A$ is active on at least one unsaturated within-component pair. If, in addition, $G$ is disconnected, then $D_{B_\ell}\mathcal{L}_G(\Theta ) [\Delta_B] > 0$ if both of the following hold: $\Delta_B$ is active at at least one cross-component pair, and Population-Level Dominance Condition holds.
						
				\item [(ii)] (Beyond capacity) If $\mathrm{diam} (G) > 3^{L}$ and $G$ is connected, then we have $D_{A_\ell} \mathcal{L}_G(\Theta ) [\Delta_A] \leq 0$ where only within-capacity pairs can contribute, and $D_{B_\ell} \mathcal{L}_G(\Theta ) [\Delta_B] < 0$ for $\Delta_B$ that is active on at least one unsaturated pair.
		\end{itemize}
\end{theorem}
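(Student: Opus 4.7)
The plan is to exploit the assumption $B_1 = \cdots = B_L = 0$, which reduces the model at the base point to the pure $I_n$-channel, and then combine the single-sample version of the gradient decomposition \eqref{eq:R-decomp} with a locality property of the $A$-direction perturbation. The two structural facts I would establish first are: (a) both $D_A := \tfrac{\partial Z}{\partial A_\ell}[\Delta_A]$ and $D_B := \tfrac{\partial Z}{\partial B_\ell}[\Delta_B]$ are entrywise nonnegative, which is the direct extension of \Cref{lem:monotonicity-in-J} since with $B=0$ every pre-ReLU score at the base point is already nonnegative and monotone propagation carries through; and (b) $(D_A)_{i,j}>0$ only when $d_G(i,j) \leq 3^L$, i.e.\ $A$-direction perturbations do not break the capacity radius. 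Once (a)--(b) are in place, the signed sum
\[
D_{A_\ell}\mathcal{L}_G(\Theta)[\Delta_A] \;=\; \alpha \sum_{i,j} \Big(1 - \tfrac{R_{i,j}}{\phi_\epsilon(Z_{i,j})}\Big)(D_A)_{i,j},
\]
together with the analogous formula for the $B$-direction, would control both cases by a sign analysis.

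For case (i), when $\diam(G) \leq 3^L$, fact (b) together with the fact that within-capacity pairs are automatically connected implies every index with $(D_A)_{i,j}>0$ has $R_{i,j}=1$, so each summand's coefficient $1-1/\phi_\epsilon(Z_{i,j}) \leq 0$. This yields $D_{A_\ell}\mathcal{L}_G \leq 0$ with strict inequality precisely when $\Delta_A$ activates an unsaturated within-component pair. For the $B$-direction when $G$ is additionally disconnected, \Cref{lem:monotonicity-in-J} guarantees that $\Delta_B$ can activate cross-component pairs, and applying \eqref{eq:R-decomp} to the single sample splits the derivative into a positive cross-component penalty and a negative within-component reward; the stated dominance hypothesis then gives $D_{B_\ell}\mathcal{L}_G[\Delta_B] > 0$.

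Case (ii) would be the mirror argument. Since $G$ is connected, $R_{i,j}=1$ for every pair in its vertex set, so the cross-component penalty term in the single-sample \eqref{eq:R-decomp} vanishes for both channels. Fact (b) restricts the $A$-derivative to within-capacity pairs where the coefficient $1-1/\phi_\epsilon(Z_{i,j}) \leq 0$, giving $D_{A_\ell}\mathcal{L}_G \leq 0$ with only within-capacity pairs contributing. For the $B$-derivative, the surviving within-component reward term is nonpositive entrywise and strictly negative on any unsaturated active pair; crucially, every beyond-capacity pair has $Z_{i,j}=0$ at the base point since the pure $I$-channel cannot reach past distance $3^L$ by \Cref{thm:expressivity}, so such pairs are maximally unsaturated and account for the bulk of the strict negativity whenever $\Delta_B$ spreads the $J$-channel's dense rank-one mass across them.

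The hard part will be verifying fact (b), the $3^L$-locality of $D_A$, which is not directly contained in \Cref{thm:expressivity} or the proof of \Cref{thm:capacity} since those concern only the base forward pass. My plan is to mirror the column-label induction of \Cref{lem:no-teleportation}: a perturbation $\Delta_A \otimes I_n$ at layer $\ell$ remains in the $I_n$-algebra and therefore can still be tracked via one label per column; at each subsequent layer the perturbation passes through one bilinear composition against the nonnegative hidden states, expanding the reachable radius by at most a factor of three per layer, and hence by at most a factor of $3^{L-\ell+1}\leq 3^L$ in total. Once this radius bookkeeping is carried out the remaining sign analysis is essentially a direct reuse of Theorem \ref{thm:training-suppresses-J}(i) applied to a single graph rather than the population.
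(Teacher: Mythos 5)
Your proposal is essentially correct and follows the same broad route the paper takes (chain-rule decomposition of $D_{(\cdot)}\mathcal L_G[\Delta]$ against the link gradient $\alpha(1-R/\phi_\epsilon(Z))$, monotonicity/nonnegativity from \Cref{lem:monotonicity-in-J}, and the single-sample form of \eqref{eq:R-decomp} for the $J$-channel sign). One point worth flagging is that for the $A$-direction sign in case (i) you work harder than necessary: your fact (b), the $3^L$-locality of $D_A$, is a genuine refinement that requires its own argument (you correctly anticipate this and propose mirroring \Cref{lem:no-teleportation}'s column-label induction). But the paper's \Cref{lem:local-channel-always-helps} uses a strictly weaker and more elementary observation: with $B_1=\cdots=B_L=0$ the hidden states, and hence the $I$-channel perturbation $\sum_{p,q}\Delta_{p,q}X_pX_q^\top$, are block-diagonal with respect to the connected-component partition, so $D_A$ is supported only on within-component pairs where $R_{i,j}=1$. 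This already yields $D_{A_\ell}\mathcal L_G[\Delta_A]\le 0$ for \emph{any} graph, not just $\diam(G)\le 3^L$, without the radius bookkeeping. The $3^L$-locality is needed only for the additional claim in (ii) that ``only within-capacity pairs can contribute,'' where the paper invokes \Cref{lem:A-B-function}; and there a cleaner route than a fresh induction is to note that $A_\ell+\delta\Delta_A$ for $\delta\ge 0$ remains nonnegative and in the $I$-channel, so \Cref{lem:A-B-function} applies to the perturbed forward pass and passes to the one-sided derivative. Finally, a small misattribution: the claim that beyond-capacity pairs have $Z_{i,j}=0$ under the pure $I$-channel follows from the capacity argument (\Cref{lem:A-B-function}, in turn built on \Cref{lem:no-teleportation}), not from \Cref{thm:expressivity}, which asserts the opposite direction (what the model \emph{can} reach).
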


\vspace{5pt} 

To prove this Theorem, we split the argument into the following four lemmas, each isolating one ingredient of the dynamics. Firstly, \Cref{lem:local-channel-always-helps} shows that the local $I$-channel is monotone: any nonnegative $A_{\ell}$ cannot increase the loss and is strictly helpful on unsaturated within-component pairs. This lets us treat local corrections as ``harmless,'' while \Cref{lem:help-hurt} analyze the sign of the global $J$-channel (connected vs. disconnected), and \Cref{lem:A-B-function} determines which pairs are ever affected when $B = 0$. Together, they yield the two cases in \Cref{thm:dynamics}.

\vspace{5pt}

\begin{lemma}[Local channel always helps]	
		\label{lem:local-channel-always-helps}
		Assume $B_1 =\hdots = B_L = 0$. For any graph $G$, any layer $\ell$, and any direction $\Delta \geq 0$ in the $I$-channel,
		\begin{equation}
				D_{A_\ell} \mathcal{L}_G(\Theta ) [\Delta] = \left<\frac{\partial \mathcal{L}}{\partial Z} , \frac{\partial Z}{\partial A_\ell} [\Delta] \right> _F \leq 0,
		\end{equation}
		with strict inequality whenever there exists a within-component, unsaturated pair, on which $\Delta$ is active. 
\end{lemma}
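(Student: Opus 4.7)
The plan is to combine a sign analysis of the loss gradient with a locality property of the $I$-channel. Concretely, I would first observe that $\frac{\partial\mathcal{L}}{\partial Z} = \alpha(1 - R/\phi(Z))$ has entrywise sign determined by $R$: on within-component pairs ($R_{i,j}=1$) it equals $\alpha(1 - 1/\phi(Z_{i,j})) \le 0$, and on cross-component pairs ($R_{i,j}=0$) it equals the constant $\alpha > 0$. Second, I would show that under the hypothesis $B_1=\cdots=B_L=0$, both the output $Z$ and the one-sided directional derivative $D := \frac{\partial Z}{\partial A_\ell}[\Delta]$ are entrywise nonnegative and supported only on within-component pairs. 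The desired inequality then follows by splitting the Frobenius inner product: cross-component terms vanish because $D$ does, and each within-component term is a product of a nonpositive gradient entry with a nonnegative entry of $D$.

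The locality claim is the heart of the argument and mirrors the construction in \Cref{thm:expressivity}. With all $B_\ell=0$, the layer score simplifies from \eqref{eq:heuristics} to $h_{\ell-1}W_\ell h_{\ell-1}^\top = \sum_{p,q}(A_\ell)_{p,q}X_pX_q^\top$, eliminating the rank-one $u_pu_q^\top$ terms. Since $h_0 = [I_n \mid A]$ and ReLU is inactive under nonnegativity, induction on $\ell$ shows that every $n\times n$ block of $h_\ell$ is a nonnegative polynomial in $A$ whose coefficients are determined by $A_1,\ldots,A_\ell$; because entries of $A^k$ vanish between nodes in different components, so do the entries of every block of $h_\ell$, and hence of $Z = h_L W_O^\top$. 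For $D$, I would rerun the same induction along the perturbation $A_\ell \mapsto A_\ell + \delta\Delta$ with $\Delta\ge 0$: the first-order term at each layer is a sum of products of the same block-polynomials-in-$A$ with nonnegative coefficients, so by the $I$-channel analog of \Cref{lem:monotonicity-in-J} it is entrywise nonnegative and supported on within-component pairs.

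For the strict version, the hypothesis provides an unsaturated within-component pair $(i,j)$ on which $\Delta$ is active, meaning $\phi(Z_{i,j}) < 1$ and $D_{i,j} > 0$. Then $(\partial\mathcal{L}/\partial Z)_{i,j} = \alpha(1 - 1/\phi(Z_{i,j})) < 0$, so that coordinate contributes a strictly negative summand while every other coordinate remains $\le 0$, giving $D_{A_\ell}\mathcal{L}_G(\Theta)[\Delta] < 0$.

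The main obstacle I anticipate is making the locality claim rigorous for the Fr\'echet derivative $D$, rather than just for $Z$. I would handle this by noting that under nonnegativity with $B=0$, the map $(A_1,\ldots,A_L)\mapsto Z$ is a multilinear polynomial in the $A_m$'s whose monomials are nonnegative scalar multiples of ordered products of powers of $A$; differentiating such a polynomial in any direction $\Delta \geq 0$ in the $A_\ell$-coordinate yields another polynomial of the same kind, which preserves both entrywise nonnegativity and the within-component support. This structural observation collapses the Fr\'echet-derivative bookkeeping into the same induction already used for $Z$.
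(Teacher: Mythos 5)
Your proof takes essentially the same route as the paper's: restrict the support of $D = \frac{\partial Z}{\partial A_\ell}[\Delta]$ to within-component pairs via the block decomposition $\sum_{p,q}\Delta_{p,q}X_pX_q^\top$ of the $I$-channel (with $B = 0$), observe that $\partial\mathcal{L}/\partial Z$ is entrywise nonpositive on those pairs since $R_{i,j}=1$ there, use entrywise nonnegativity of $D$ from \Cref{lem:monotonicity-in-J}, and pair the two signs in the Frobenius inner product. The strict-inequality argument also matches.

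One small imprecision: you claim the map $(A_1,\ldots,A_L)\mapsto Z$ is \emph{multilinear} in the $A_m$'s. It is not — $h_{\ell+1}$ is cubic in $h_\ell$, so $Z$ depends on $A_m$ with degree roughly $3^{L-m}$. What you actually need (and what your argument really uses) is weaker and correct: under nonnegative parameters with $B=0$, every $n\times n$ block of every $h_\ell$ is a polynomial in $A$ with nonnegative coefficients, so the entries vanish between components, and differentiating in a nonnegative direction $\Delta$ preserves both the nonnegativity and the component-block support. Replacing ``multilinear polynomial'' with ``polynomial with nonnegative coefficients'' fixes the statement without changing the argument.
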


\begin{proof}
		From the block decomposition from \eqref{eq:heuristics}, the $I$-channel contributes $\sum_{p,q}^{} \Delta_{p,q} X_p X_q^\top.$, which is block-diagonal with respect to the component partition. Hence $\frac{\partial Z}{\partial A_\ell} [\Delta]$ has support only on pairs $(i,j)$ in the same component. On those pairs, $R_{i,j}=1$, and thus 
		\[
				\left( \frac{\partial \mathcal{L}}{\partial Z}  \right) _{i,j} = \alpha \cdot  \left( 1 - \frac{1}{\phi_\epsilon(Z_{i,j})} \right)  = -\alpha \cdot  \frac{1-\phi_\epsilon(Z_{i,j})}{\phi_\epsilon(Z_{i,j})} \leq  0,
		\] 
		with strict negativity whenever $\phi_\epsilon(Z_{i,j}) < 1$. Entrywise, nonnegativity of the forward map (\Cref{lem:monotonicity-in-J}) gives $\frac{\partial Z}{\partial A_\ell} [\Delta] \geq 0$. Therefore the Frobenius inner product $\leq 0$, and $<0$ under the stated conditions.
\end{proof}

\vspace{5pt}

We now switch from the local $I$-channel to the global $J$-channel and will use that the forward sensitivity in the $J$-channel is entrywise nonnegative, so the sign of the directional derivative is controlled entirely by the per-pair loss gradient. 

\vspace{5pt}

\begin{lemma}[Global channel helps connected graphs and conditionally hurts disconnected graphs]
		\label{lem:help-hurt}
		Fix a layer $\ell$ and a nonnegative direction $\Delta \geq 0$ in the $J$-channel. 

		\begin{itemize}
				\item [(i)] If $G$ is connected, then $D_{B_\ell}\mathcal{L}_G(\Theta ) \leq 0$, with strict $<0$ whenever there exists an unsaturated pair $(i,j)$ (i.e., $\phi_\epsilon(Z_{i,j}) < 1$) on which $\Delta$ is active ($D_{i,j} > 0)$.
				\item [(ii)] If $G$ is disconnected, then 
						\begin{equation}
								D_{B_{\ell}} \mathcal{L}_G(\Theta ) [\Delta] = \alpha \cdot  \Bigg[ \sum_{R_{i,j}=0}^{} D_{i,j} - \sum_{R_{i,j}=1}^{} \frac{1-\phi_\epsilon(Z_{i,j})}{\phi_\epsilon(Z_{i,j})} D_{i,j} \Bigg],
						\end{equation}
						hence $D_{B_\ell}\mathcal{L}_G(\Theta )[\Delta] \geq 0$ whenever the Population-Level Dominance Condition holds. Strict $>0$ holds if the inequality is strict, \textit{and} $\Delta$ is active on at least one cross pair.
		\end{itemize}
\end{lemma}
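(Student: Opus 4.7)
The plan is to reduce both claims to a direct sign analysis of the Frobenius inner product
$$D_{B_\ell}\mathcal{L}_G(\Theta)[\Delta] = \Big\langle \tfrac{\partial \mathcal{L}}{\partial Z},\, D\Big\rangle_F, \qquad D := \tfrac{\partial Z}{\partial B_\ell}[\Delta],$$
by combining the explicit per-entry gradient formula $\partial \mathcal{L}/\partial Z = \alpha(1 - R/\phi_\epsilon(Z))$ from \Cref{asm:training_dynamics} with the nonnegativity $D \geq 0$ entrywise, which is exactly the conclusion of the monotonicity Lemma~\ref{lem:monotonicity-in-J} applied to the $J$-channel direction $\Delta \geq 0$. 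The two cases then follow from partitioning the index set according to the value of $R_{i,j}$ and reading off signs.

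For part (i), when $G$ is connected we have $R_{i,j} = 1$ for every pair $(i,j)$, so every entry of the loss gradient collapses to $(\partial\mathcal{L}/\partial Z)_{i,j} = -\alpha\,(1-\phi_\epsilon(Z_{i,j}))/\phi_\epsilon(Z_{i,j}) \leq 0$, with strict negativity exactly when $\phi_\epsilon(Z_{i,j}) < 1$, i.e. when $(i,j)$ is unsaturated. Pairing this nonpositive matrix with the nonnegative $D$ in the Frobenius inner product yields $D_{B_\ell}\mathcal{L}_G(\Theta)[\Delta] \leq 0$, and the inequality is strict whenever the nonnegative entries of $D$ meet a strictly negative entry of the loss gradient — i.e. whenever $\Delta$ is active on at least one unsaturated pair.

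For part (ii), when $G$ is disconnected I split the index set into cross-component pairs ($R_{i,j}=0$), on which $(\partial\mathcal{L}/\partial Z)_{i,j} = \alpha$, and within-component pairs ($R_{i,j}=1$), on which $(\partial\mathcal{L}/\partial Z)_{i,j} = -\alpha(1-\phi_\epsilon(Z_{i,j}))/\phi_\epsilon(Z_{i,j})$. Substituting these two formulas into the Frobenius pairing yields exactly the displayed identity
$$D_{B_\ell}\mathcal{L}_G(\Theta)[\Delta] = \alpha\Bigg[\sum_{R_{i,j}=0} D_{i,j} - \sum_{R_{i,j}=1} \frac{1-\phi_\epsilon(Z_{i,j})}{\phi_\epsilon(Z_{i,j})} D_{i,j}\Bigg].$$
Since $D \geq 0$, the sign of this quantity is simply the sign of the bracket. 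Nonnegativity of the directional derivative then holds under the stated dominance hypothesis, and strictness follows from the same argument: if the dominance is strict and some cross pair has $D_{i,j}>0$, then the first sum strictly exceeds the second and the whole expression is strictly positive.

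There is really no hard step here — the entire argument is mechanical once Lemma~\ref{lem:monotonicity-in-J} is in hand. The only mild subtlety is making sure that "unsaturated" in (i) is matched correctly to "$\phi_\epsilon(Z_{i,j})<1$" rather than to the raw $\phi$, which is why \Cref{asm:training_dynamics} uses the $\epsilon$-regularized link; with $\phi_\epsilon$ in place the derivative $\partial\mathcal{L}/\partial Z$ is well-defined everywhere and the strict-inequality case is unambiguous.
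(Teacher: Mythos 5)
Your proof is correct and follows the paper's own argument essentially line by line: express the directional derivative as the Frobenius pairing of $\partial\mathcal{L}/\partial Z$ with $D$, invoke Lemma~\ref{lem:monotonicity-in-J} for $D\ge 0$, and split on the value of $R_{i,j}$. The only addition is your closing remark about $\phi_\epsilon$ ensuring well-definedness, which is a harmless elaboration of what the paper delegates to the footnote in \Cref{asm:training_dynamics}.
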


\begin{proof}{}{}
		By the chain rule, 
		\begin{equation}
				D_{B_\ell} \mathcal{L}_G(\Theta )[\Delta] = \left<\frac{\partial \mathcal{L}}{\partial Z} , \frac{\partial Z}{\partial B_\ell} [\Delta] \right> _{F} = \left<\alpha \cdot \left( 1- \frac{R}{\phi_\epsilon(Z)} \right), D\right> _F.
				\label{eq:chain-rule}
		\end{equation}
		By \Cref{lem:monotonicity-in-J} , $D\geq 0$ entrywise; moreover, $D_{i,j} > 0$ exactly on pairs where $\Delta$ is active. 

		\begin{itemize}
				\item [(i)] If $G$ is connected, then the reachability matrix $R$ is all-ones. Hence $\frac{\partial \mathcal{L}}{\partial Z} = - \alpha (1-\phi_\epsilon(Z)) / \phi_\epsilon(Z) \leq 0$ entrywise, with strict negativity whenever $\phi_\epsilon(Z_{i,j}) < 1$. Pairing with $D\geq 0$ and $D_{i,j}>0$ on active pairs gives $D_{B_\ell} \mathcal{L}_G(\Theta ) [\Delta] \leq 0$ and strict $<0$ under the stated saturation / activation conditions.
				\item [(ii)] If $G$ is disconnected, split \eqref{eq:chain-rule} over $R_{i,j}=0$ and $R_{i,j}=1$ to obtain the displayed identity. Since $D\geq 0$, the stated dominance condition yields $\geq 0$. Strictness requires a cross pair with $D_{i,j}>0$, holds exactly when $\Delta$ is active on at least one cross-component pair. \qedhere
		\end{itemize}
\end{proof}

\vspace{5pt}

We now show that when the global $J$-channel is disabled, the model can only light up within-capacity pairs. Note this is somewhat a converse to \Cref{thm:good-model-parameterization}, where a ``good'' model that only lights up within-capacity pairs necessarily have each $W_\ell[r,s]$ diagonal. The following Lemma isolates the role of the $J$-channel as the only ``nontrivial'' shortcut.

Recall from \Cref{def:capacity_dichotomy}: for a depth $L$ and a graph $G$ with adjacency matrix $A$, we call a pair $(i,j)$ \textbf{within capacity} if $[A^{3^{L}}]_{i,j} > 0$ and \textbf{beyond capacity} otherwise. 

\begin{lemma}[$I$-channel reaches within-capacity pairs; $J$-channel is the only dense shortcut]
		\label{lem:A-B-function}
		At any $\Theta $ with $B_1 = \hdots = B_L = 0$, the output satisfies 
		\[
				Z_{i,j} > 0 \implies  [A^{3^{L}}]_{i,j} > 0.
		\] 
		Equivalently, beyond-capacity pairs receive no positive mass from the $I$-channel alone. In contrast, for any $\ell$ and any $\Delta \geq 0$ in the $J$-channel, $\frac{\partial Z}{\partial B_\ell} [\Delta] \geq 0$ and is strictly positive on active pairs by definition.
\end{lemma}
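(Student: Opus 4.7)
The plan is to leverage the no-false-positives case of the capacity proof (\Cref{lem:no-teleportation}) almost verbatim, after verifying that setting $B_1=\dots=B_L=0$ puts us in exactly that regime. With $B_\ell=0$, the parameterization from \Cref{thm:w_decomposition} reduces to $W_\ell = A_\ell \otimes I_n$, so every $n\times n$ sub-block $W_\ell[r,s] = (A_\ell)_{r,s}\, I_n$ is diagonal and entrywise nonnegative. Under the decomposition \eqref{eq:heuristics}, the attention core becomes
\[
    h_{\ell-1} W_\ell h_{\ell-1}^\top \;=\; \sum_{p,q} (A_\ell)_{p,q}\, X_p X_q^\top,
\]
a nonnegative combination of outer products. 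I would then argue by induction on $\ell$ that each block $X_p^{(\ell)}$ of $h_\ell$ is block-diagonal with respect to the connected-component partition of $G$: the base case $h_0 = [I_n\mid A]$ is clearly block-diagonal, and since each $X_p^{(\ell-1)}$ restricts to a single component, so does each outer product $X_p X_q^\top$, and hence so does $R_\ell$ and the appended block of $h_\ell$. In particular, $R_\ell$ has \emph{no} positive entry across components at any layer, so the hypothesis of \Cref{lem:no-teleportation} is met for every graph.

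Having verified the no-false-positives assumption, the distance-propagation claim of \Cref{lem:no-teleportation} applies directly: any positive output entry $Z_{i,j} = \TF_\Theta^L(A)_{i,j} > 0$ forces $d_G(i,j) \le 3^L$. By the self-loop convention on $A$, this is equivalent to $[A^{3^L}]_{i,j} > 0$, i.e., $(i,j)$ is within capacity as in \Cref{def:capacity_dichotomy}. This gives the first implication of the lemma with no further work.

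For the second assertion, the nonnegativity statement $\partial Z/\partial B_\ell[\Delta]\ge 0$ is exactly the content of \Cref{lem:monotonicity-in-J}, and strict positivity on active pairs is the definition of active combined with the monotone forward propagation lemma \Cref{lem:monotonicity-in-J} shows through subsequent layers. So both halves reduce to previously established results; the only genuine content here is the block-diagonal induction in the first paragraph.

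The main obstacle I would expect is the bookkeeping in the block-diagonal induction: one has to be careful that the concatenation $h_\ell = [h_{\ell-1}\mid \mathrm{Attn}(h_{\ell-1}; W_\ell)]$ really preserves the block-diagonal (componentwise) structure of every $n\times n$ slice, and that the $1/n$ normalization and $\mathrm{ReLU}$ (which is inactive under nonnegativity) do not disturb it. Once that is cleanly written, the lemma is an immediate corollary of \Cref{lem:no-teleportation} and \Cref{lem:monotonicity-in-J}, with no new estimates required.
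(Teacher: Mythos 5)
Your proposal is correct and follows essentially the same route as the paper's own proof: reduce to $h_{\ell-1} W_\ell h_{\ell-1}^\top = \sum_{p,q}(A_\ell)_{p,q} X_p X_q^\top$, observe the resulting block-diagonality with respect to connected components so that the hypothesis of Lemma~\ref{lem:no-teleportation} holds for every graph, invoke that lemma to bound support spread by $3^L$, and defer the $J$-channel nonnegativity/activation claim to Lemma~\ref{lem:monotonicity-in-J}. The only difference is that you spell out the block-diagonality of each slice $X_p^{(\ell)}$ via an explicit induction on $\ell$, which the paper compresses into ``it is easy to see''; that is a welcome bit of added rigor, not a divergence in method.
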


\begin{proof}
		Since $B_\ell = 0$ implies $h_{\ell-1} W_\ell h_{\ell-1}^\top = \sum_{p,q} (A_\ell)_{p,q} X_p X_q^\top$ from \Cref{lem:monotonicity-in-J}, it is easy to see that they are block-diagonal w.r.t. connected components. Hence \Cref{lem:no-teleportation} applies and support expands by at most a factor of $3$ per layer, and only within-capacity pairs receive mass. The density statement follows from \Cref{lem:monotonicity-in-J}: For any $\Delta \geq 0$ in the $J$-channel, we have $\frac{\partial Z}{\partial B_\ell}  = \sum_{p,q}^{} \Delta_{p,q} u_p u_q^\top u \geq 0$. The strict positiveness characterization follows directly from \Cref{lem:monotonicity-in-J}.
\end{proof}

\vspace{5pt}

With the previous lemmas established, we can now assemble the per-sample sign rules. Intuitively, the $I$-channel makes only local corrections, never hurting the loss and only touching within-capacity pairs when $B = 0$, while the $J$-channel is the sole dense shortcut, helpful on connected graphs but penalized by cross-component pairs when the graph is disconnected.

\vspace{5pt}

\begin{proof}[Proof of Theorem \ref{thm:dynamics}]
		Let $\ell, \Delta_{A}, \Delta_B$ be given as described. Set $D_A = \frac{\partial Z}{\partial A_\ell} [\Delta_A]$ and $D_B = \frac{\partial Z}{\partial B_\ell} [\Delta_B]$. Recall from chain rule
		\[
				D_{(\boldsymbol\cdot )} \mathcal{L}_G(\Theta ) [\boldsymbol\cdot ] = \left<\frac{\partial \mathcal{L}}{\partial Z} , \frac{\partial Z}{\partial (\boldsymbol\cdot )} [\boldsymbol\cdot ] \right> _F = \alpha\cdot  \left<1 - \frac{R}{\phi_\epsilon(Z)}, \frac{\partial Z}{\partial (\boldsymbol\cdot )} [\boldsymbol\cdot ] \right> _F.
		\] 
		\begin{itemize}
				\item [(i)] (Within capacity) By \Cref{lem:local-channel-always-helps}, for any $\Delta_A \geq 0$ the $I$-channel directional derivative is $\leq 0$, with strict inequality under the stated conditions. The result on disconnected graphs $G$ follows from \Cref{lem:help-hurt}.
				\item [(ii)] By \Cref{lem:A-B-function}, with $B = 0$, only within-capacity pairs can be affected by the $I$-channel, so \Cref{lem:local-channel-always-helps} gives $D_{A_\ell}\mathcal{L}_G(\Theta ) [\Delta_A] \leq 0$. Since $G$ is connected and $\mathrm{diam} (G) > 3^{L}$, there will be unsaturated pairs; then \Cref{lem:help-hurt}(i) yields $D_{B_\ell} \mathcal{L}_G(\Theta ) [\Delta_B] < 0$, as claimed. \qedhere 
		\end{itemize}
\end{proof}

\begin{remark}[Population-level consequence under $\mathrm{ER} (n,p)$]
	\label{rmk:population-push}
	Fix a layer $\ell$ and nonnegative directions $\Delta_A, \Delta_B \geq 0$. Partition the graphs into $\mathcal{G}_0 = \{G: \mathrm{diam} (G) \leq 3^{L}\}$ and $\mathcal{G}_1 = \{G: \mathrm{diam} (G) > 3^{L}\}$. Writing the population directional derivatives as mixtures,
	\begin{equation}
		D_{B_\ell} \mathcal{R}(\Theta ) [\Delta_B] = \mathbb{P}(\mathcal{G}_0) \mathbb{E}[D_{B_\ell} \mathcal{L}_G(\Theta ) [\Delta_B]\mid G\in \mathcal{G}_0] + \mathbb{P}(\mathcal{G}_1) \mathbb{E} [D_{B_\ell} \mathcal{L}_G(\Theta ) [\Delta_B] \mid G\in \mathcal{G}_1].
	\end{equation}
	We claim the following on the population gradient.
	
	\begin{itemize}
		\item [(i)] (Local) {From \Cref{lem:local-channel-always-helps}, once the global $J$-channel has been suppressed, the local $I$-channel is consistently promoted until saturation. }				
		\item [(ii)] (Global) {The population gradient along the global $J$-channel is an explicit mixture of two regimes: large, connected graphs beyond capacity that promote the $J$-channel, and small, disconnected graphs within capacity that suppress it whenever cross-component errors persist.} Formally: 
        \vspace{5pt}
		\begin{itemize}
			\item [(ii.a)] If $G$ is connected and $\mathrm{diam} (G) > 3^{L}$, then by \Cref{lem:A-B-function}, every beyond-capacity pair has $Z_{ij}=0$ while $R_{ij}=1$. For those pairs, we have $\partial \mathcal{L} / \partial Z = -\alpha (1-\phi_\epsilon(Z)) / \phi_\epsilon(Z) < 0$. By \Cref{lem:monotonicity-in-J}, the inner product $\left<\partial \mathcal{L} / \partial Z, \partial Z / \partial B_\ell [\Delta_B] \right> _F < 0$ too. Integrating over all beyond-capacity, connected graphs yields
			\begin{equation}
				\mathbb{E} [D_{B_\ell} \mathcal{L}_G (\Theta ) [\Delta_B] \mid G\in \mathcal{G}_1 \ \textit{\&}\  G \text{ connected}] < 0.
			\end{equation}
			\item[(ii.b)] If $G$ is disconnected and $\mathrm{diam}(G)\le 3^L$, then by \Cref{lem:help-hurt}, $D_{B_\ell}\,\mathcal{L}_G(\Theta)[\Delta_B]\ \ge\ 0$
			with strict $>\!0$ if cross-component errors persist (the $\sum_{R=0} D$ term strictly dominates the $\sum_{R=1} (1-\phi_\epsilon(Z)) / \phi_\epsilon(Z) \cdot D$ term), and if $\Delta_B$ is {active on cross pairs} (i.e.\ $D^{(B)}_{ij}>0$ for some $R_{ij}=0$). The latter holds by \Cref{lem:monotonicity-in-J} if $\Delta_B$ is active on at least one cross pair. Integrating thus yields
			\begin{equation}
				\mathbb{E} \big[D_{B_\ell}\,\mathcal{L}_G(\Theta)[\Delta_B]\ \big|\ G\in\mathcal{G}_0\ \textit{\&}\ G\ \text{disconnected}\big]\ \geq\ 0,
			\end{equation}
			and strictly $>\!0$ provided the two additional assumptions above. 
		\end{itemize}
		
	\end{itemize}
\end{remark}

\subsection{Convergence of Projected Gradient Descent to KKT Points}
\label{app:convergence}

In this subsection, we establish that projected gradient descent on the regularized population risk converges to points satisfying the Karush-Kuhn-Tucker (KKT) conditions. This justifies the stationarity assumption underlying \Cref{thm:training-suppresses-J-debug}. We begin by stating the main result.

\begin{reptheorem}{\ref{thm:convergence}}
\label{thm:convergence_appendix}
Let $\mathcal{R}(\Theta) := \mathbb{E}_{G \sim \ER(n,p)}[\mathcal{L}(\TF_{\Theta}^{L}(A_G); R_G)]$ denote the population risk. For $\lambda > 0$, define the regularized objective $\mathcal{R}_\lambda(\Theta) := \mathcal{R}(\Theta) + \frac{\lambda}{2}\|\Theta\|_F^2$. Let $\mathcal{C} := \{(A_\ell, B_\ell)_{\ell} : A_\ell \ge 0, B_\ell \ge 0, \forall \ell\}$ denote the constraint set, and consider the sequence $\{\Theta^{(k)}\}_{k \ge 0}$ generated by projected gradient descent on $\mathcal{R}_\lambda$:
\begin{equation}
    \Theta^{(k+1)} = \Pi_{\mathcal{C}}\left(\Theta^{(k)} - \eta \nabla \mathcal{R}_\lambda(\Theta^{(k)})\right),
\end{equation}
with step size $\eta > 0$ sufficiently small and initialization $\Theta^{(0)} \in \mathcal{C}$ of the form $W_\ell = A_\ell \otimes I + B_\ell \otimes J$. Then every limit point $\Theta^*_\lambda \in \mathcal{C}$ satisfies the KKT conditions:
\begin{equation}
\label{eq:kkt_main}
    \nabla_{B_\ell} \mathcal{R}(\Theta^*_\lambda) + \lambda B_\ell^* \ge 0, \quad B_\ell^* \ge 0, \quad \left(\nabla_{B_\ell} \mathcal{R}(\Theta^*_\lambda) + \lambda B_\ell^*\right) \odot B_\ell^* = 0,
\end{equation}
and analogously for $A_\ell^*$. Moreover, after $K$ iterations, there exists $k < K$ such that $\Theta^{(k)}$ is an $\epsilon$-approximate KKT point with $\epsilon = O(1/K)$.
\end{reptheorem}

The proof adapts the standard convergence analysis for gradient descent on smooth nonconvex functions \citep{Nesterov2004IntroductoryLO} to the projected setting. The argument proceeds in three stages: we first introduce the gradient mapping as the appropriate measure of stationarity for constrained problems, then establish a sufficient decrease property for each iteration, and finally combine these ingredients via a telescoping argument to obtain the convergence rate.

\subsubsection{Preliminaries}

We work in the decomposed parameter space $\Theta = (A_\ell, B_\ell)_\ell$, where $W_\ell = A_\ell \otimes I + B_\ell \otimes J$ as in \Cref{asm:training_dynamics}. By \Cref{thm:population-gradient}, if the initialization lies in this subalgebra, then the gradient $\nabla \mathcal{R}_\lambda(\Theta)$ also decomposes as a direct sum over the $(A_\ell, B_\ell)$ components, and projected gradient descent preserves this structure. The constraint set $\mathcal{C} = \{(A_\ell, B_\ell)_\ell : A_\ell \ge 0, B_\ell \ge 0\}$ is the non-negative orthant in this decomposition, and the projection $\Pi_{\mathcal{C}}$ acts component-wise as $\Pi_{\mathcal{C}}(\Theta) = \max\{0, \Theta\}$.

Recall that a point $\Theta^* \in \mathcal{C}$ satisfies the KKT conditions for minimizing $\mathcal{R}_\lambda$ over $\mathcal{C}$ if
\begin{equation}
    \nabla \mathcal{R}_\lambda(\Theta^*) \ge 0, \quad \Theta^* \ge 0, \quad \text{and} \quad \nabla \mathcal{R}_\lambda(\Theta^*) \odot \Theta^* = 0,
\end{equation}
where $\odot$ denotes the Hadamard product and inequalities hold component-wise. These conditions assert that interior components (where $\Theta^* > 0$) have vanishing gradient, while boundary components (where $\Theta^* = 0$) have non-negative gradient pointing outward from the feasible region.

In unconstrained optimization, the gradient norm $\|\nabla f(\Theta)\|$ measures proximity to stationarity. For constrained problems, the appropriate generalization is the gradient mapping.

\begin{definition}[Gradient Mapping]
\label{def:gradient_mapping}
For step size $\eta > 0$, the gradient mapping at $\Theta \in \mathcal{C}$ is
\begin{equation}
    G_\eta(\Theta) := \frac{1}{\eta}\left(\Theta - \Pi_{\mathcal{C}}\left(\Theta - \eta \nabla \mathcal{R}_\lambda(\Theta)\right)\right).
\end{equation}
\end{definition}

The gradient mapping quantifies the displacement induced by a projected gradient step: the update rule can be written as $\Theta^{(k+1)} = \Theta^{(k)} - \eta G_\eta(\Theta^{(k)})$. When $\mathcal{C}$ is unconstrained, the projection is the identity and $G_\eta(\Theta) = \nabla \mathcal{R}_\lambda(\Theta)$. The following lemma confirms that the gradient mapping vanishes precisely at KKT points.

\begin{lemma}
\label{lem:gradient_mapping_kkt}
For any $\eta > 0$, a point $\Theta^* \in \mathcal{C}$ satisfies $G_\eta(\Theta^*) = 0$ if and only if $\Theta^*$ is a KKT point.
\end{lemma}
\begin{proof}
The condition $G_\eta(\Theta^*) = 0$ is equivalent to $\Theta^* = \Pi_{\mathcal{C}}(\Theta^* - \eta \nabla \mathcal{R}_\lambda(\Theta^*))$. For the non-negative orthant, this becomes
\begin{equation}
    \Theta^* = \max\left\{0, \Theta^* - \eta\nabla \mathcal{R}_\lambda(\Theta^*)\right\}.
\end{equation}
On the interior $\{\Theta^* > 0\}$, equality requires $\nabla \mathcal{R}_\lambda(\Theta^*) = 0$. On the boundary $\{\Theta^* = 0\}$, the condition reduces to $0 = \max\{0, -\eta\nabla \mathcal{R}_\lambda(\Theta^*)\}$, which holds if and only if $\nabla \mathcal{R}_\lambda(\Theta^*) \ge 0$. These are precisely the KKT conditions.
\end{proof}

\subsubsection{Regularity of the Objective}

The convergence analysis requires two properties of the regularized objective: coercivity, which ensures that iterates remain bounded, and smoothness, which enables a descent inequality.

\begin{lemma}[Coercivity]
\label{lem:coercivity}
For any $\lambda > 0$, the sublevel sets of $\mathcal{R}_\lambda$ are bounded: if $\mathcal{R}_\lambda(\Theta) \le c$, then $\|\Theta\|_F \le \sqrt{2c/\lambda}$.
\end{lemma}

\begin{proof}
Since $\mathcal{R}(\Theta) \ge 0$, we have $\mathcal{R}_\lambda(\Theta) \ge \frac{\lambda}{2}\|\Theta\|_F^2$. The bound follows by rearrangement.
\end{proof}

Coercivity is essential: without regularization, the scaling symmetry of ReLU networks ($W_\ell \mapsto \alpha W_\ell$, $W_{\ell+1} \mapsto \alpha^{-1}W_{\ell+1}$) renders the sublevel sets unbounded, and iterates could escape to infinity.

\begin{lemma}[Lipschitz Gradient]
\label{lem:lipschitz}
Under \Cref{asm:training_dynamics}, for any bounded set $\mathcal{B} \subseteq \mathcal{C}$, there exists $L_{\mathcal{B}} > 0$ such that
\begin{equation}
    \|\nabla \mathcal{R}_\lambda(\Theta_1) - \nabla \mathcal{R}_\lambda(\Theta_2)\| \le L_{\mathcal{B}}\|\Theta_1 - \Theta_2\| \quad \text{for all } \Theta_1, \Theta_2 \in \mathcal{B}.
\end{equation}
In particular, $\nabla \mathcal{R}_\lambda$ is Lipschitz continuous on the sublevel set $\{\Theta \in \mathcal{C} : \mathcal{R}_\lambda(\Theta) \le \mathcal{R}_\lambda(\Theta^{(0)})\}$.
\end{lemma}

\begin{proof}
Within the constraint set $\mathcal{C}$, ReLU activations act as the identity. Since the adjacency matrix $A_G \ge 0$ and $W_\ell = A_\ell \otimes I + B_\ell \otimes J \ge 0$ for $(A_\ell, B_\ell) \in \mathcal{C}$, induction shows $h_{\ell-1} W_\ell h_{\ell-1}^\top \ge 0$ at every layer, so $\mathrm{ReLU}$ is the identity throughout the forward pass. The transformer output is therefore a polynomial in $\Theta$, the loss function is smooth (with $\phi_\epsilon \ge \epsilon > 0$ keeping the logarithm away from zero), and the regularization is quadratic. The composition is smooth on $\mathcal{C}$, and smooth functions have Lipschitz gradients on bounded sets. The second statement follows from \Cref{lem:coercivity}, which ensures the sublevel set is bounded.
\end{proof}

\subsubsection{Sufficient Decrease}

The core of the analysis is a progress bound showing that each projected gradient step decreases the objective by an amount proportional to the squared gradient mapping norm.

\begin{lemma}[Sufficient Decrease]
\label{lem:sufficient_decrease}
For step size $\eta \le 1/L$, the iterates satisfy
\begin{equation}
    \mathcal{R}_\lambda(\Theta^{(k+1)}) \le \mathcal{R}_\lambda(\Theta^{(k)}) - \frac{\eta}{2}\|G_\eta(\Theta^{(k)})\|^2.
\end{equation}
\end{lemma}

\begin{proof}
Let $\Theta = \Theta^{(k)}$, $\Theta^+ = \Theta^{(k+1)}$, $g = \nabla \mathcal{R}_\lambda(\Theta)$, and $G = G_\eta(\Theta)$. The descent lemma for $L$-smooth functions gives
\begin{equation}
    \mathcal{R}_\lambda(\Theta^+) \le \mathcal{R}_\lambda(\Theta) + \langle g, \Theta^+ - \Theta \rangle + \frac{L}{2}\|\Theta^+ - \Theta\|^2.
\end{equation}
Since $\Theta^+ - \Theta = -\eta G$, this becomes
\begin{equation}
\label{eq:descent_intermediate}
    \mathcal{R}_\lambda(\Theta^+) \le \mathcal{R}_\lambda(\Theta) - \eta\langle g, G \rangle + \frac{L\eta^2}{2}\|G\|^2.
\end{equation}

It remains to show that $\langle g, G \rangle \ge \|G\|^2$. The projection $\Theta^+ = \Pi_{\mathcal{C}}(\Theta - \eta g)$ satisfies the first-order optimality condition: for all $z \in \mathcal{C}$,
\begin{equation}
    \langle (\Theta - \eta g) - \Theta^+, z - \Theta^+ \rangle \le 0.
\end{equation}
Taking $z = \Theta \in \mathcal{C}$ and using $\Theta - \Theta^+ = \eta G$ yields $\langle \eta G - \eta g, \eta G \rangle \le 0$, which simplifies to $\langle g, G \rangle \ge \|G\|^2$. Substituting into \eqref{eq:descent_intermediate} and using $\eta \le 1/L$ completes the proof.
\end{proof}

\subsubsection{Proof of \Cref{thm:convergence_appendix}}

\begin{proof}
We establish each component of the theorem in turn.

\paragraph{Bounded iterates.}
The sufficient decrease property (\Cref{lem:sufficient_decrease}) implies that $\mathcal{R}_\lambda(\Theta^{(k)})$ is non-increasing, so all iterates lie in the initial sublevel set. By \Cref{lem:coercivity}, this set is bounded: $\|\Theta^{(k)}\|_F \le \sqrt{2\mathcal{R}_\lambda(\Theta^{(0)})/\lambda}$ for all $k$.

\paragraph{Non-asymptotic rate.}
Rearranging \Cref{lem:sufficient_decrease} and summing from $k = 0$ to $K-1$:
\begin{equation}
    \sum_{k=0}^{K-1} \|G_\eta(\Theta^{(k)})\|^2 \le \frac{2}{\eta}\sum_{k=0}^{K-1}\left[\mathcal{R}_\lambda(\Theta^{(k)}) - \mathcal{R}_\lambda(\Theta^{(k+1)})\right] = \frac{2}{\eta}\left[\mathcal{R}_\lambda(\Theta^{(0)}) - \mathcal{R}_\lambda(\Theta^{(K)})\right].
\end{equation}
The right-hand side telescopes. Since $\mathcal{R}_\lambda \ge 0$, we obtain
\begin{equation}
    \sum_{k=0}^{K-1} \|G_\eta(\Theta^{(k)})\|^2 \le \frac{2\mathcal{R}_\lambda(\Theta^{(0)})}{\eta}.
\end{equation}
The minimum of $K$ non-negative terms is at most their average:
\begin{equation}
    \min_{k=0,\ldots,K-1} \|G_\eta(\Theta^{(k)})\|^2 \le \frac{2\mathcal{R}_\lambda(\Theta^{(0)})}{\eta K}.
\end{equation}
Thus, within $K$ iterations, at least one iterate achieves $\|G_\eta(\Theta^{(k)})\|^2 \le \epsilon$ for $\epsilon = O(1/K)$.

\paragraph{Asymptotic convergence.}
Taking $K \to \infty$, the bound $\sum_{k=0}^\infty \|G_\eta(\Theta^{(k)})\|^2 < \infty$ implies $\|G_\eta(\Theta^{(k)})\| \to 0$.


\paragraph{Limit points are KKT.}
Boundedness of the iterates follows from \Cref{lem:coercivity}. That every limit point of projected gradient descent on a smooth function over a closed convex set satisfies the first-order stationarity condition is a standard result; see, e.g., \citet{Bertsekas01031997} (Proposition 2.3.2) or \citet{Beck2017} (Theorem 10.15). The required smoothness holds by \Cref{lem:lipschitz}. Restricting to the $B_\ell$ components and expanding $\nabla \mathcal{R}_\lambda = \nabla \mathcal{R} + \lambda\Theta$ yields \eqref{eq:kkt_main}.
\end{proof}

\begin{remark}[Sufficiency for Main Results]
\Cref{thm:convergence_appendix} establishes convergence to KKT points in the sense of limit points; it does not rule out the possibility that the sequence oscillates between multiple KKT points. Establishing convergence to a unique limit requires additional structure, such as the Kurdyka-\L{}ojasiewicz property \citep{Attouch2013ConvergenceOD}. However, for our purposes, convergence to limit points suffices: \Cref{thm:training-suppresses-J-debug} shows that \emph{any} KKT point satisfying our assumptions has $B_\ell^* = 0$, so the heuristic channel is suppressed regardless of which limit point is approached.
\end{remark}

\subsubsection{Connecting Regularized Convergence to Heuristic Suppression}
\label{app:connecting-regularized}

\Cref{thm:convergence} establishes that projected gradient descent on the regularized objective $\mathcal{R}_\lambda(\Theta) = \mathcal{R}(\Theta) + \frac{\lambda}{2}\|\Theta\|_F^2$ converges to stationary points satisfying the KKT conditions. To complete the picture, we must link this convergence guarantee back to the gradient properties derived in \Cref{thm:training-suppresses-J-debug}, which analyzed the unregularized population risk $\mathcal{R}(\Theta)$.

We now demonstrate that the regularization term $\frac{\lambda}{2}\|\Theta\|_F^2$ works in tandem with the data-driven suppression mechanism. Specifically, for any nonnegative direction $\Delta \geq 0$ in the $J_n$-channel, the directional derivative of the regularized objective satisfies:
\begin{equation}
    D_{B_\ell}\mathcal{R}_\lambda(\Theta)[\Delta] = \underbrace{D_{B_\ell}\mathcal{R}(\Theta)[\Delta]}_{\text{Population Risk Gradient}} + \underbrace{\lambda \langle B_\ell, \Delta \rangle_F}_{\text{Regularization Penalty}}.
\end{equation}
When the cross-component penalty dominates the within-component reward (as characterized in \Cref{thm:training-suppresses-J-debug}), the population risk gradient is already nonnegative ($D_{B_\ell}\mathcal{R}(\Theta)[\Delta] \geq 0$). Since $\lambda > 0$ and $B_\ell \geq 0$, the regularization term is also nonnegative, and strictly positive whenever $B_\ell \neq 0$. Consequently, the regularization only strengthens the inequality, ensuring that the heuristic channel is suppressed at any stationary point.

\begin{corollary}[Regularized KKT Points Suppress the Heuristic Channel]
\label{cor:regularized-suppression}
Assume the setting of \Cref{asm:training_dynamics}. Let $\lambda > 0$ and let $\Theta_\lambda^* = (A_\ell^*, B_\ell^*)_{\ell=1}^L$ be a KKT point of the regularized objective $\mathcal{R}_\lambda(\Theta)$ over the constraint set $\mathcal{C}$. Fix a layer $\ell \in \{1, \ldots, L\}$ and assume the Population-Level Dominance Condition (cf. \Cref{eq:B_ell_dynamics}) holds at $\Theta_\lambda^*$:
\begin{equation}
    \mathbb{E}\left[ \sum_{R_{ij}=0} D_{ij} \right] > \mathbb{E}\left[ \sum_{R_{ij}=1} \frac{1 - \phi_\epsilon(Z_{ij})}{\phi_\epsilon(Z_{ij})} D_{ij} \right],
\end{equation}
where $D = \frac{\partial Z}{\partial B_\ell}[B_\ell^*]$ is the Jacobian along the direction of the learned weights $B_\ell^*$. Then the heuristic channel is fully suppressed: $B_\ell^* = 0$.
\end{corollary}

\begin{proof}
We proceed by contradiction. Suppose $B_\ell^* \neq 0$.

Since $\Theta_\lambda^*$ is a KKT point of $\mathcal{R}_\lambda$ over $\mathcal{C}$, the first-order optimality conditions for the non-negative parameter $B_\ell^*$ require
\begin{equation}
    \left(\nabla_{B_\ell} \mathcal{R}(\Theta_\lambda^*) + \lambda B_\ell^*\right) \odot B_\ell^* = 0.
\end{equation}
Summing over all entries (taking the Frobenius inner product with $B_\ell^*$), we obtain:
\begin{equation}
    \langle \nabla_{B_\ell} \mathcal{R}(\Theta_\lambda^*), B_\ell^* \rangle_F + \lambda \|B_\ell^*\|_F^2 = 0. \label{eq:kkt-sum}
\end{equation}

We now analyze the population gradient term $\langle \nabla_{B_\ell} \mathcal{R}(\Theta_\lambda^*), B_\ell^* \rangle_F$. By \Cref{thm:training-suppresses-J-debug}, this term decomposes into the difference between the expected penalty on disconnected graphs and the expected reward on connected graphs. By the Population-Level Dominance Condition assumed in the Corollary statement, the expected penalty strictly exceeds the expected reward. Therefore, the unregularized gradient contribution is strictly positive:
\begin{equation}
    \langle \nabla_{B_\ell} \mathcal{R}(\Theta_\lambda^*), B_\ell^* \rangle_F > 0.
\end{equation}
Intuitively, this means the data distribution itself is pushing the weights $B_\ell^*$ toward zero.

Substituting this into \eqref{eq:kkt-sum}, we arrive at a contradiction:
\begin{equation}
    \underbrace{\langle \nabla_{B_\ell} \mathcal{R}(\Theta_\lambda^*), B_\ell^* \rangle_F}_{> 0} + \underbrace{\lambda \|B_\ell^*\|_F^2}_{> 0} = 0.
\end{equation}
Both terms on the left-hand side are strictly positive (the second term because $\lambda > 0$ and we assumed $B_\ell^* \neq 0$). Their sum cannot be zero. Thus, we must have $B_\ell^* = 0$.
\end{proof}

\begin{remark}[Role of Regularization]
\label{rem:regularization-role}
The regularization parameter $\lambda > 0$ plays a dual role. First, it ensures coercivity (\Cref{lem:coercivity}), which is necessary to prove the existence of limit points for the training dynamics in \Cref{thm:convergence}. Second, as shown in \Cref{cor:regularized-suppression}, it acts as a strict enforcer of suppression: even if the data-driven gradient were merely zero (a ``tie" between penalty and reward), the regularization force $\lambda B_\ell^*$ would still drive the weights to zero via the stationarity condition.
\end{remark}

\begin{corollary}[Convergence to Algorithmic Solutions]
\label{cor:convergence-algorithmic}
Under \Cref{asm:training_dynamics}, let $\{\Theta^{(k)}\}_{k \geq 0}$ be the sequence generated by projected gradient descent on $\mathcal{R}_\lambda$ with step size $\eta \leq 1/L$ and initialization $\Theta^{(0)} \in \mathcal{C}$. Suppose that at every limit point $\Theta^*$, the Population-Level Dominance Condition from \Cref{cor:regularized-suppression} holds for all layers $\ell$.

Then every limit point satisfies $B_\ell^* = 0$ for all $\ell$, and consequently $W_\ell^* = A_\ell^* \otimes I_n$. The model at any limit point implements the algorithmic $I_n$-channel exclusively and reaches its theoretical capacity of $3^L$.
\end{corollary}

\begin{proof}
\Cref{thm:convergence} guarantees that every limit point $\Theta_\lambda^*$ satisfies the KKT conditions for $\mathcal{R}_\lambda$ over $\mathcal{C}$. Applying \Cref{cor:regularized-suppression} to each layer yields $B_\ell^* = 0$. With $B_\ell^*=0$, the heuristic channel is eliminated. The capacity statement then follows from \Cref{lem:A-B-function}, which establishes that when $B_1 = \cdots = B_L = 0$, the model output satisfies $Z_{ij} > 0$ only if $[A^{3^L}]_{ij} > 0$, achieving the tight capacity bound of \Cref{thm:capacity}.
\end{proof}
\clearpage
\section{Additional Experiment Details and Results} \label{app:experiments}

\subsection{Experiment Details}
\paragraph{Standard Transformers. } When training 2-layer standard Transformers, we adopt the implementation from RoBERTa \citep{liu2019roberta} with single-head per-layer and using normalized ReLU activation function as defined in \Cref{def:roberta_full}. We use a hidden dimension of $d = 512$ to make sure the hidden size is not the blocker for expressivity. We trained on 1 Billion $\ER$ graphs with a batch size of 1000 and $10^6$ steps. Each graph is only seen by the model once to resembling the training regime of modern LLMs. We note that although 1 billion graphs sounds a lot but with $n=20$ nodes, this is far from enumerating all possible graphs: there can be $2^{\binom{n}{2}}$ graphs if we don't consider graph isomorphism. When $n = 20$, this is about more than $10^{57}$ graphs in total, and 1 billion ($10^9$) is only a very small number of training instances. We train with AdamW optimizer with a learning rate of \texttt{1e-4} and weight decay of \texttt{1e-4} and a cosine learning rate decay.

\textbf{Disentangled Transformers. } For 1-layer Disentangled Transformers in \Cref{sec:exp}, we train on a fixed set 4096 i.i.d. samples of $\ER(n=8)$ graphs and running standard \textit{Gradient Descent} without any mini-batching. In this case, we have a learning rate of $0.1$ with cosine learning rate decay. For 2-layer Disentangled Transformers, we train on the same set of 1 billion number of $\ER(n=20)$ graphs as with standard Transformers. For 3-layer models, we train on 1 billion number of $\ER(n=64)$ graphs. Both 2- and 3-layer models are trained with AdamW with a learning rate of \texttt{1e-3}. We would like to note that the hidden dimensions $d_\ell$ of Disentangled Transformerss are fixed to be $d_\ell = 2^\ell n$ rather than a hyper-parameter (see \Cref{def:disentangle}).

\paragraph{Computing Energy Share of $I_n$/$J_n$ Channels.~~} In the experiments on 1-Layer Disentangled Transformers, we compute energy shares of the $A \otimes I_n$ and $B \otimes J_n$ within $\|W\|_F^2$. Here is the formalized versions. We consider the noisy decomposition $W=\hat A\otimes I_n+\hat B\otimes J_n+W_{\epsilon}$, where $W_\epsilon$ is the projection error term.  We define Frobenius-norm energy share on the $I_n$ channel as
\[
\mathsf{EnergyShare}(\hat{A} \otimes I_n, W) = \frac{\langle W,\,\hat A\otimes I_n\rangle}{\|W\|_F^2} = \frac{\|\hat A\otimes I_n\|_F^2 + \langle
\hat A\otimes I_n, \hat B\otimes J_n\rangle + \langle\hat A\otimes I_n, W_\epsilon\rangle}{\|W\|_F^2},
\]
and by symmetry, the \(J_n\)-channel share is
\[
\mathsf{EnergyShare}(\hat B\otimes J_n, W)
= \frac{\langle W,\,\hat B\otimes J_n\rangle}{\|W\|_F^2}
= \frac{\|\hat B\otimes J_n\|_F^2
+ \langle \hat B\otimes J_n,\hat A\otimes I_n\rangle
+ \langle \hat B\otimes J_n, W_\epsilon\rangle}{\|W\|_F^2}.
\]
This is a well-designed quantity because if you expand \(\|W\|_F^2\) you obtain $\langle W, \hat{A} \otimes I_n + \hat{B} \otimes J_n + W_\epsilon\rangle$, and the $I$/$J$-channels' energy shares will sum to one when the projection error $W_\epsilon$ converges to zero.

\subsection{Additional Experiments on Disentangled and Standard Transformers}
In \Cref{fig:dt_n64_model_behavior}, we show the training dynamics of a 3-Layer Disentangled Transformer. In \Cref{fig:dt_weights}, we show the learned weights by Disentangled Transformers. 
\begin{figure}[h]
    \centering
    \includegraphics[width=0.32\linewidth]{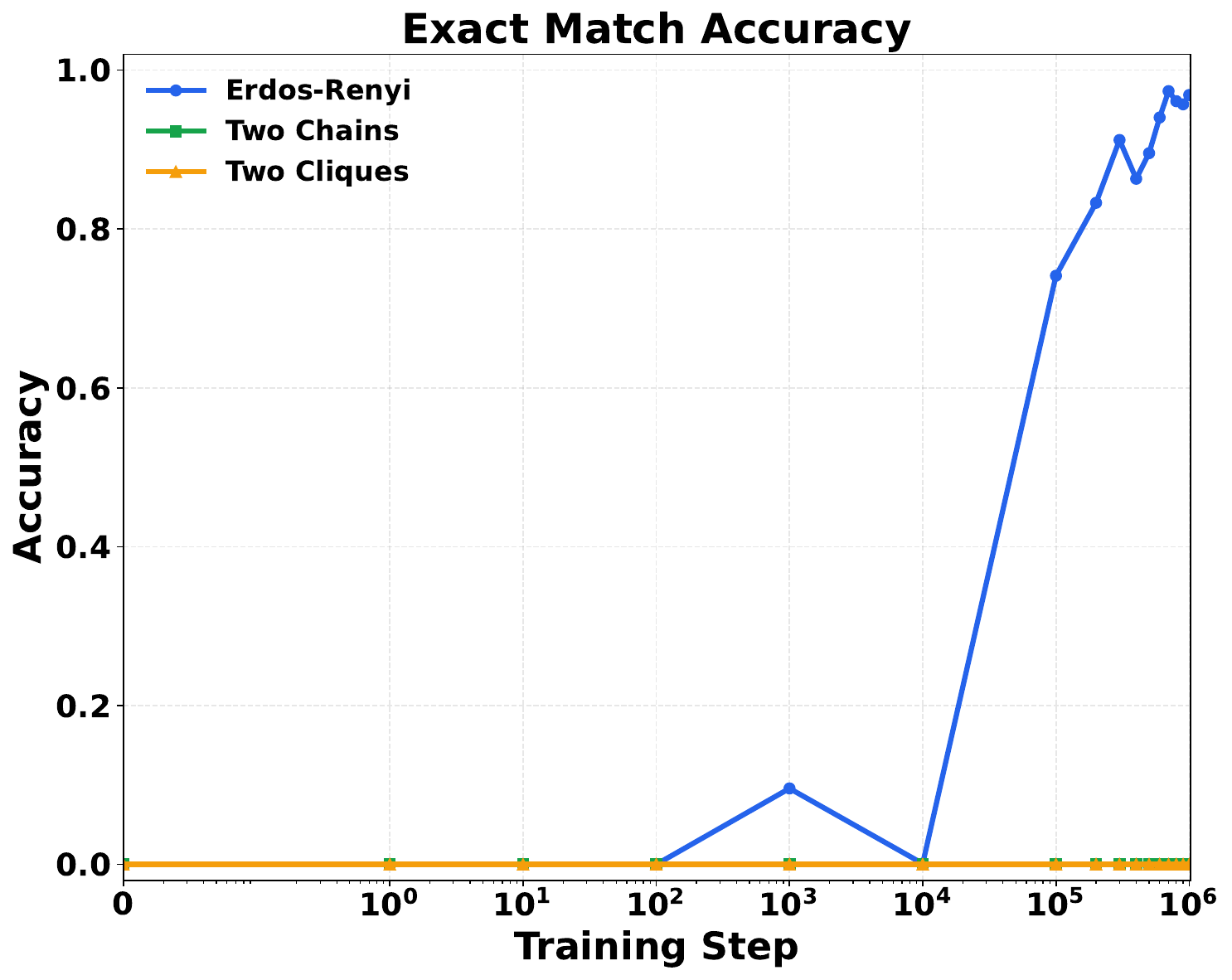}
    \includegraphics[width=0.32\linewidth]{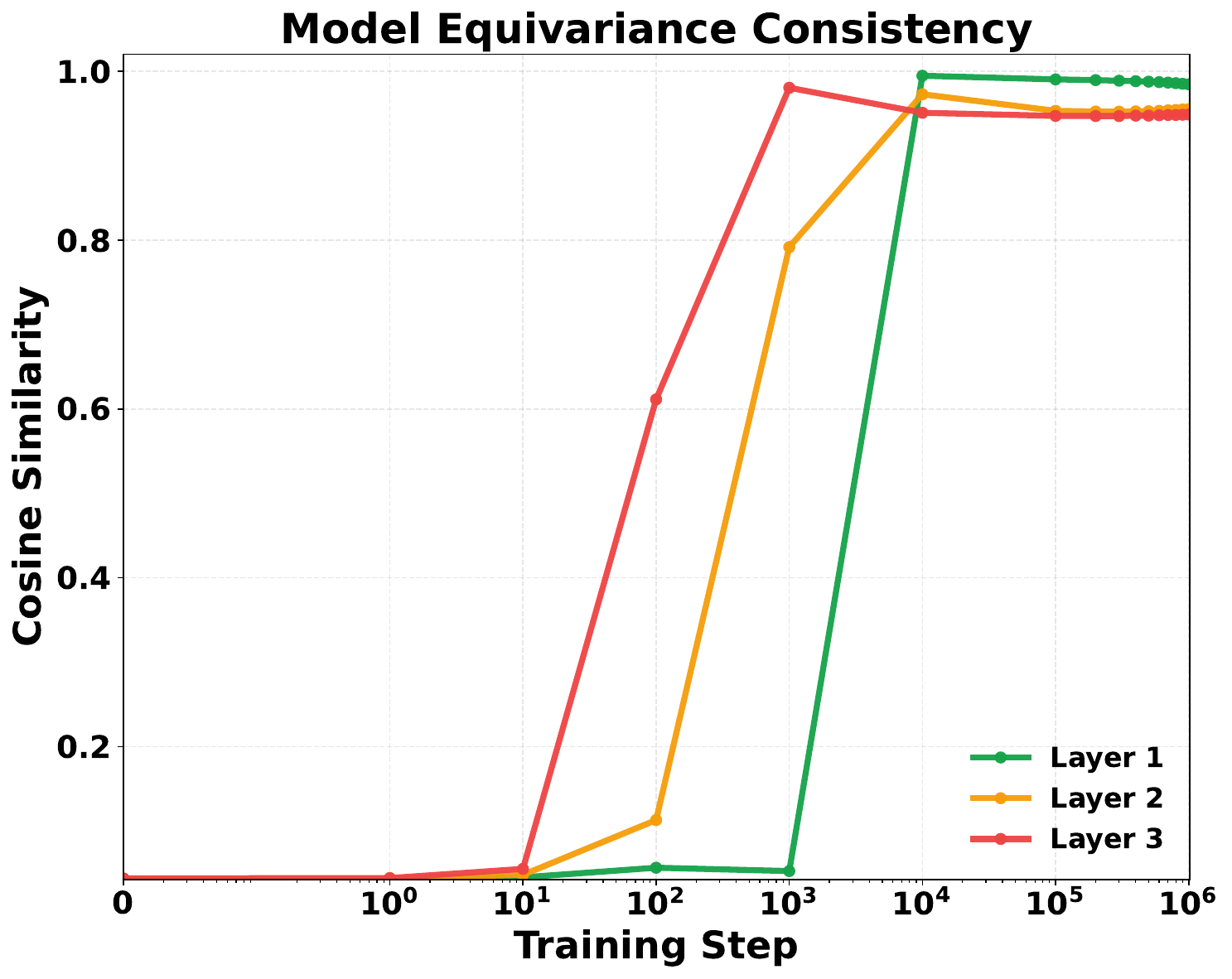}
    \includegraphics[width=0.32\linewidth]{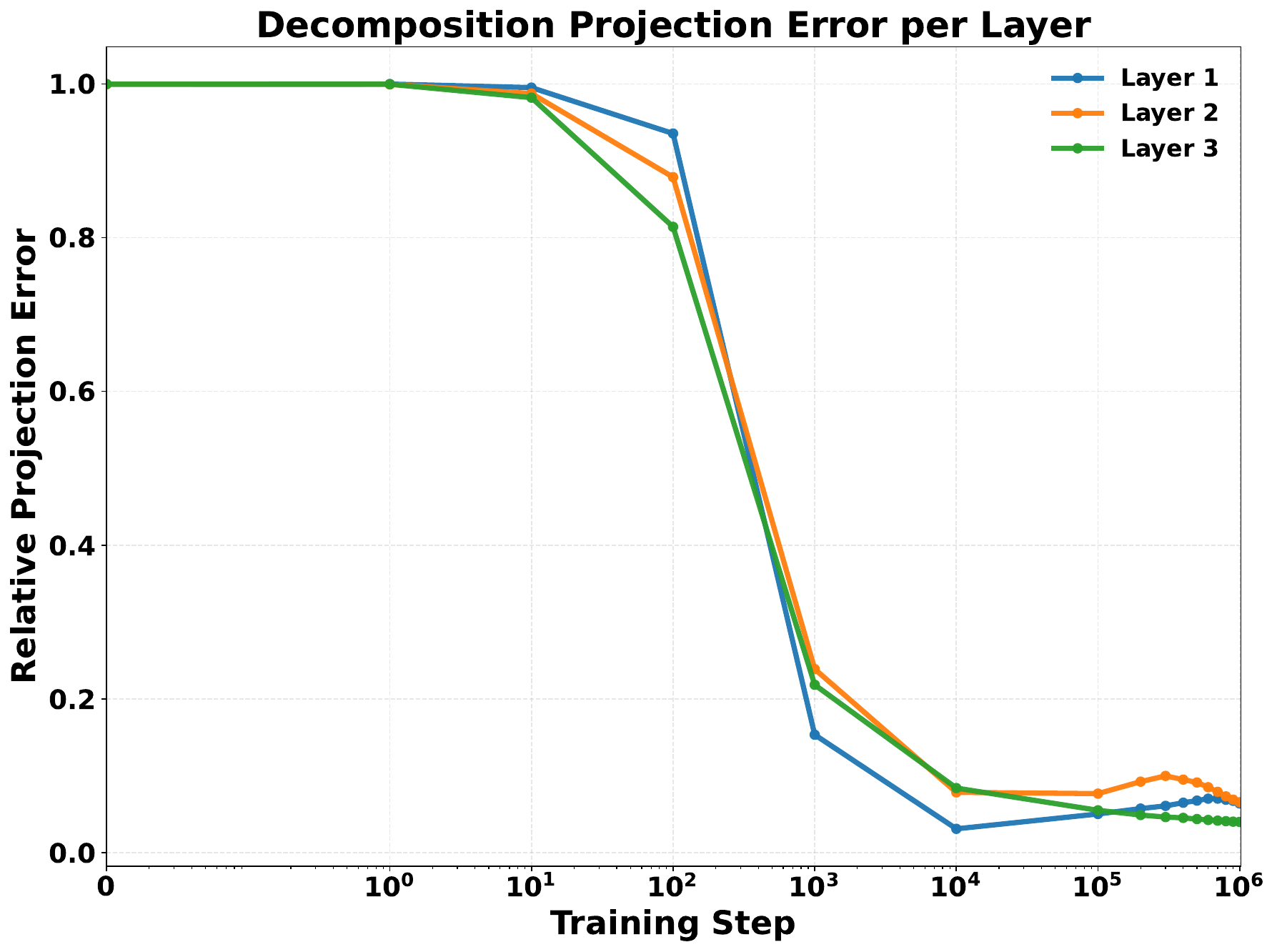}
    \caption{We plot the model behavior of a 3-Layer Disentangled Transformer model trained on $\ER(n=64)$ graphs. They also quickly pick up almost \textit{layer-wise equivariant} properties (measured by Eqn. \ref{eqn:metric_equivariance}). All layers show very small projection error onto the $A \otimes I_n + B \otimes J_n$ decomposition, resonating our theoretical claims in \Cref{thm:w_decomposition}.}
    \label{fig:dt_n64_model_behavior}
\end{figure}

In \Cref{fig:dt_weights}, we show that the trained 2-layer and 3-layer converge to weight spaces $W_\ell = A_\ell \otimes I_n + B_\ell \otimes J_n$ in the particular form echoing \Cref{thm:w_decomposition}. 

\begin{figure}[!h]
    \centering
    \includegraphics[width=0.4\linewidth]{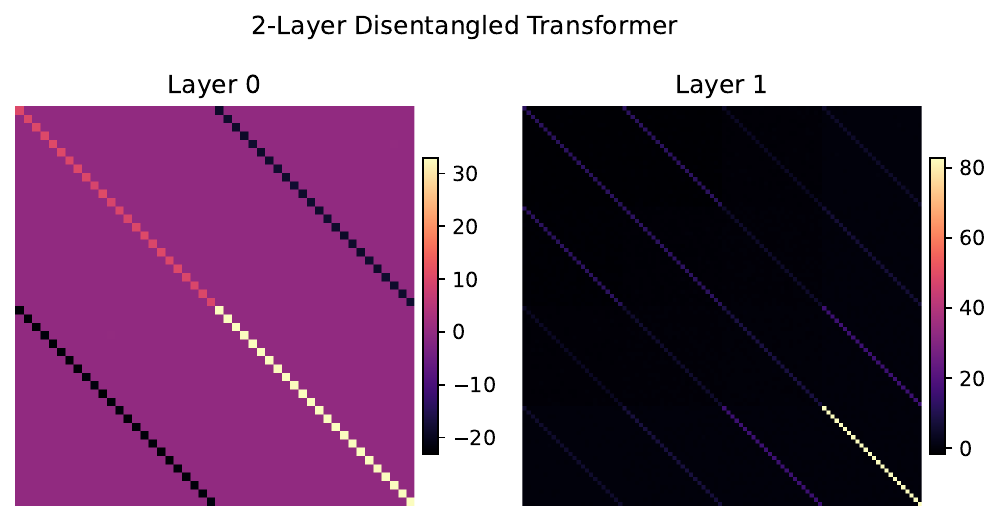}
    \includegraphics[width=0.64\linewidth]{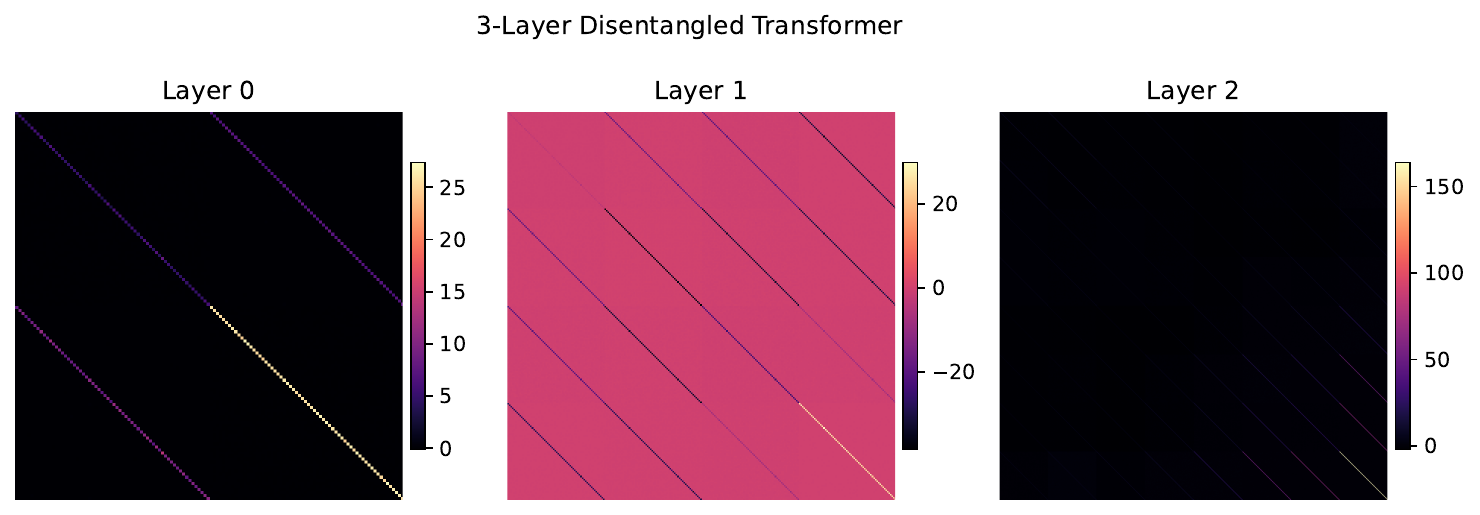}
    \caption{Here we visualize the weights $W_\ell$ learned by a 2-Layer and 3-Layer Disentangled Transformer respectively. All models are randomly initialized \textbf{without} any restriction on parameterization. Resonating \Cref{thm:w_decomposition}, they all converge to a form of $W_\ell = A_\ell \otimes I_n  + B_\ell \otimes J_n$.}
    \label{fig:dt_weights}
\end{figure}

In \Cref{fig:capacity_roberta_2layer}, we show that the capacity theorems (\Cref{thm:capacity}) also transfer to standard 2-layer Transformer models. 

\begin{figure}[!h]
    \centering
    \includegraphics[width=0.5\linewidth]{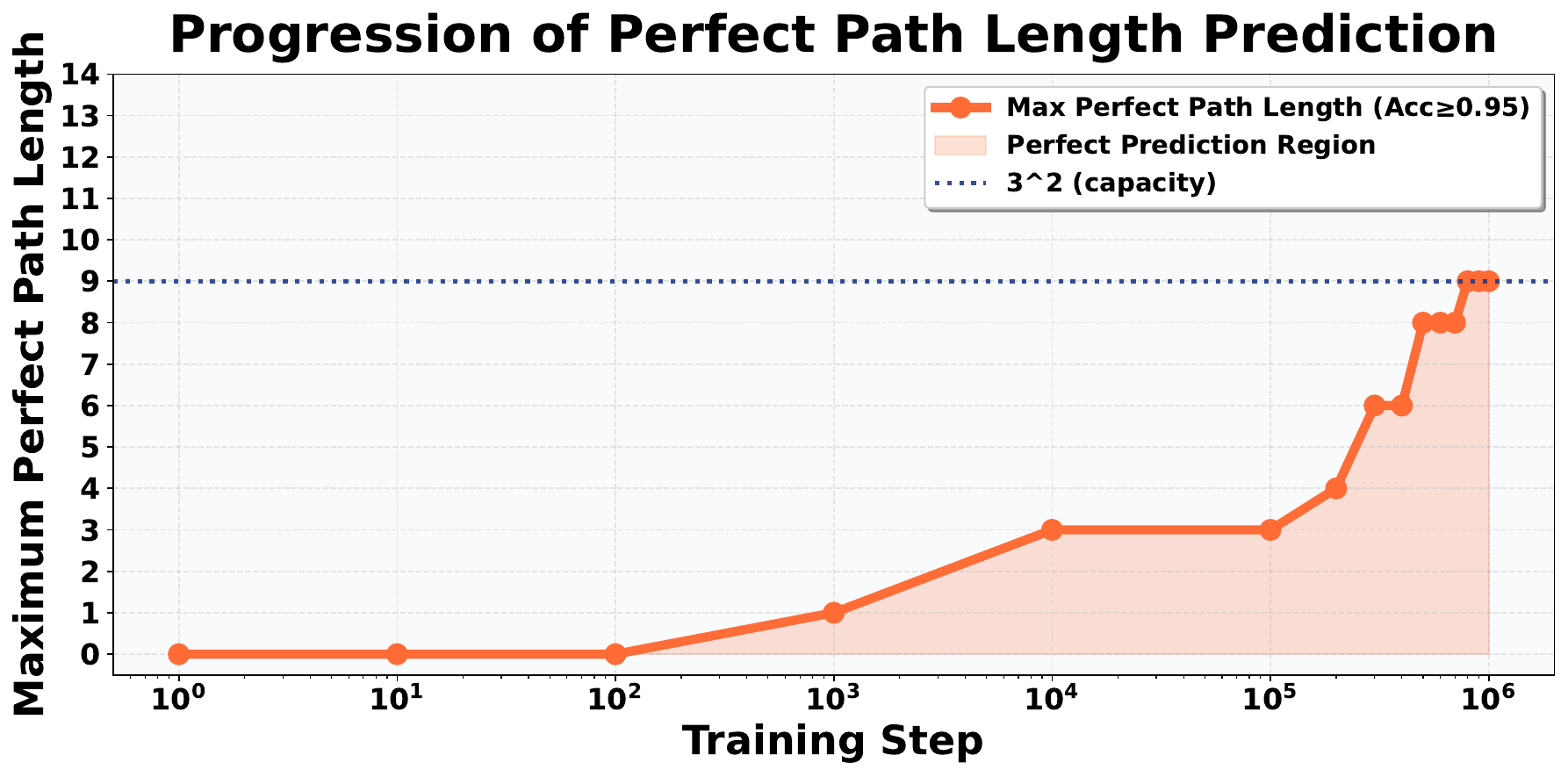}
    \includegraphics[width=0.25\linewidth]{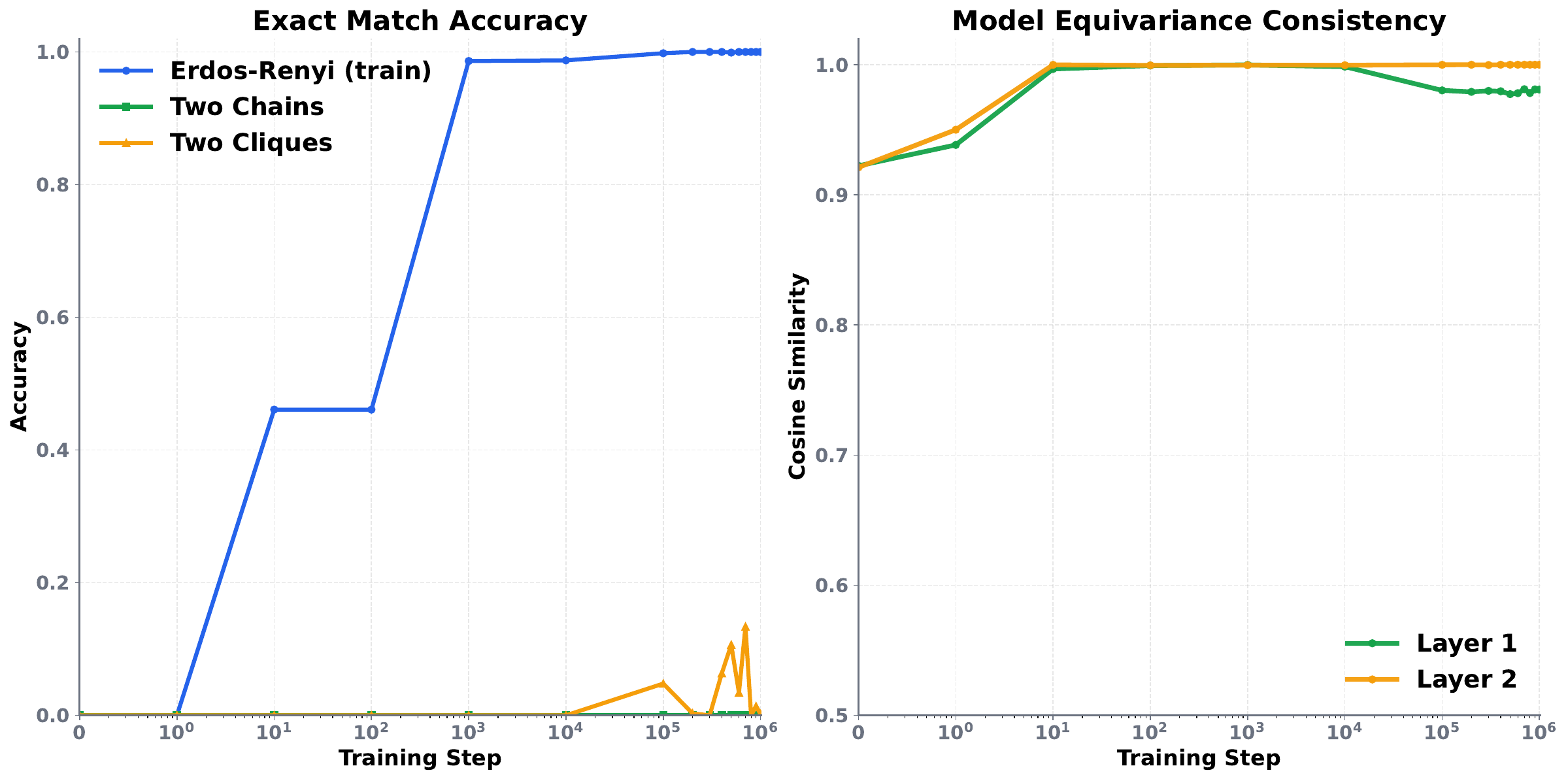}
    \caption{(\textbf{left}) Standard Transformers models studied in \S\ref{ssec:prelim} also hit its capacity wall at $3^L$, showing that our theoretical results transfer beyond the theoretical simplification of Disentangled Transformers. (\textbf{right}) Standard Transformer models also learn an almost layer-wise equivariant solution measured by Eqn. \ref{eqn:metric_equivariance}.}
    \label{fig:capacity_roberta_2layer}
    \vsn
\end{figure}

In \Cref{fig:roberta_restrict_diam_2clique},
we show that when evaluated in $\TClique$ dataset, the one trained with the right data generalize better. 

\begin{figure}[!h]
  \centering
  \includegraphics[width=0.5\linewidth]{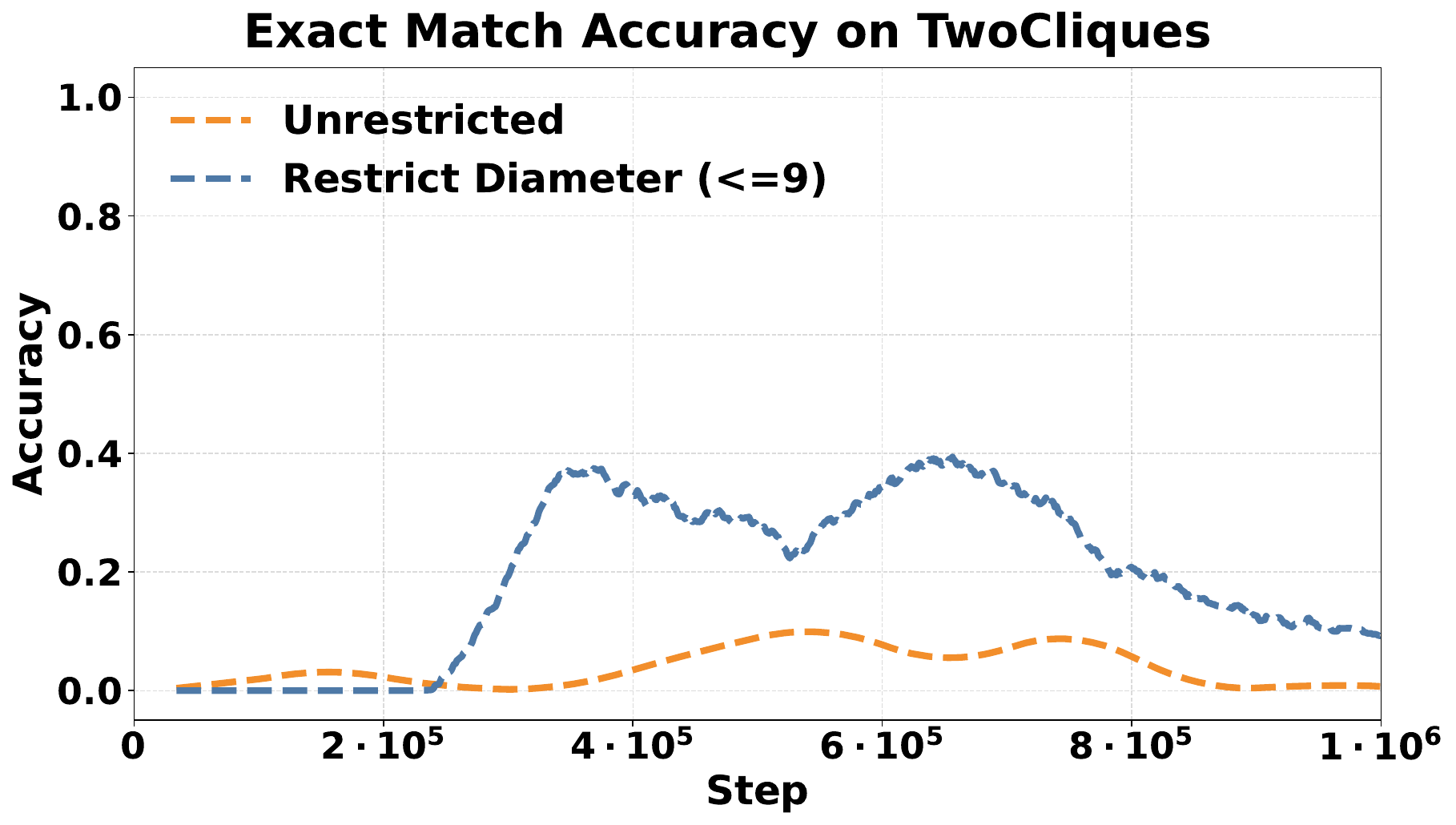}
  \caption{Under the same setup as Fig. \ref{fig:roberta_restrict_diam}, when tested on $\TClique$ graphs, the one trained with \textit{the right data} is able to generalize better.}
 \label{fig:roberta_restrict_diam_2clique}
\end{figure}

\clearpage
\subsection{Scaling Effects of Diameter and Capacity}
\ifpreprint
\else
\begin{figure}[h]
    \centering
    \includegraphics[width=\linewidth]{figures/restrict_diam/accuracy_comparison_exact_multiconfig.pdf}
    \caption{With 1-layer Disentangled Transformers with capacity $\mathsf{Cap} = 3$ following \Cref{thm:capacity}, we vary $d$ such that we restrict our training graphs to have $\diam(G) \leq d$. We also vary the edge probability of our training distribution $\ER(n=8, p = \cdot)$ for generality. We test on $\TChain(n=8, k=\cdot)$ graphs with $k=2$ or $3$ and show the exact match accuracy on configurations where the accuracy is non-zero for readability. We find if the training $d \leq \mathsf{Cap}$, models still learns the algorithmic solution up to problem size $d$ (see $d=2, k=2$ case on the left in {\color{orange}orange}) but \emph{fails to length generalize} (see $d=2, k=3$ in {\color{orange}orange} on the right). On the other hand, if the training $d > \mathsf{Cap}$, model struggles to learn the algorithmic solution (see $d=4$ cases in {\color{red}red} on both $k = 2$ or $3$). The best case overall is when setting $d = \mathsf{Cap} $, i.e., preventing the model from seeing beyond-capacity samples but still preserving at-capacity samples for better generalization. As shown in the {\color{ForestGreen}green} lines, with $d=3$, model achieves balanced testing accuracy on both $k=2$ and $3$.}
    \label{fig:varying_diam}
\end{figure}
\fi

\begin{figure}[h]
    \centering
    \begin{subfigure}{0.66\linewidth}
        \includegraphics[width=\linewidth]{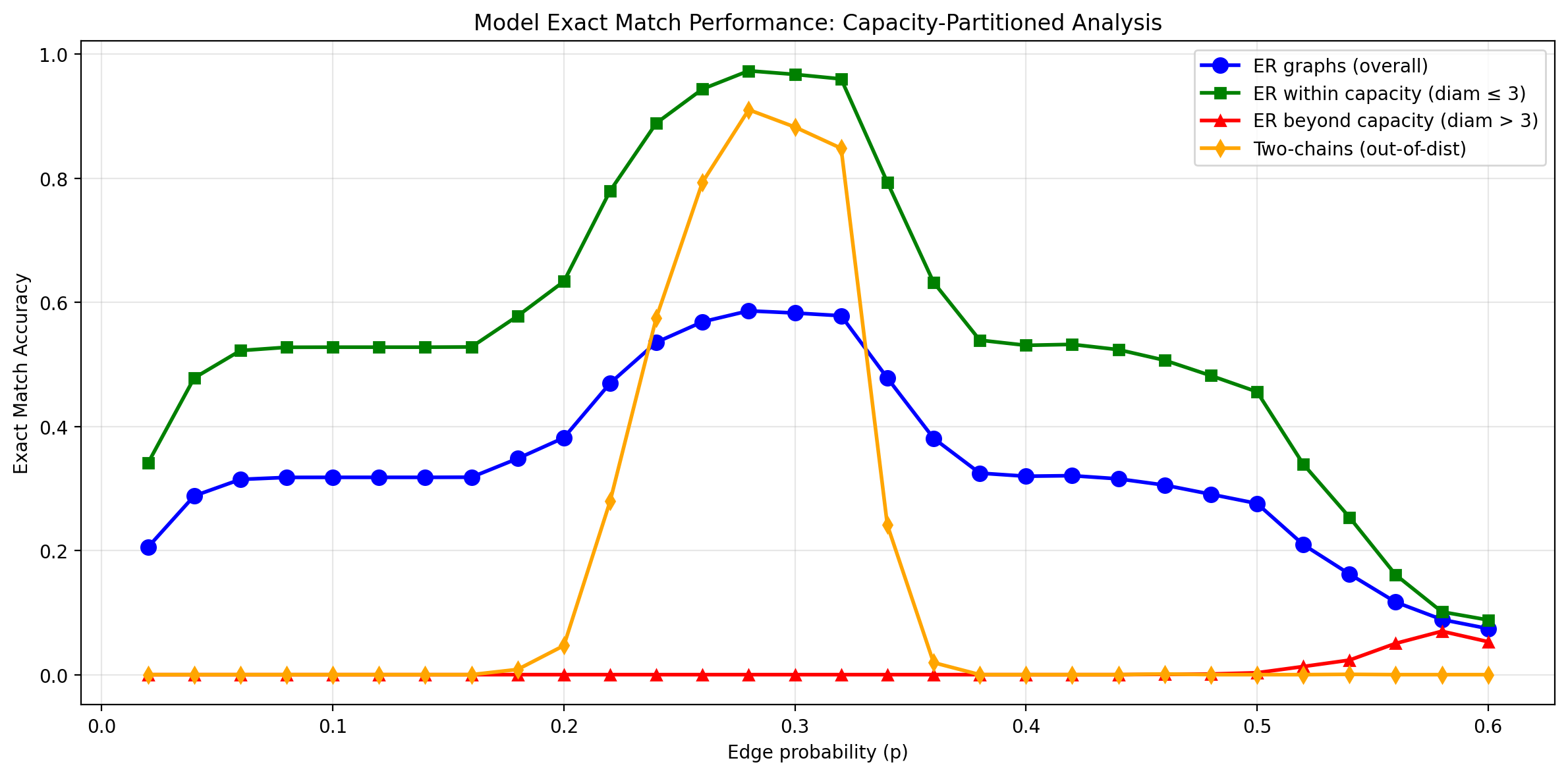}
        \caption{When training 1-layer Disentangled Transformers, instead of restricting training graphs to have diameter at most 3, we restrict {\color{purple}$\mathbf{\diam(G) \leq 2}$} and varying the edge probability in $\ER(n=8, p=p)$ training distribution. When measured by exact match accuracy, restricting $\diam(G) \leq 2$ make the models unable to generalize as well, indicating the importance of \textbf{at-capacity graphs} ($\diam(G) = 3$)}
    \end{subfigure}
    \begin{subfigure}{0.66\linewidth}
        \includegraphics[width=\linewidth]{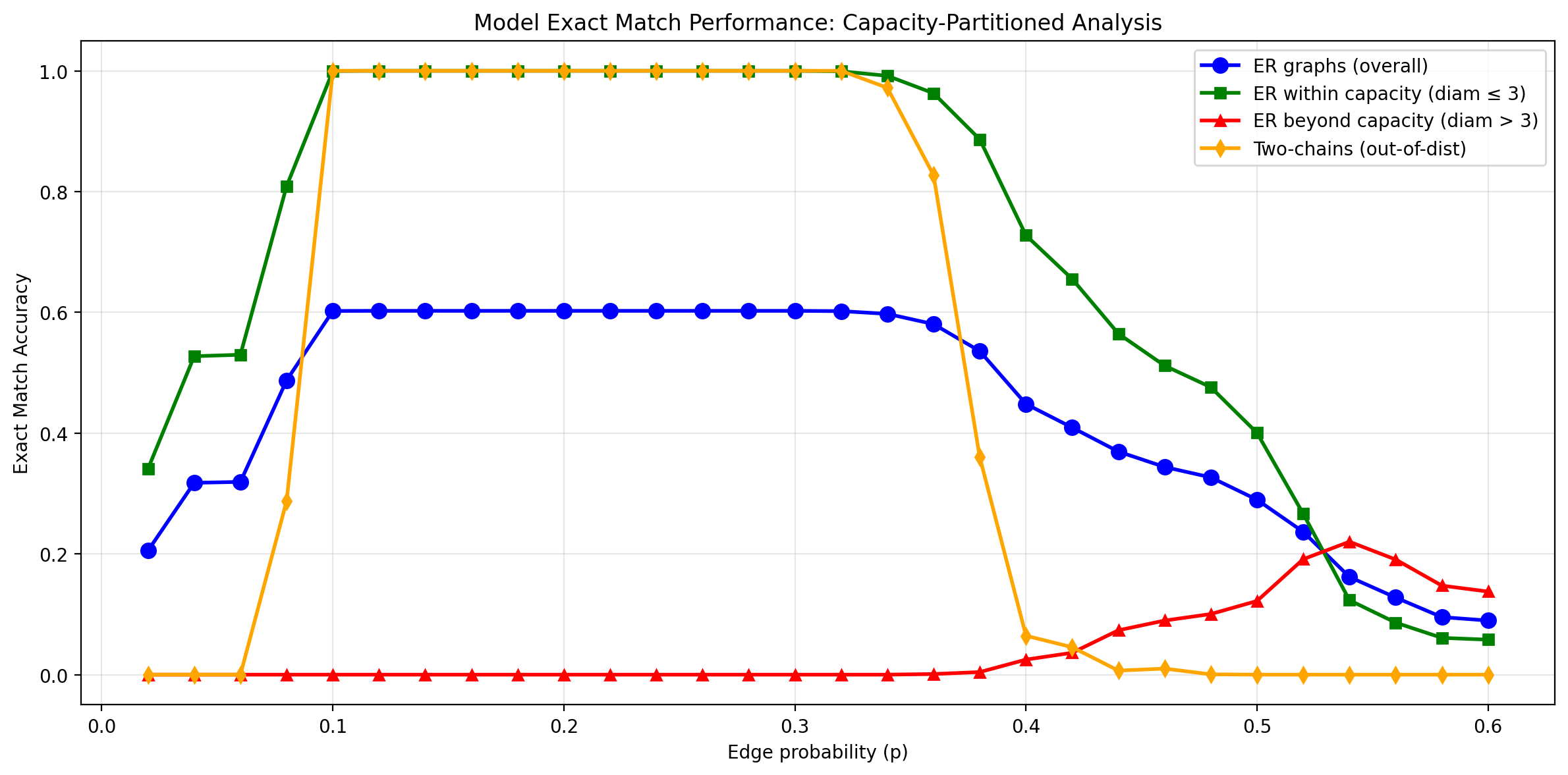}
        \caption{When restricting {\color{purple}$\mathbf{\diam(G) \leq 3}$}, with reasonable $p \in [0.1, 0.32]$, 1-layer Disentangled Transformer can learn the algorithmic channel.}
    \end{subfigure}
    \begin{subfigure}{0.66\linewidth}
        \includegraphics[width=\linewidth]{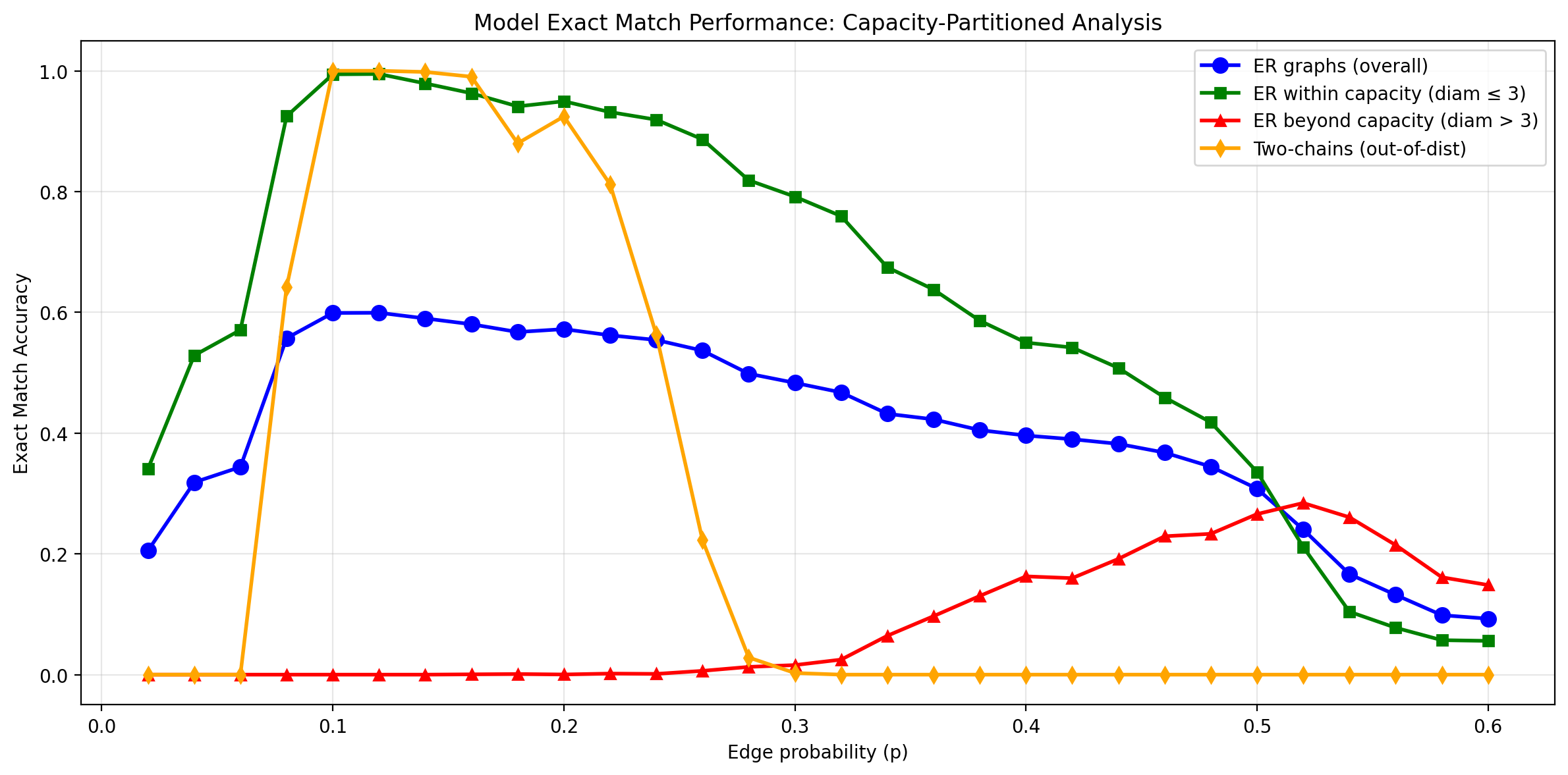}
        \caption{When restricting {\color{purple}$\mathbf{\diam(G) \leq 4}$}, allowing some beyond-capacity graphs, 1-layer Disentangled Transformer struggle to learn the algorithmic channel, and starts to rely on the heuristic $J_n$-channel to make predictions on beyond-capacity graphs (red lines).}
    \end{subfigure}
    \caption{Effects of \textbf{at-capacity graphs} ($\diam(G) = 3^L$) for $L=1$. Without at-capacity graphs, models struggle to learn the algorithmic solution. With beyond-capacity graphs, models weight too much on heuristics. In short, models not only need most graphs within capacity and but also require at-capacity graphs to learn algorithms over heuristics.}
    \label{fig:at-capacity}
\end{figure}
\section{Additional Related Work} \label{app:related_work}
\paragraph{Mechanistic Interpretability of Transformers.}
A growing body of work reverse-engineers the \emph{algorithmic circuits} that Transformers learn for tasks like copying, induction, and reasoning \citep{elhage2021framework,olsson2022induction,wang2022ioi,brinkmann2024mechanistic}. These can range from Fourier-style circuits for modular addition \citep{nanda2023grokking,zhou2024fourier} to Newton-like updates for in-context linear regression \citep{fu2024secondorder}. Researchers validate hypotheses by compiling programs into model weights \citep{lindner2023tracr}, decompiling models into code \citep{friedman2023ltp}, and using causal interventions to localize function \citep{chan2022causalscrubbing,meng2022rome,yao2024knowledgecircuits,chang2024localize}. Theoretical work on inductive biases, like a preference for low-sensitivity functions, helps explain why models often favor robust heuristics over exact algorithms \citep{vasudeva2025lowsensitivity}. 

\end{document}
